\newcommand{\blind}{1}
\def\spacingset#1{\renewcommand{\baselinestretch}%
{#1}\small\normalsize} \spacingset{1}
\newcommand{\IP}{{\rm I}\kern-0.18em{\rm P}}
\newcommand{\1}{{\rm 1}\kern-0.24em{\rm I}}
\newcommand{\E}{{\rm I}\kern-0.18em{\rm E}}
\newcommand{\R}{{\rm I}\kern-0.18em{\rm R}}
\newcommand{\cS}{\mathcal{S}}
\newcommand{\cT}{\mathcal{T}}
\newcommand{\cE}{\mathcal{E}}
\newcommand{\cI}{\mathcal{I}}
\newcommand{\qmq}[1]{\quad\mbox{#1}\quad}
\newcommand{\ot}{\overline{t}}
\newcommand{\hatphi}{\widehat{\phi}}
\newcommand{\hatpi}{\widehat{\pi}}
\newcommand{\hatY}{\widehat{Y}}
\newcommand{\hatp}{\hat{p}}
\newcommand{\bone}{\mathbf{1}}
\newcommand{\tR}{\Tilde{R}}
\newtheorem{theorem}{Theorem}
\newtheorem{remark}{Remark}
\newtheorem{lemma}{Lemma}
\newtheorem{proposition}{Proposition}
\renewcommand{\appendix}{
 \setcounter{section}{0}%
  \setcounter{subsection}{0}%
  \renewcommand\thesection{\Alph{section}}
  \setcounter{equation}{0}
  \renewcommand{\theequation}{S.\arabic{equation}}
  \setcounter{figure}{0}
  \renewcommand\thefigure{S\arabic{figure}}
  \setcounter{table}{0}
  \renewcommand\thetable{S\arabic{table}}  
  }
\begin{document}

\if1\blind
{
  \title{Hierarchical Neyman-Pearson Classification for Prioritizing Severe Disease Categories in COVID-19 Patient Data}
  \author{Lijia Wang \thanks{Equal contribution}  \hspace{.2cm}\\
    School of Data Science, City University of Hong Kong\\
    Y. X. Rachel Wang \footnotemark[1] \hspace{.1cm} \thanks{Correspondence should be addressed to Y.X. Rachel Wang (rachel.wang@sydney.edu.au)} \hspace{.2cm}\\
    School of Mathematics and Statistics, University of Sydney
    \\
    Jingyi Jessica Li\\
    Department of Statistics, University of California, Los Angeles\\
   Xin Tong   \\
    Department of Data Sciences and Operations, University of Southern California}
  \maketitle
} \fi

\if0\blind
{
	\bigskip
	\bigskip
	\bigskip
	\begin{center}
		{\LARGE\bf Hierarchical Neyman-Pearson Classification for Prioritizing Severe Disease Categories in COVID-19 Patient Data}
	\end{center}
	\medskip
} \fi


\def\spacingset#1{\renewcommand{\baselinestretch}%
{#1}\small\normalsize} \spacingset{1}


\begin{abstract}
     COVID-19 has a spectrum of disease severity, ranging from asymptomatic to requiring hospitalization. Understanding the mechanisms driving disease severity is crucial for developing effective treatments and reducing mortality rates. One way to gain such understanding is using a multi-class classification framework, in which patients' biological features are used to predict patients' severity classes. In this severity classification problem, it is beneficial to prioritize the identification of more severe classes and control the “{under-classification}” errors, in which patients are misclassified into less severe categories. The Neyman-Pearson (NP) classification paradigm has been developed to prioritize the designated type of error. However, current NP procedures are either for binary classification or do not provide high probability controls on the prioritized errors in multi-class classification. Here, we propose a hierarchical NP (H-NP) framework and an umbrella algorithm that generally adapts to popular classification methods and  controls the {under-classification} errors with high probability. On an integrated collection of single-cell RNA-seq (scRNA-seq) datasets for $864$ patients, we explore ways of featurization and demonstrate the efficacy of the H-NP algorithm in controlling the {under-classification} errors regardless of featurization. Beyond COVID-19 severity classification, the H-NP algorithm generally applies to multi-class classification problems, where classes have a priority order.
\end{abstract}

\newpage

\spacingset{1.9}
\section{Introduction}


\vspace{-0.15in}


{The COVID-19 pandemic has infected over 767 million people and caused 6.94 million deaths (27 June 2023) \citep{whocovid19},} prompting collective efforts from statistics and other communities to address data-driven challenges.  Many statistical works have modeled epidemic dynamics \citep{betensky2020accounting,quick2021regression}, forecasted the case growth rates and outbreak locations \citep{brooks2020comparing, tang2021interplay, mcdonald2021can}, and analyzed and predicted the mortality rates \citep{james2021COVID,doi:10.1080/01621459.2022.2044826}. Classification problems, such as diagnosis (positive/negative) \citep{wu2020rapid, li2020using, zhang2021classification} and severity prediction \citep{yan2020prediction,sun2020combination,zhao2020prediction,ortiz2022effective}, have been tackled by machine learning approaches (e.g., logistic regression,  support vector machine (SVM),  random forest,  boosting, and neural networks; see \cite{alballa2021machine} for a review).

In the existing COVID-19 classification works, the commonly used data types are CT images, routine blood tests, and other clinical data including age, blood pressure and medical history \citep{meraihi2022machine}. In comparison, multiomics data are harder to acquire but can provide better insights into the molecular features driving patient responses \citep{overmyer2021large}. Recently, the increasing availability of single-cell RNA-seq (scRNA-seq) data offers the opportunity to understand transcriptional responses to COVID-19 severity at the cellular level \citep{wilk2020single,stephenson2021single,ren2021COVID}. 


More generally, genome-wide gene expression measurements have been routinely used in classification settings to characterize and distinguish disease subtypes, both in bulk-sample \citep{aibar2015analyse} and, more recently, single-cell level \citep{arvaniti2017sensitive,hu2019robust}.
While such genome-wide data can be costly, they provide a comprehensive view of the transcriptome and can unveil significant gene expression patterns for diseases with complex pathophysiology, where multiple genes and pathways are involved. Furthermore, as the patient-level measurements continue to grow in dimension and complexity (e.g., from a single bulk sample to thousands-to-millions of cells per patient), a supervised learning setting enables us to better establish the connection between patient-level features and their associated disease states, paving the way towards personalized treatment.

In this study, we focus on patient severity classification using an integrated collection of multi-patient scRNA-seq datasets. Based on the WHO guidelines \citep{world2020r}, COVID-19 patients have at least three severity categories: healthy, mild/moderate, and severe. 
The classical classification paradigm aims at minimizing the overall classification error. However, prioritizing the identification of more severe patients may provide important insights into the biological mechanisms underlying disease progression and severity, and facilitate the discovery of potential biomarkers for clinical diagnosis and therapeutic intervention. Consequently, it is important to prioritize the control of ``{under-classification}" errors, in which patients are misclassified into less severe categories.

Motivated by the gap in existing classification algorithms for severity classification (Section~\ref{sec:NP_review}), we propose a hierarchical Neyman-Pearson (H-NP) classification framework that prioritizes the {under-classification} error control in the following sense. Suppose there are $\cI$ classes with class labels $[\cI]=\{1, 2,\ldots, \cI\}$ ordered in decreasing severity. For $i\in[\cI-1]$, the $i$-th {under-classification} error is the probability of misclassifying an individual in class $i$ into any class $j$ with $j > i$. We develop an H-NP umbrella algorithm 
that controls the $i$-th {under-classification} error below a user-specified level $\alpha_i \in (0, 1)$ 
with high probability while minimizing a weighted sum of the remaining classification errors. Similar in spirit to the NP umbrella algorithm for binary classification in \citet{tong2018neyman}, the H-NP umbrella algorithm adapts to popular scoring-type multi-class classification methods (e.g., logistic regression, random forest, and SVM). To our knowledge, the algorithm is  the first to achieve asymmetric error control with high probability in multi-class classification.

Another contribution of this study is the exploration of appropriate ways to featurize multi-patient scRNA-seq data. Following the workflow in \citet{lin2022scalable}, we integrate 20 publicly available scRNA-seq datasets to form a sample of 864 patients with three levels of severity. For each patient, scRNA-seq data were collected from peripheral blood mononuclear cells (PBMCs) and processed into a sparse expression matrix, which consists of tens of thousands of genes in rows and thousands of cells in columns. We propose four ways of extracting a feature vector from each of these $864$ matrices. Then we evaluate the performance of each featurization way in combination with multiple classification methods under both the classical and H-NP classification paradigms. We note that our H-NP umbrella algorithm is applicable to other featurizations of scRNA-seq data, other forms of patient data, and more general disease classification problems with a severity ordering.

Below we review the NP paradigm and featurization of multi-patient scRNA-seq data as the background of our work. 


\vspace{-0.2in}

\subsection{Neyman-Pearson paradigm and multi-class classification}\label{sec:NP_review}

\vspace{-0.15in}

Classical binary classification focuses on minimizing the overall classification error, i.e., a weighted sum of type I and II errors, where the weights are the marginal probabilities of the two classes. However, the class priorities are not reflected by the class weights in many applications, especially disease severity classification, where the severe class is the minor class and has a smaller weight (e.g., HIV \citep{meyer1987screening} and cancer \citep{dettling2003boosting}). 
One class of methods that addresses this error asymmetry is cost-sensitive learning \citep{elkan2001foundations,margineantu2002class}, which assigns different costs to type I and type II errors. However, such weights may not be easy to choose in practice, especially in a multi-class setting; nor do these methods provide high probability controls on the prioritized errors. The NP classification paradigm \citep{cannon2002learning,scott2005neyman,rigollet2011neyman} was developed as an alternative framework to enforce class priorities: it finds a classifier that controls the population type I error (the prioritized error, e.g.,  misclassifying diseased patients as healthy) under a user-specified level $\alpha$ while minimizing the type II error (the error with less priority, e.g., misdiagnosing healthy people as sick). 
Practically, using an order statistics approach,  \citet{tong2018neyman} proposed an NP umbrella algorithm that adapts all scoring-type classification methods (e.g., logistic regression) to the NP paradigm for classifier construction. 
The resulting classifier has the population type I error under $\alpha$ with high probability.   
Besides disease severity classification, the NP classification paradigm has found diverse applications, including social media text classification \citep{xia2021intentional} and crisis risk control \citep{feng2021targeted}.  Nevertheless, the original NP paradigm is for binary classification only.

Although several works aimed to control prioritized errors in multi-class classification \citep{landgrebe2005neyman,xiong2006measuring,tian2021neyman}, they  did not provide high probability control. That is, if they are applied to severe disease classification, there is a non-trivial chance that their {under-classification} errors exceed the desired levels. 

\vspace{-0.15in}

\subsection{ScRNA-seq data featurization}

\vspace{-0.1in}
 In multi-patient scRNA-seq data, every patient has a gene-by-cell expression matrix; genes are matched across patients, but cells are not. For learning tasks with patients as instances, featurization is a necessary step to ensure that all patients have feautures in the same space. A common featurization approach is to assign every patient's cells into cell types, which are comparable across patients, by clustering \citep{stanley2020vopo,ganio2020preferential} and/or manual annotation \citep{han2019differential}. Then, each patient's gene-by-cell expression matrix can be converted into a gene-by-cell-type expression matrix using a summary statistic (e.g., every gene's mean expression in a cell type), so all patients have gene-by-cell-type expression matrices with the same dimensions. We note here that most of the previous multi-patient single-cell studies with a reasonably large cohort used CyTOF data \citep{davis2017systems}, which typically measures $50$--$100$ protein markers, whereas scRNA-seq data have a much higher feature dimension, containing expression values of $\sim10^4$ genes. Thus further featurization is necessary to convert each patient's gene-by-cell-type expression matrix into a feature vector for classification. 



Following the data processing workflow in \citet{lin2022scalable}, we obtain $864$ patients' cell-type-by-gene expression matrices, which include 18 cell types and 3{,}000 genes (after filtering). We propose and compare four ways of featurizing these matrices into vectors, which differ in their treatments of 0 values and approaches to dimension reduction. Note that we perform featurization as a separate step before classification so that all classification methods are applicable. Separating the featurization step also allows us to investigate whether a featurization way maintains robust performance across classification methods. 


The rest of the paper is organized as follows. In Section~\ref{sec:method}, we introduce the H-NP classification framework and propose an umbrella algorithm to control the {under-classification} errors with high probability. Next, we conduct extensive simulation studies to evaluate the performance of the
 umbrella algorithm. In Section~\ref{sec:COVID}, we describe four ways of featurizing the COVID-19 multi-patient scRNA-seq data and show that the H-NP umbrella algorithm consistently controls the {under-classification} errors in COVID-19 severity classification across all featurization ways and classification methods. Furthermore, we demonstrate that utilizing the scRNA-seq data allows us to gain biological insights into the mechanism and immune response of severe patients at both the cell-type and gene levels. Supplementary Materials contain technical derivations, proofs and additional numerical results.

\vspace{-0.25in}

\section{Hierarchical Neyman-Pearson (H-NP) classification}\label{sec:method}
\vspace{-0.15in}

\subsection{{Under-classification} errors in H-NP classification}

We first introduce the formulation of H-NP classification and define the {under-classification} errors, which are the probabilities of individuals being misclassified to less severe (more generally, less important) classes. In an H-NP problem with $\cI \ge 2$ classes, the class labels $i \in [\cI]:=\{1,2, \ldots, \cI\}$ are ranked in a decreasing order of importance, i.e., class $i$ is more important than class $j$ if $i < j$. Let $(X, Y)$ be a random pair, where $X \in  \mathcal{X} \subset \R^d$ represents a vector of features, and $Y \in [\cI]$ denotes the class label. A classifier $\phi: \mathcal{X}  \rightarrow [\cI]$ maps a feature vector $X$ to a predicted class label. In the following discussion, we abbreviate $\IP(\cdot\mid Y = i)$ as $P_i(\cdot)$. Our H-NP framework aims to control the {under-classification} errors at the population level in the sense that
\vspace{-0.7cm}
\begin{equation}\label{eq:np_problem}
     R_{i\star}(\phi)  = P_i(\phi(X)\in\{i+1, \ldots, \mathcal{I}\}) \leq \alpha_i \qmq{for} i \in [\cI -1]\,, \vspace{-0.6cm}
\end{equation}
where $\alpha_i \in (0,1)$ is the desired control level for the $i$-th {under-classification} error $R_{i\star}(\phi)$. Simultaneously, 
our H-NP framework minimizes the weighted sum of the remaining errors, which can be expressed as
\vspace{-0.7cm}
\begin{equation}\label{eq:the_error}
R^c(\phi)  = \IP(\phi(X) \neq Y) - \sum^{\cI - 1}_{i = 1}\pi_i R_{i\star}( \phi) \,,   \qmq{where} \pi_i = \IP(Y = i)\,. \vspace{-0.5cm}
\end{equation}
We note that when $\cI=2$, this H-NP formulation is equivalent to the binary NP classification (prioritizing class 1 over class 2),  with $R_{1\star}(\phi)$ being the population type I error.

For COVID-19 severity classification with three levels, severe patients labeled as  $Y = 1$ have the top priority, and we want to control the probability of severe patients not being identified, which is $R_{1\star}(\phi) $. The secondary priority is for moderate patients labeled as $Y = 2$; $R_{2\star}(\phi) $ is the probability of moderate patients being classified as healthy. Healthy patients that do not need medical care are labeled as  $Y=3$. Note that $R_{i\star}(\cdot)$ and $R^c(\cdot)$ are population-level quantities as they depend on the intrinsic  distribution of $(X, Y)$, and it is hard to control the $R_{i\star}(\cdot)$'s almost surely due to the randomness of the classifier.

\vspace{-0.1in}

\subsection{H-NP algorithm with high probability control}\label{subsec:K_class_classifier}

In this section, we construct an H-NP  umbrella algorithm that controls the population {under-classification} errors in the sense that $\IP(R_{i\star}( \hatphi)  > \alpha_i ) \leq \delta_{i}$ for $i \in [\cI - 1]$, where $(\delta_1, \ldots, \delta_{\cI - 1})$ is a vector of tolerance parameters, and $\hatphi$ is a scoring-type classifier to be defined below.

Roughly speaking, we employ a sample-splitting strategy, which uses  some data subsets to train the scoring functions from a base classification method and other data subsets to select appropriate thresholds on the scores to achieve population-level error controls. Here, the scoring functions refer to the scores assigned to each possible class label for a given input observation and include examples such as the output from the softmax transformation in multinomial logistic regression. For $i\in[\cI]$, let $\cS_i = \{X^i_j\}_{j=1}^{N_i}$ denote $N_i$ independent observations from class $i$, where $N_i$ is the size of the class.  In the following discussion, the superscript on $X$ is dropped for brevity when it is clear which class the observation comes from. 
Our procedure randomly splits the class-$i$ observations into (up to) three parts: $\cS_{is}$ ($i\in[\cI]$) for obtaining scoring functions, $\cS_{it}$ ($i\in[\cI-1]$) for selecting thresholds, and $\cS_{ie}$ ($i=2, \dots, \cI$) for computing empirical errors. As will be made clear later,  our procedure does not require $\cS_{1e}$ or $\cS_{\cI t}$ and splits class 1 and class $\cI$ into two parts only. After splitting, we use the combination $\cS_s = \bigcup_{i \in [\cI]} \cS_{is}$ to train the scoring functions. 

We consider a classifier that relies on $\cI - 1$ scoring functions $T_1, T_2, \ldots, T_{\cI - 1}: \mathcal{X} \rightarrow \R$, where the class decision is made sequentially with each $T_i(X)$ determining whether the observation belongs to class $i$ or one of the less prioritized classes $(i + 1), \ldots, \cI$. Thus at each step $i$, the decision is binary, allowing us to use the NP Lemma to motivate the construction of our scoring functions. Note that $\IP(Y=i \mid X=x)/\IP(Y\in\{i+1, \dots, \cI\}\mid X=x) \propto f_i(x)/f_{>i}(x)$, where $f_{>i}(x)$ and $f_{i}(x)$ represent the density function of $X$ when $Y>i$ and $Y=i$, respectively, and the density ratio is the statistic that leads to the most powerful test with a given level of control on one of the errors by the NP Lemma. Given a typical scoring-type classification method (e.g., logistic regression, random forest, SVM, and neural network) that provides the probability estimates $\widehat{\IP}(Y = i \mid X)$ for $i\in[\cI]$, we can construct our scores using these estimates by defining 
\vspace{-0.5cm}
$$T_1 (X) = \widehat{\IP}(Y = 1 \mid X)\,, \qmq{and} T_i (X) = \frac{\widehat{\IP}(Y = i\mid X)}{\sum^{\cI}_{j= i + 1}\widehat{\IP}(Y = j\mid X)} \qmq{for}  1 < i <\cI -1\,. \vspace{-0.5cm}$$

Given thresholds $(t_1,t_2,\ldots, t_{\cI-1})$, we consider an H-NP classifier of the form
\vspace{-0.5cm}
\begin{equation}\label{eq:decision}
    \hatphi(X)  = \begin{cases} 1\,, &T_1(X)  \geq t_1\,;  \vspace{-0.3cm}\\
    2\,, &T_2(X)  \geq t_2 \qmq{and} T_1(X) < t_1\,; \vspace{-0.5cm}\\
    \cdots & \vspace{-0.5cm}\\
    \cI - 1\,, &T_{\cI - 1}(X)  \geq t_{\cI-1}  \qmq{and} T_1(X) < t_1,\ldots,T_{\cI-2}(X) < t_{\cI-2} \,; \vspace{-0.3cm}\\
     \cI\,, &\mbox{otherwise}\,. 
    \end{cases}
    \vspace{-0.4cm}
\end{equation}
Then the $i$-th {under-classification} error for this classifier can be written as 
\vspace{-0.6cm}
\begin{equation}\label{eq:R_detail}
    R_{i\star}( \hatphi) = P_i\left(\hatphi(X) \in\{i+1, \ldots, \mathcal{I}\} \right) = P_i\left(T_{1} (X)< t_1, \ldots, T_i (X)< t_i\right)\,, \vspace{-0.5cm}
\end{equation}
where $X$ is a new observation from the $i$-th class  independent of the data used for score training and threshold selection. The thresholds $(t_1,t_2,\ldots, t_{\cI-1})$ are selected using the observations in $\cS_{1t}, \ldots, \cS_{(\cI - 1)t}$, and they are chosen to satisfy $\IP(R_{i\star}( \hatphi)  > \alpha_i) \leq \delta_i$ for all $i\in[\cI-1]$. In what follows, we will develop our arguments conditional on the data $\cS_s$ for training the scoring functions so that $T_i$'s can be viewed as fixed functions. 

According to Eq~\eqref{eq:decision}, the first {under-classification} error $R_{1\star}(\hatphi) = P_1\left(T_{1} (X)< t_1 \right)$ only depends on $t_1$, while the other {under-classification} errors $R_{i\star}( \hatphi) $ depend on $t_1, \ldots, t_i$. To achieve the high probability controls with $\IP(R_{i\star}( \hatphi)  > \alpha_i) \leq \delta_i$ for all $i \in [\cI - 1]$, we select $t_1,\ldots t_{\cI - 1}$ sequentially using an order statistics approach. We start with the selection of $t_1$, which is covered by the following general proposition. The proof is a modification of Proposition 1 in \citet{tong2018neyman} and can be found in Supplementary Section~B.1.

\begin{proposition}\label{prop:t1}
For any $i\in[\cI]$, 
denote $\cT_i = \{ T_i(X) \mid X\in \cS_{it}\}$, and let $t_{i(k)} $ be the corresponding $k$-th order statistic. Further denote the cardinality of $\cT_i$ as $n_i$. Assuming that the data used to train the scoring functions and the left-out data are independent, then given a control level $\alpha$, for another independent observation $X$ from class $i$,

\vspace{-0.6cm}
\begin{equation}\label{eq:prop1}
 \IP\left(P_i\left[T_i(X) < t_{i(k)}\mid t_{i(k)} \right] > \alpha\right) \leq  v(k,n_i,\alpha):= \sum^{k - 1}_{j = 0} {n_i \choose j}( \alpha)^{ j} (1  - \alpha)^{n_i - j}\,.
\end{equation}
\end{proposition}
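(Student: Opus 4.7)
The plan is to reduce the statement to a standard fact about uniform order statistics by conditioning on the score-training sample and applying a probability integral transform argument.

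First I would condition on $\cS_s$, so that $T_i: \mathcal{X}\to\R$ can be treated as a deterministic function. Let $F_i$ denote the CDF of the real-valued random variable $T_i(X)$ when $X\sim P_i$, and let $X_1,\dots,X_{n_i}$ be the elements of $\cS_{it}$ (the new observation $X$ and $\cS_{it}$ are independent of $\cS_s$, so the i.i.d.\ structure from $P_i$ is preserved). The quantity appearing inside the outer probability is
\vspace{-0.3cm}
$$P_i\!\left[T_i(X) < t_{i(k)} \,\bigm|\, t_{i(k)}\right] \;=\; F_i(t_{i(k)}-)\;\le\; F_i(t_{i(k)}).\vspace{-0.2cm}$$
So it suffices to bound $\IP\!\left(F_i(t_{i(k)}) > \alpha\right)$.

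Next I would invoke the probability integral transform: the variables $U_j := F_i(T_i(X_j))$ for $j\in[n_i]$ are (in the continuous case) i.i.d.\ $\mathrm{Uniform}(0,1)$, and in general are stochastically dominated by $\mathrm{Uniform}(0,1)$ variables. Because $F_i$ is non-decreasing, the $k$-th order statistic of $\{U_j\}$ equals $F_i(t_{i(k)})$ (continuous case) or is at least $F_i(t_{i(k)})$ (general case). Hence
\vspace{-0.2cm}
$$\IP\!\left(F_i(t_{i(k)}) > \alpha\right)\;\le\;\IP\!\left(U_{(k)} > \alpha\right),\vspace{-0.2cm}$$
where $U_{(k)}$ is the $k$-th order statistic of $n_i$ i.i.d.\ $\mathrm{Uniform}(0,1)$ variables.

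Finally I would compute $\IP(U_{(k)} > \alpha)$ using the classical formula: $U_{(k)} \le \alpha$ iff at least $k$ of the $n_i$ uniforms fall in $[0,\alpha]$, which is a $\mathrm{Binomial}(n_i,\alpha)$ tail probability. Taking the complement gives
\vspace{-0.2cm}
$$\IP(U_{(k)} > \alpha) = \sum_{j=0}^{k-1}\binom{n_i}{j}\alpha^{j}(1-\alpha)^{n_i-j} = v(k,n_i,\alpha),\vspace{-0.2cm}$$
which is the claimed bound. Finally, I would remove the conditioning on $\cS_s$ by taking expectations, noting that the right-hand side does not depend on $\cS_s$.

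The only genuinely delicate step is the treatment of ties/atoms of $F_i$: the scores $T_i(X)$ may have a discrete component (e.g.\ probability estimates from a finite tree ensemble can take only finitely many values), so the PIT does not give exact uniforms. The obstacle is handled by the two inequalities above, namely $P_i[T_i(X)<t_{i(k)}\mid t_{i(k)}] \le F_i(t_{i(k)})$ and the stochastic domination of $F_i(T_i(X_j))$ by a uniform, both of which preserve the direction of the bound. This yields $v(k,n_i,\alpha)$ as a uniform upper bound valid regardless of continuity of $F_i$, matching the form used in Proposition~1 of \citet{tong2018neyman}.
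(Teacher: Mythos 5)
Your argument is correct in the continuous case, where it is essentially the paper's proof rephrased through the probability integral transform, but the way you handle atoms of the score distribution does not work --- and that is exactly the case the proposition must cover (the paper's remark after the proposition makes clear the bound is an equality only when $T_i$ is continuous, and scores from, e.g., tree ensembles are discrete). The problem is the direction of your stochastic comparison. For a random variable $Z$ with CDF $F$, the general form of the PIT is $\IP(F(Z)\le u)\le u$ for all $u$, i.e.\ $F(Z)$ stochastically \emph{dominates} a $\mathrm{Uniform}(0,1)$ variable, not the reverse as you assert. Consequently the order statistics of $U_j=F_i(T_i(X_j))$ are stochastically \emph{larger} than uniform order statistics, and your key inequality $\IP(F_i(t_{i(k)})>\alpha)\le\IP(U_{(k)}>\alpha)$ with genuine uniforms goes the wrong way: the count of indices $j$ with $U_j\le\alpha$ is $\mathrm{Binomial}(n_i,p)$ with $p=\IP(F_i(T_i(X))\le\alpha)\le\alpha$, and the binomial CDF at $k-1$ is \emph{decreasing} in the success probability, so this route yields $\IP(F_i(t_{i(k)})>\alpha)\ge v(k,n_i,\alpha)$ rather than $\le$. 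A degenerate example shows the intermediate claim is genuinely false: if $T_i(X)$ is a.s.\ constant, then $F_i(t_{i(k)})=1$ almost surely, so $\IP(F_i(t_{i(k)})>\alpha)=1>v(k,n_i,\alpha)$, even though the proposition holds trivially there because $P_i[T_i(X)<t_{i(k)}]=0$. The damage is done by your first reduction $P_i[T_i(X)<t_{i(k)}\mid t_{i(k)}]=F_i(t_{i(k)}-)\le F_i(t_{i(k)})$, which discards the atom at $t_{i(k)}$ and is too lossy to recover from.

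The repair is to keep the strict inequality (left-continuous CDF) throughout, which is what the paper does by working with $-T_i(X)$ and its right-continuous CDF. Writing $G(t)=P_i(T_i(X)<t)$ and $s_\alpha=\sup\{t:G(t)\le\alpha\}$, one has $\{G(t_{i(k)})>\alpha\}=\{t_{i(k)}>s_\alpha\}$, which occurs iff at most $k-1$ of the $T_i(X_j)$ satisfy $T_i(X_j)\le s_\alpha$; this is a $\mathrm{Binomial}(n_i,p')$ CDF at $k-1$ with $p'=\IP(T_i(X)\le s_\alpha)\ge\alpha$ (this is the one-sided quantile inequality that actually holds for general distributions), and monotonicity of the binomial CDF in $p'$ now gives the bound in the correct direction. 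This is precisely the content of the paper's chain of equalities, which counts how many elements of $\{-T_i(X_j)\}$ fall below $F_i^{-1}(1-\alpha)$ and then uses the incomplete-beta representation together with $\IP[-T_i(X)<F_i^{-1}(1-\alpha)]\le 1-\alpha$. Your proof as written does not establish the proposition for non-continuous scores until this step is fixed.
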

We remark that similar to Proposition 1 in \cite{tong2018neyman}, if $T_i$ is a continuous random variable, the bound in Eq~\eqref{eq:prop1} is tight.

\vspace{-0.1in}

\begin{algorithm}[h!!!]
\caption{DeltaSearch$(n,\alpha,\delta)$} \label{alg:delta}
\SetKw{KwBy}{by}
\SetKwInOut{Input}{Input}\SetKwInOut{Output}{Output}

\SetAlgoLined

\Input{size: $n$; level: $\alpha$; tolerance: $\delta$.}

$k = 0$, $v_k = 0$

\While{$v_k \leq \delta$}{

$v_k = v_k  + {n \choose k} (\alpha)^{k} (1 - \alpha)^{n - k}$

$k = k+1$

}

\Output{$k$}
\end{algorithm}

Let $k_i = \max \{k \mid v(k,n_i,\alpha_i) \leq \delta_i\}$, which can be computed using Algorithm~\ref{alg:delta}. Then Proposition~\ref{prop:t1} and  Eq~\eqref{eq:R_detail} imply

\vspace{-1.5cm}

\begin{equation}\label{eq:control_1}
  \IP\left(R_{i\star}(\hatphi) > \alpha_i \right) \leq \IP\left(P_i\left[T_i(X)< t_i\mid t_{i(k_i)}\right] > \alpha_i \right) \leq \delta_i\,\qmq{for all} t_i \leq t_{i(k_i)}\,.
  \vspace{-0.5cm}
\end{equation} 
We note that to have a solution for $v(k, n_i, \alpha_i)\leq \delta_i$ among $k\in[n_i]$, we need $n_i \geq \log \delta_i/ \log (1 - \alpha_i)$, the minimum sample size required for the class $\cS_{it}$.  When $i = 1$, the first inequality in Eq~\eqref{eq:control_1} becomes equality, so $t_{1(k_1)}$ is an effective upper bound on $t_1$ when we later minimize the empirical counterpart of $R^c(\cdot)$ in Eq~\eqref{eq:the_error} with respect to different feasible threshold choices. On the other hand, for $i> 1$, the inequality is mostly strict, which means that the bound $t_{i(k_i)}$ on $t_i$ is expected to be loose and  can be improved. To this end, we note that Eq~\eqref{eq:R_detail} can be decomposed as

\vspace{-1.5cm}

\begin{equation}\label{eq:R_sep}
   R_{i\star}( \hatphi)    = P_i\left(\left.T_i (X)< t_i \right| T_1 (X)< t_1, \ldots,  T_{i - 1} (X)< t_{i - 1}\right) \cdot P_i\left(T_1 (X)< t_1, \ldots,  T_{i - 1} (X)< t_{i - 1} \right) 
\end{equation}
leading to the following theorem that upper bounds $t_i$ given the previous thresholds.  

\begin{theorem}\label{prop:ti}
Given the previous thresholds $t_1, \ldots, t_{i - 1}$, consider all the scores $T_i$ on the left-out class $\cS_{it}$, $\cT_i = \{ T_i(X) \mid X \in \cS_{it}\}$, and a subset of these scores depending on the previous thresholds, defined as $\cT'_i = \{ T_i(X) \mid X \in \cS_{it},  T_1(X)< t_1, \ldots, T_{i - 1}(X)< t_{i - 1} \}$. We use $t_{i(k)}$ and  $t'_{i(k)}$ to denote the $k$-th order statistic of $\cT_i$ and $\cT'_i$, respectively. Let  $n_i$ and $n_i'$ be the cardinality of $\cT_i$ and $\cT'_i$, respectively, and $\alpha_i$ and $\delta_i$ be the prespecified control level and violation tolerance for the $i$-th {under-classification} error $R_{i\star}( \cdot) $.
We set 
\vspace{-0.5cm}

\begin{equation}\label{eq:adjustment}
\hatp_i = \frac{n_i'}{n_i}\,,\,  p_i =  \hatp_i + c(n_i)\,,\, \alpha_i' = \frac{\alpha_i}{p_i}\,,\, \delta_i' = \delta_i - \exp\{- 2n_i c^2(n_i)\}\,,    
\end{equation}
where $c(n) = \mathcal{O}(1/\sqrt{n})$. Let

\vspace{-0.5cm}

\begin{equation}\label{eq:threshold_2}
\ot_i = \begin{cases}  t'_{i(k'_i)}\,, & \mbox{if } n_i' \geq \log \delta_i'/ \log (1 - \alpha_i') \qmq{and} \alpha_i' < 1\,; \\
t_{i(k_i)}\,, &\mbox{otherwise}\,,
\end{cases}
\end{equation}
where $k_i = \max \{k\in[n_i] \mid v(k,n_i,\alpha_i) \leq \delta_i\} \qmq{and} 
    k'_i = \max \{k\in[n_i'] \mid v(k,n'_i,\alpha'_i)\leq \delta_i'\}\,.$ Then,

\vspace{-2cm}

\begin{equation}\label{eq:control_2}
\IP(R_{i\star}( \hatphi)  > \alpha_i) = \IP\left( P_i\left[T_{1} (X)< t_1, \ldots T_i (X)< t_i \mid \ot_i \right] > \alpha_i \right) \leq \delta_i \qmq{for all} t_i \leq \ot_i\,.    
\end{equation}
\end{theorem}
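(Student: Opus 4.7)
The plan is to start from the decomposition in Eq~\eqref{eq:R_sep} of $R_{i\star}(\hatphi)$ as a product of (i) the conditional probability $q := P_i\bigl(T_i(X) < t_i \mid T_1(X) < t_1, \ldots, T_{i-1}(X) < t_{i-1}\bigr)$ and (ii) the marginal probability $p := P_i\bigl(T_1(X) < t_1, \ldots, T_{i-1}(X) < t_{i-1}\bigr)$, then control each factor separately so that their product is bounded by $\alpha_i$ with probability at least $1-\delta_i$. The key observation is that if we can certify $q \le \alpha_i' = \alpha_i/p_i$ and $p \le p_i$ simultaneously, then $R_{i\star}(\hatphi) \le \alpha_i' \cdot p_i = \alpha_i$, so the task reduces to producing these two high-probability certificates and combining them by a union bound.

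For the marginal factor $p$, I would note that $\hat p_i = n_i'/n_i$ is an average of i.i.d.\ Bernoulli indicators $\mathbf{1}\{T_1(X_j)<t_1,\ldots,T_{i-1}(X_j)<t_{i-1}\}$ for $X_j \in \cS_{it}$, with mean $p$. A one-sided Hoeffding inequality then gives $\IP(p > \hat p_i + c(n_i)) \le \exp(-2 n_i c^2(n_i))$, i.e.\ $p \le p_i$ with probability at least $1-\exp(-2 n_i c^2(n_i))$. This is where the particular form of $p_i$ and the deduction $\delta_i' = \delta_i - \exp(-2 n_i c^2(n_i))$ come from: the remaining budget $\delta_i'$ is what we allocate to controlling the conditional factor.

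For the conditional factor $q$, I would apply Proposition~\ref{prop:t1} to the \emph{conditional} class distribution of $T_i(X)$ given the event $\{T_1(X)<t_1,\ldots,T_{i-1}(X)<t_{i-1}\}$. Conditional on $n_i'$, the elements of $\cT_i'$ are i.i.d.\ draws from this conditional distribution, so Proposition~\ref{prop:t1} yields $\IP\!\left(q > \alpha_i' \mid n_i'\right) \le v(k_i', n_i', \alpha_i')$ when $t_i \le t'_{i(k_i')}$. By the definition of $k_i'$ as the largest index with $v(k_i',n_i',\alpha_i') \le \delta_i'$, this conditional probability is at most $\delta_i'$, and this bound survives integration over $n_i'$ on the event that $k_i'$ exists, which requires exactly the feasibility condition $n_i' \ge \log\delta_i'/\log(1-\alpha_i')$ and $\alpha_i' < 1$ appearing in Eq~\eqref{eq:threshold_2}. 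A union bound over the two bad events gives
\[
\IP(R_{i\star}(\hatphi) > \alpha_i) \le \IP(q > \alpha_i') + \IP(p > p_i) \le \delta_i' + \exp(-2 n_i c^2(n_i)) = \delta_i,
\]
as required. For the ``otherwise'' branch in Eq~\eqref{eq:threshold_2}, where the refined threshold is infeasible, I would fall back to the crude bound $R_{i\star}(\hatphi) \le P_i(T_i(X) < t_i)$ and invoke Eq~\eqref{eq:control_1} directly with $t_i \le t_{i(k_i)}$.

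The main obstacle I anticipate is the careful bookkeeping of the conditioning structure: the subset $\cT_i'$, its size $n_i'$, the data-dependent level $\alpha_i'$, and the data-dependent tolerance $\delta_i'$ are all random and intertwined, so one must justify that Proposition~\ref{prop:t1} still applies after conditioning on the previous thresholds and on $\cS_s$. The cleanest way to handle this is to first condition on $(\cS_s, t_1,\ldots,t_{i-1}, n_i')$ when applying Proposition~\ref{prop:t1}, obtain a bound uniform in $n_i'$ whenever $k_i'$ is well-defined, and then combine it with the Hoeffding step (which does not need the score ordering and so can be done on the same conditioning $\sigma$-algebra). Once this measurability/independence issue is pinned down, the rest is an algebraic assembly of the two tail bounds.
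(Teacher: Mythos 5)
Your architecture matches the paper's proof almost exactly: the same decomposition of $R_{i\star}(\hatphi)$ into the conditional factor $q$ and the marginal factor $p$, the same two certificates ($q\le\alpha_i'$ via an order-statistics bound at level $\delta_i'$, $p\le p_i$ via Hoeffding at level $e^{-2n_ic^2(n_i)}$), the same union bound, and the same fallback to Eq~\eqref{eq:control_1} on the infeasible branch. Your claim that, given $n_i'$, the elements of $\cT_i'$ are i.i.d.\ draws from the conditional law of $T_i(X)$ is also correct; the paper does not cite Proposition~\ref{prop:t1} directly but re-derives the bound $v(k,n_i',\alpha_i')$ through a multinomial partition of $\cS_{it}$ into three outcomes, which amounts to verifying exactly that claim.

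The one genuine gap is in your treatment of the Hoeffding step. The refined threshold $t'_{i(k_i')}$ is only used on the event $\cE=\{n_i'\ge\log\delta_i'/\log(1-\alpha_i')\ \text{and}\ \alpha_i'<1\}$, which is a function of $n_i'$ and therefore correlated with $\hatp_i$; what the union bound on that branch requires is the \emph{conditional} statement $\IP(p>\hatp_i+c(n_i)\mid\cE)\le e^{-2n_ic^2(n_i)}$, not the unconditional one you invoke. Your proposed repair --- running both steps on the same conditioning $\sigma$-algebra generated by $n_i'$ --- does not work: given $n_i'$, the quantity $\hatp_i=n_i'/n_i$ is a constant, the event $\{p>\hatp_i+c(n_i)\}$ has conditional probability $0$ or $1$, and there is no randomness left for Hoeffding to exploit. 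The paper closes this gap with a separate monotonicity argument: the inequalities defining $\cE$ are monotone in $n_i'$, so $\E[n_i'\mid\cE]\ge\E[n_i'\mid\cE^c]$, hence $p=\E[\hatp_i]\le\E[\hatp_i\mid\cE]$, and the deviation $\E[\hatp_i\mid\cE]-\hatp_i>c(n_i)$ is then bounded by $e^{-2n_ic^2(n_i)}$ conditionally on $\cE$. Without some argument of this kind (or an explicit accounting for the factor $1/\IP(\cE)$ that naive conditioning would introduce), the assembly $\delta_i'+e^{-2n_ic^2(n_i)}=\delta_i$ is not justified on the refined branch.
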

\vspace{-0.6cm}

In other words, if the cardinality of $\cT'_i$ exceeds a threshold, we can refine the choice of the upper bound according to Eq~\eqref{eq:threshold_2}; otherwise, the bound in Proposition~\ref{prop:t1} always applies. The proof of the theorem is provided in Supplementary Section~B.2; the computation of the upper bound $\ot_i$ is summarized in Algorithm~\ref{alg:upper}. $\ot_{i}$ guarantees the required high probability control on the $i$-th {under-classification} error, while providing a tighter bound compared with Eq~\eqref{eq:R_detail}. We make two additional remarks as follows.
\begin{remark}
\begin{enumerate}[label=\alph*)]
\item The minimum sample size requirement for $\cS_{it}$ is still $n_i \geq \log \delta_i/ \log(1 - \alpha_i)$ because $t_{i(k_i)}$ in Eq~\eqref{eq:threshold_2} always exists when this inequality holds.  For instance, if $\alpha_i = 0.05$ and $\delta_i = 0.05$, then $n_i\geq 59$.

\item  The choice of $c(n)$ involves a trade-off between $\alpha_i'$ and $\delta_i'$, although under the constraint $c(n)=\mathcal{O}(1/\sqrt{n})$, any changes in both quantities are small in magnitude for large $n$. For example, a larger $c(n)$ leads to a smaller $\alpha_i'$ and  a larger $\delta_i'$, thus a looser tolerance level comes at the cost of a stricter error control level. In practice, larger $\alpha_i'$  and larger $\delta_i'$ values are desired since they lead to a wider region for $t_i$. We set $c(n) = 2/\sqrt{n}$ throughout the rest of the paper. Then by Eq~\eqref{eq:adjustment}, $\alpha_i'$ increases as $n$ increases, and $\delta_i' = \delta_i - e^{-4}$, so the difference between $\delta_i'$ and the prespecified $\delta_i$ is sufficiently small. 

\item Eq~\eqref{eq:control_2} has two cases, as Eq~\eqref{eq:threshold_2} indicates. When $\ot_i = t_{i(k_i)}$, the bound remains the same as Eq~\eqref{eq:control_1}, which is not tight for $i>1$. When $\ot_i = t'_{i(k'_i)}$, Eq~\eqref{eq:control_2} provides a tighter bound through the decomposition in Eq~\eqref{eq:R_sep}, where the first part is bounded by a concentration argument, and the second part achieves a tight bound the same way as Proposition~\ref{prop:t1}.

\end{enumerate}
\end{remark}

With the set of upper bounds on the thresholds chosen according to Theorem~\ref{prop:ti}, the next step is to find an optimal set of thresholds $(t_1, t_2, \ldots, t_{\cI-1})$ satisfying these upper bounds while minimizing the empirical version of $R^c(\hatphi)$ , which is calculated using observations in $\cS_e = \bigcup^\cI_{i = 2} \cS_{ie}$ (since class-1 observations are not needed in $R^c(\hatphi)$). For brevity, we denote all the empirical errors as $\Tilde{R}$, e.g.,  $\Tilde{R}^c$.  In Section~\ref{sec:sim}, we will show numerically that Theorem~\ref{prop:ti} provides a wider search region  for the  threshold $t_i$ compared to Proposition~\ref{prop:t1}, which benefits the minimization of $R^c$. 

As our COVID-19 data has three severity levels, in the next section, we will focus on the three-class H-NP umbrella algorithm and describe in more details how the above procedures can be combined to select the optimal thresholds in the final classifier.

\begin{algorithm}[h!!!]
\caption{UpperBound$(\cS_{it}, \alpha_i, \delta_i, (T_1,\ldots, T_i), (t_1,\ldots, t_{i - 1}) )$} \label{alg:upper}
\SetKw{KwBy}{by}
\SetKwInOut{Input}{Input}\SetKwInOut{Output}{Output}

\SetAlgoLined

\Input{The left-out class-$i$ samples: $\cS_{it}$;
 level: $\alpha_i$; tolerance: $\delta_i$; score functions: $(T_1,\ldots, T_i)$; thresholds: $(t_1,\ldots, t_{i - 1})$.}
 
 $n_i \leftarrow |\cS_{it}|$

$ \{t_{i(1)}, \ldots, t_{i(n_i)}\} \leftarrow$ sort $\cT_i = \{ T_i(X ) \mid X \in \cS_{it}\}$

$k_i \leftarrow \mbox{DeltaSearch}(n_i, \alpha_i, \delta_i)$ \tcp*{i.e., Algorithm~\ref{alg:delta}}

$\ot_i \leftarrow t_{i(k_i)}$

\If{ $i > 1$}{

$\cT'_i \leftarrow \{t'_{i(1)}, \ldots, t'_{i(n_i')}\} = \mbox{sort} \{ T_i(X) \mid X \in \cS_{it}, T_1(X)< t_1, \ldots, T_{i - 1}(X)< t_{i - 1} \}$  \tcp*{Note that $n_i'$ is random }

$\hatp_i \leftarrow \frac{ n_i'}{n_i}$, $p_i \leftarrow \hatp_i + c(n_i)$, $\alpha_i' \leftarrow \alpha_i/p_i$ , $\delta_i'\leftarrow \delta_i - e^{-2n_i c^2(n_i)}$\tcp*{  e.g., $c(n)$ = $\frac{2}{\sqrt{n}}$}

\If{$n_i' \geq \log \delta_i'/ \log (1 - \alpha_i') \;\mathrm{ and }\; \alpha_i' < 1$}{
$k'_i \leftarrow \mbox{DeltaSearch}(n'_i, \alpha'_i, \delta'_i)$

$\ot_i \leftarrow t'_{i(k'_i)}$}}

\Output{$\ot_i$}
\end{algorithm}

\vspace{-1cm}

\subsection{H-NP umbrella algorithm for three classes}\label{sec:3_class}

Since our COVID-19 data groups patients into three severity categories, we introduce our H-NP umbrella algorithm for $\cI = 3$. In this case, there are
two {under-classification} errors $R_{1\star}(\phi) = P_1(\phi(X) \in \{2,3\})$ and $R_{2\star}(\phi) = P_2(\phi(X) = 3)$, which need to be controlled at prespecified levels $\alpha_1, \alpha_2$ with tolerance levels $\delta_1, \delta_2$, respectively. In addition, we wish to minimize  the weighted sum of errors
\vspace{-0.7cm}
\begin{multline}\label{eq:R_c_3}
   R^c(\phi) = \IP(\phi(X) \neq Y) - \pi_1 R_{1\star}(\phi) - \pi_2 R_{2\star}(\phi)\\ = \pi_2 P_2(\phi(X) = 1) + \pi_3 [P_3(\phi(X) = 1) + P_3(\phi(X) = 2)]\,. \vspace{-0.7cm}
\end{multline}
When $\cI = 3$, our H-NP umbrella algorithm relies on two scoring functions $T_1, T_2: \mathcal{X} \rightarrow \R$, which can be constructed by Eq~\eqref{eq:decision} using the estimates $\widehat{\IP}(Y = i\mid X)$ from any scoring-type classification method:
\vspace{-0.7cm}
\begin{equation}\label{eq:scores}
T_1 (X) = \widehat{\IP}(Y = 1 \mid X) \qmq{and} T_2 (X)  = \frac{\widehat{\IP}(Y = 2\mid X)}{ \widehat{\IP}(Y = 3\mid X)}\,.  
\vspace{-0.7cm}
\end{equation}
The H-NP classifier then takes the form
\vspace{-0.6cm}
\begin{equation}\label{eq:decision_simple}
    \hatphi(X)  = \begin{cases}  1\,, &T_1(X)  \geq t_1\,; \vspace{-0.3cm}\\
    2\,, &T_2(X)  \geq t_2 \qmq{and} T_1(X) < t_1\,;\vspace{-0.3cm}\\
     3\,, &\mbox{otherwise}\,.
    \end{cases}
    \vspace{-0.4cm}
\end{equation}
Here $T_2$ determines whether an observation belongs to class 2 or class 3, with a larger value indicating a higher probability for class 2. 
Applying Algorithm~\ref{alg:upper}, we can find $\ot_1$ such that any threshold $t_1 \leq \ot_1$ will satisfy the high probability control on the first {under-classification} error, that is $\IP(R_{1\star}(\hatphi)> \alpha_1) =  \IP\left(P_1\left[T_1 (X)< t_1 \mid \ot_1 \right] >  \alpha_1\right) \leq \delta_1$. Recall that the computation of $\ot_2$ (and consequently $t_2$) depends on the choice of $t_1$.
Given a fixed $t_1$, the high probability control on the second {under-classification} errors is $\IP(R_{2\star} (\hatphi)> \alpha_2) = \IP\left(P_2\left[T_1 (X)< t_1, T_2 (X)< t_2 \mid \ot_2 \right] >  \alpha_2\right) \leq \delta_2$, where $\ot_2$ is computed by Algorithm~\ref{alg:upper} so that any $t_2 \leq \ot_2$ satisfies the constraint. 

 
 \begin{figure}[!ht]
 \centering
     \begin{subfigure}[b]{0.45\textwidth}
    \centering
    \includegraphics[width=\textwidth]{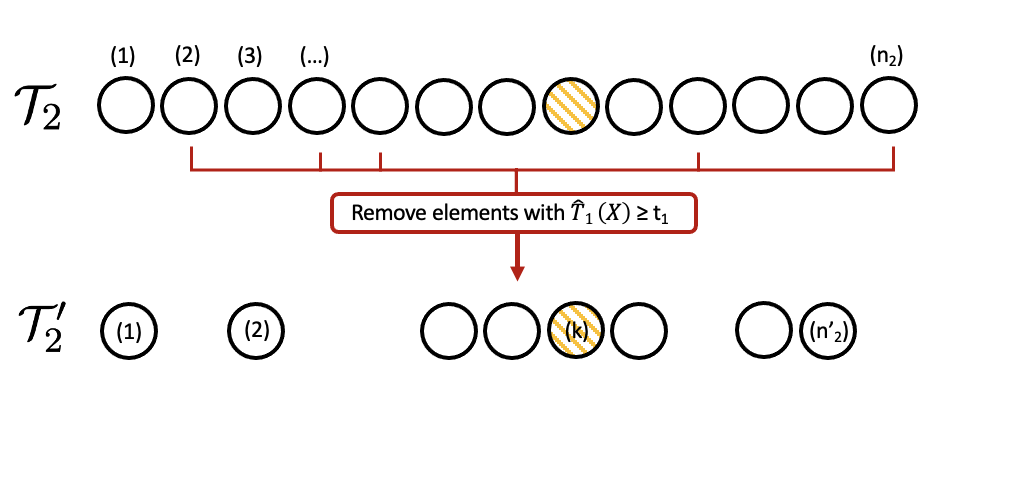}
    \caption{\footnotesize The construction of $\cT'_{2}$ with a fixed $t_1$. }\label{fig:t1_effect}
     \end{subfigure}
     \begin{subfigure}[b]{0.45\textwidth}
         \centering
    \includegraphics[width=\textwidth]{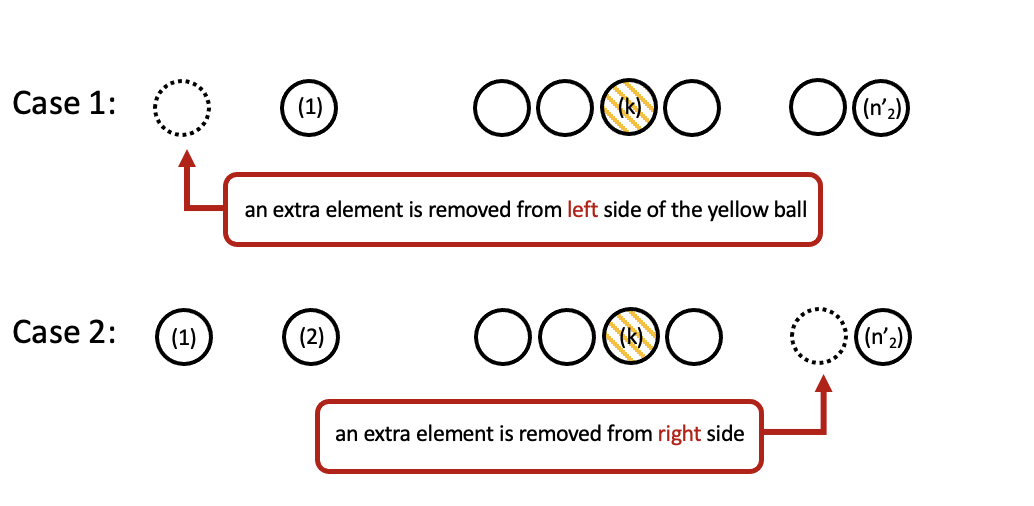}
   \caption{\footnotesize The effect of decreasing $t_1$.}\label{fig:t1_effect_2}
     \end{subfigure}
    \caption{\footnotesize The influence of $t_1$ on the error $P_3\left(\hatY = 2\right)$.}
\end{figure}

\vspace{-0.5cm}
 
The interaction between $t_1$ and $t_2$ comes into play when minimizing the remaining errors in $R^c(\hatphi)$. First note that using Eq~\eqref{eq:R_c_3} and~\eqref{eq:decision_simple}, the other types of errors in $R^c(\hatphi)$ are
\vspace{-0.5cm}
\begin{align}
    & P_2\left(\hatphi(X) = 1 \right)   = P_2\left(T_1(X) \geq t_1\right)\,,\, P_3\left(\hatphi(X) = 1\right)   = P_3\left(T_1(X) \geq t_1\right)\,,\label{eq:risk2}\\
 & P_3\left(\hatphi(X) = 2\right)   = P_3\left(T_1(X) < t_1, T_2(X) \geq t_2\right)\,. \notag 
\end{align}
To simplify the notation, let $\hatY$ denote $\hatphi(X)$ in the following discussion. For a fixed $t_1$, decreasing $t_2$ leads to an increase in $P_3(\hatY = 2)$ and has no effect on the other errors in \eqref{eq:risk2}, which means that $t_2 = \ot_2$ minimizes $R^c(\hatphi)$. However, the selection of $t_1$ is not as straightforward as $t_2$. Figure~\ref{fig:t1_effect} illustrates how the set $\cT'_{2} = \{ T_2(X) \mid X \in \cS_{2t}, T_1(X)< t_1\}$ (as appeared in Theorem~\ref{prop:ti}) is constructed for a given $t_1$, where the elements are ordered by their $T_2$ values. Clearly, more elements are removed from  $\cT'_{2}$ as $t_1$ decreases, leading to a smaller $n_2'$. Consider an element in the set $\cT'_{2}$ which has rank $k$ in the ordered list (colored yellow in Figure~\ref{fig:t1_effect}). Then $k, n_2', \alpha_2'$, and consequently $v(k,n_2',\alpha_2')$, will all be affected by decreasing $t_1$, but the change is not monotonic as shown in Figure~\ref{fig:t1_effect_2}. Decreasing $t_1$ could remove elements (dashed circles in Figure~\ref{fig:t1_effect_2}) either to the left side (case 1) or right side (case 2) of the yellow element, depending on the values of the scores $T_1$. In case 1, $v(k,n_2',\alpha_2')$ decreases, resulting in a larger $\ot_2$ and a  smaller $P_3(\hatY= 2)$ error, whereas the reverse can happen in case 2. The details of 
how $v(k,n_2',\alpha_2')$ changes can be found in Supplementary Section~B.3, with additional simulations in Supplementary Figure~S13. In view of the above, minimizing the empirical error $\Tilde{R^c}$ requires a grid search over $t_1$, for which we use the set $\cT_1 = \{ T_1(X) \mid X \in \cS_{1t}\}$, and the overall algorithm for finding the optimal thresholds and the resulting classifier is described in Algorithm~\ref{alg:opt_class}, which we name as the H-NP umbrella algorithm. The algorithm for the general case with $\cI>3$ can be found in Supplementary Section~E.


\begin{algorithm}[htb!]
\caption{H-NP umbrella algorithm for $\cI = 3$ }\label{alg:opt_class}
\SetKw{KwBy}{by}
\SetKwInOut{Input}{Input}\SetKwInOut{Output}{Output}

\SetAlgoLined

\Input{ Sample: $\cS = \cS_1 \cup \cS_2 \cup \cS_3$; levels: $(\alpha_1, \alpha_2)$;  tolerances: $(\delta_1,\delta_2)$; grid set: $A_1$ (e.g., $\cT_1$).}

$\hatpi_2 = |\cS_2|/|\cS|$; $\hatpi_3 = |\cS_3|/|\cS|$

$\cS_{1s},\cS_{1t},  \leftarrow $ Random split $\cS_1$; $\cS_{2s},\cS_{2t}, \cS_{2e}  \leftarrow $ Random split $\cS_2$;  $\cS_{3s}, \cS_{3e}  \leftarrow $ Random split $\cS_3$

$\cS_s = \cS_{1s} \cup \cS_{2s} \cup \cS_{3s}$

$T_1, T_2 \leftarrow \mbox{A  base classification method}(\cS_s)$ \tcp*{c.f. Eq~\eqref{eq:scores}}

$\ot_1 \leftarrow \mbox{UpperBound}(\cS_{1t}, \alpha_1, \delta_1, (T_1), \mbox{NULL}) $ \tcp*{i.e., Algorithm~\ref{alg:upper}}

$\Tilde{R^c} = 1 $

\For{$t_1 \in A_1 \cap (-\infty, \ot_1]$}{

$t_2 \leftarrow \mbox{UpperBound}(\cS_{2t}, \alpha_2, \delta_2, (T_1, T_2), (t_1) )$ 

$\hatphi \leftarrow$ a classifier with respect to $t_1, t_2$

$e_{21} = \sum_{X \in \cS_{2e}} \1\{ \hatphi(X) = 1 \}/|\cS_{2e}|$, $e_{3} = \sum_{X \in \cS_{3e}} \1\{ \hatphi(X) \in \{1,2\}\} \}/|\cS_{3e}|$ 

$\Tilde{R^c}_{\mathrm{new}} = \hatpi_2 e_{21} + \hatpi_3e_{3}$

\If{$\Tilde{R^c}_{\mathrm{new}}< \Tilde{R^c}$}{
$\Tilde{R^c} \leftarrow \Tilde{R^c}_{\mathrm{new}}$, $\hatphi^* \leftarrow \hatphi$ }
}

\Output{$\hatphi^*$}
\end{algorithm}

\subsection{Simulation studies}\label{sec:sim}

We first examine the validity of our H-NP umbrella algorithm using simulated data from a setting denoted \textbf{T1.1}, where $\cI=3$, and the feature vectors in class $i$ are generated as ${(X^i)}^\top \sim N(\mu_i, I)$, where $\mu_1 = (0, -1)^\top$, $\mu_2 = (-1, 1)^\top$, $\mu_3 = (1, 0)^\top$ and $I$ is the $2\times2$ identity matrix.
For each simulated dataset, we generate the feature vectors and labels with 500 observations in each of the three classes. The observations are randomly separated into parts for score training, threshold selection and computing empirical errors: $\cS_1$ is split into $50\%$, $50\%$ for $\cS_{1s}$, $\cS_{1t}$;  $\cS_2$ is split into $45\%$, $50\%$ and $5\%$ for $\cS_{2s}$, $\cS_{2t}$ and $\cS_{2e}$; $\cS_3$ is split into $95\%$, $5\%$ for $\cS_{3s}$, $\cS_{3e}$, respectively.  All the results in this section are based on $1{,}000$ repetitions from a given setting. We set $\alpha_1 = \alpha_2 = 0.05$ and $\delta_1 = \delta_2 = 0.05$.
To approximate and evaluate the true population errors $R_{1\star}$, $R_{2\star}$  and  $R^c$, we additionally generate $20{,}000$  observations for each class and refer to them as the test set. 


First, we demonstrate that Algorithm~\ref{alg:opt_class} outputs an H-NP classifier with the desired high probability controls. More specifically, we show that  any $t_1 \leq \ot_1$  and $t_2 = \ot_2$ ($\ot_1$, $\ot_2$ are computed by Algorithm~\ref{alg:upper})  will lead to a valid threshold pair $(t_1, t_2)$ satisfying $\IP(R_{1\star}(\hatphi) > \alpha_1 ) \leq \delta_1$ and $\IP(R_{2\star}(\hatphi) > \alpha_2 ) \leq \delta_2$, where $R_{1\star}$ and $R_{2\star}$ are approximated using the test set in each round of simulation. Here,  we use multinomial logistic regression to construct the scoring functions $T_1$ and $T_2$, the inputs of Algorithm~\ref{alg:opt_class}. Figure~\ref{fig:sim_error} displays the boxplots of various approximate errors with $t_1$ chosen as the  $k$-th  largest element in $\cT_1\cap (-\infty, \ot_1]$ as $k$ changes. In Figure \ref{fig:sim_error_1} and \ref{fig:sim_error_2}, where the blue diamonds mark the $95\%$ quantiles, we can see that the violation rate of the required error bounds (red dashed lines, representing $\alpha_1$ and $\alpha_2$) is about $5\%$ or less, suggesting our procedure provides effective controls on the errors of concerns. In this case, in most simulation rounds, $\ot_1$ minimizes the empirical error $\tR^c$ computed on $\cS_{2e}$ and $\cS_{3e}$, and $t_1 = \ot_1$ is chosen as the optimal threshold by Algorithm~\ref{alg:opt_class} in the final classifier. We can see this coincide with Figure~\ref{fig:sim_error_c}, which shows that the largest element in $\cT_1\cap (-\infty, \ot_1]$ (i.e., $\ot_1$) minimizes the approximate error $R^c$ on the test set. 
 We note here that the results from other splitting ratios can be found in Supplementary Section~C.2, where we observe that once the sample size for threshold selection reaches about twice the minimum sample size requirement, there are little observable differences in the results. In Supplementary Section~C.3, we also compare with variations in computing the scoring functions to examine the effect of score normalization and calibration, showing that our current scoring functions are ideal for our purpose.

\begin{figure}[!ht]
\centering
     \begin{subfigure}[b]{0.3\textwidth}
    \centering
    \includegraphics[width=\textwidth]{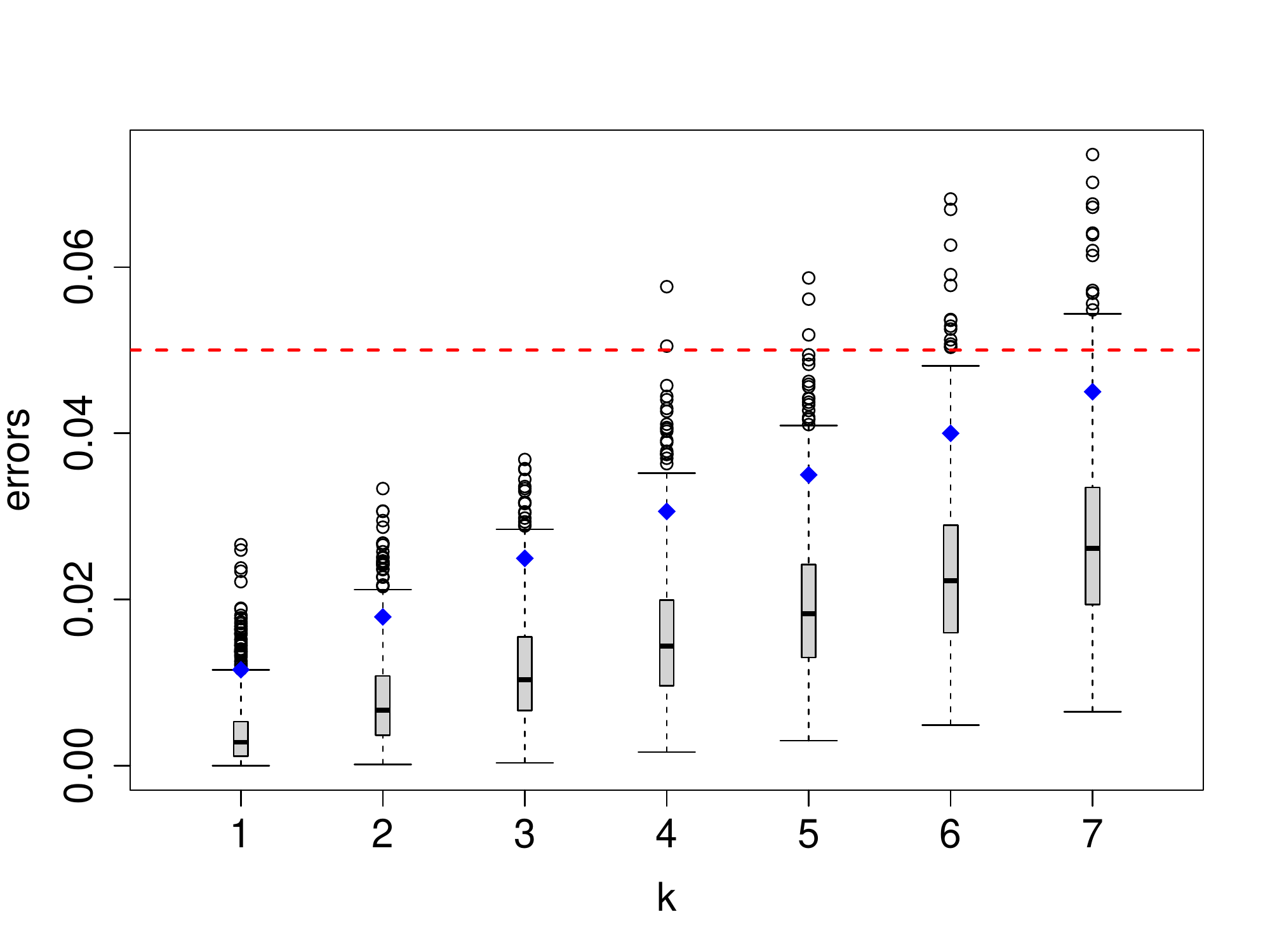}
    \caption{$R_{1\star}$}\label{fig:sim_error_1}
     \end{subfigure}
     \hspace{-0.3cm}
     \begin{subfigure}[b]{0.3\textwidth}
         \centering
    \includegraphics[width=\textwidth]{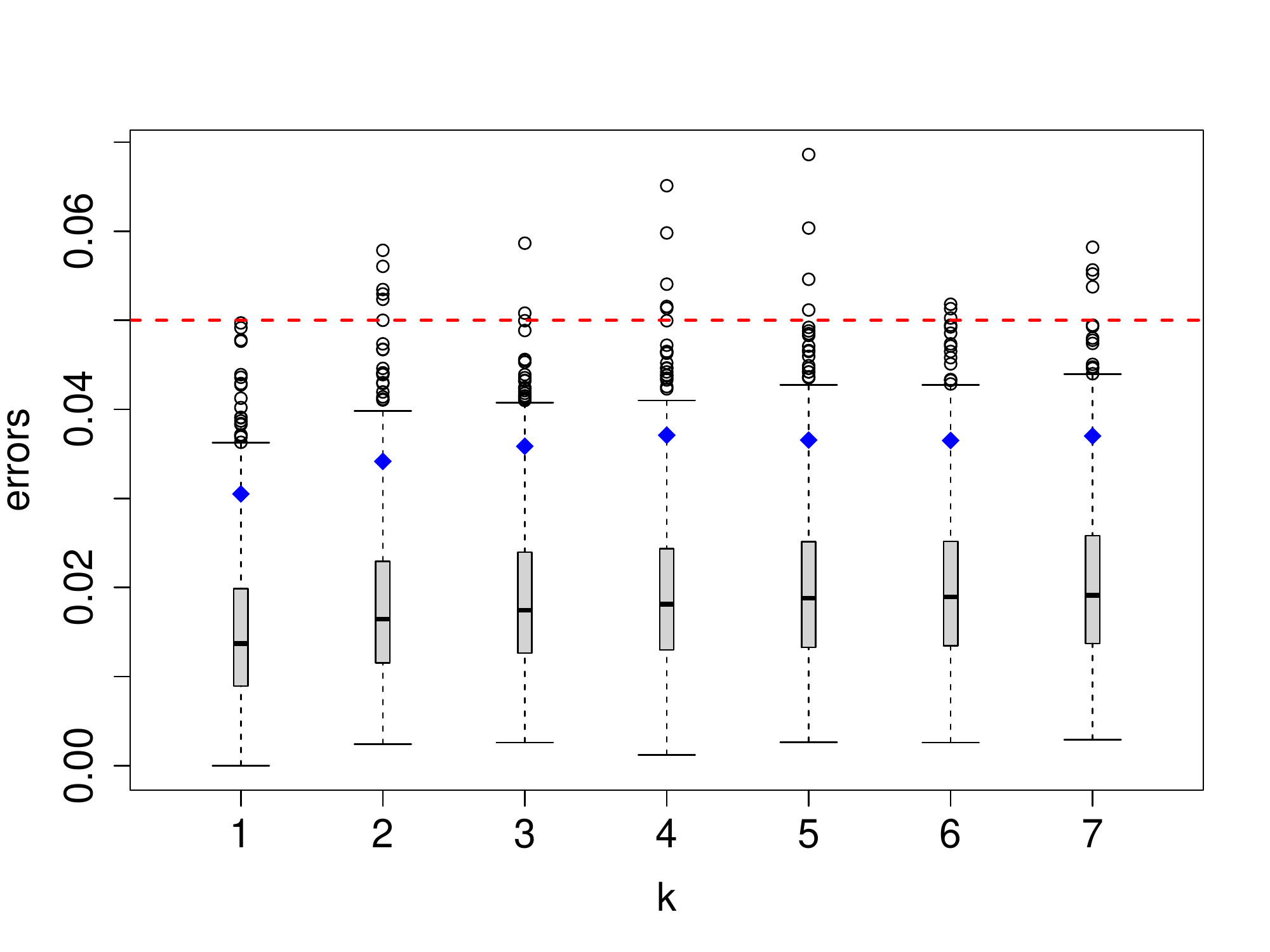} 
   \caption{$R_{2\star}$}\label{fig:sim_error_2}
     \end{subfigure}
     \hspace{-0.3cm}
      \begin{subfigure}[b]{0.3\textwidth}
         \centering
    \includegraphics[width=\textwidth]{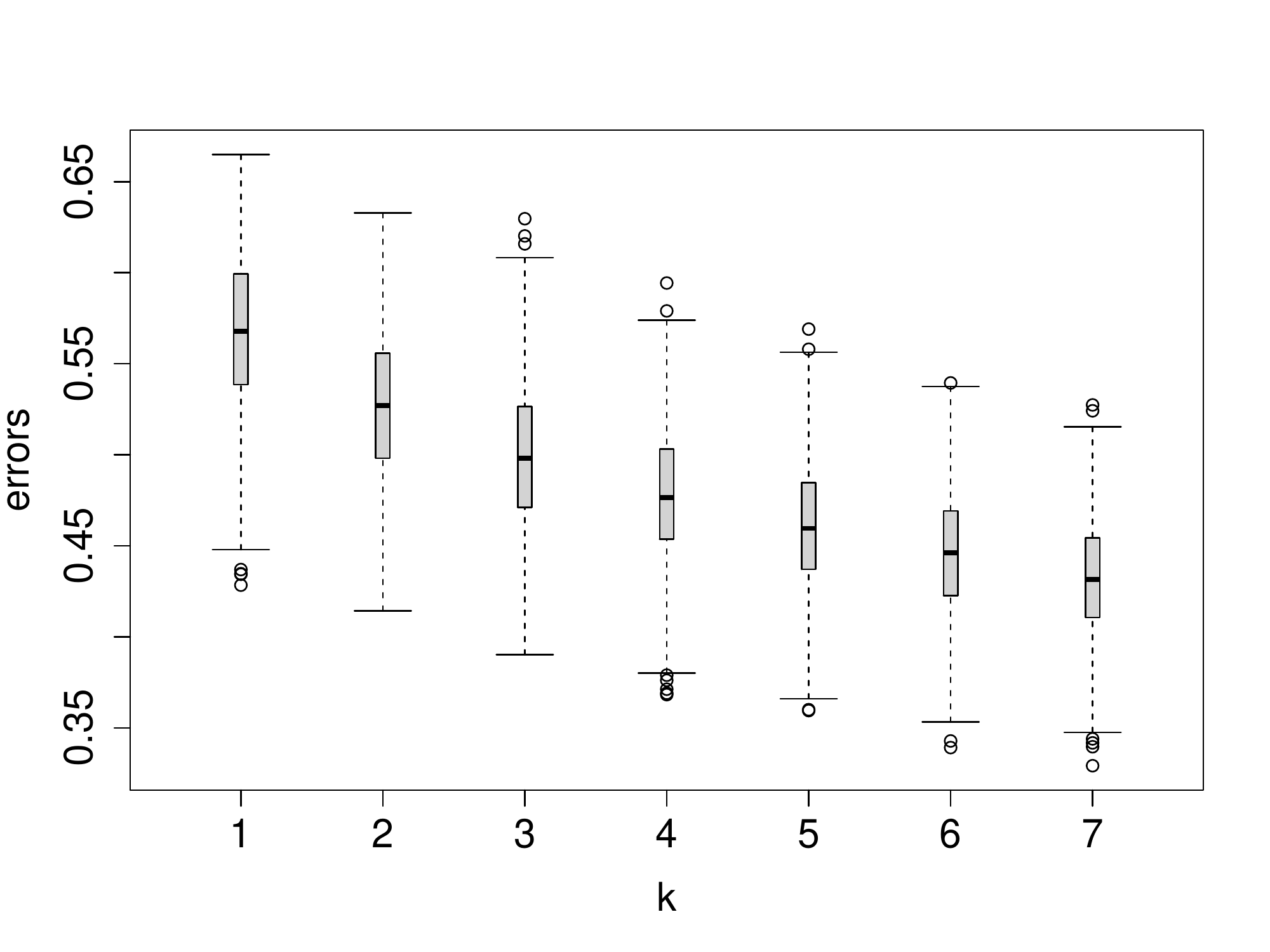}
   \caption{$R^c$}\label{fig:sim_error_c}
     \end{subfigure}
     \caption{\footnotesize The distribution of approximate errors on the test set when $t_1$ is the  $k$-th largest element in $\cT_1\cap (-\infty, \ot_1)$. The $95\%$ quantiles of $R_{1\star}$ and $R_{2\star}$ are marked by blue diamonds.  The target control levels for  $R_{1\star}(\hatphi)$  and  $R_{2\star}(\hatphi)$ ($\alpha_1 = \alpha_2 = 0.05$) are plotted as red dashed lines.  }\label{fig:sim_error}
\end{figure}

Next, we check whether indeed Theorem~\ref{prop:ti} gives a better upper bound on $t_2$ than Proposition~\ref{prop:t1} for overall error minimization. Recall the two upper bounds in Eq~\eqref{eq:control_1} ($t_{2(k_2)}$) and Eq~\eqref{eq:threshold_2} ($\ot_2$). For each base classification algorithm (e.g., logistic regression), we set $t_1=\ot_1$ and $t_2$ equal to these two upper bounds respectively, resulting in two classifiers with different $t_2$ thresholds. 
We compare their performance by evaluating the approximate errors of $R_{2\star}(\hatphi)$ and $P_3(\hat{Y} = 2)$ since, as discussed in Section~\ref{sec:3_class}, the threshold $t_2$ only influences these two errors for a fixed $t_1$. Figure~\ref{fig:sim_t2} shows the distributions of the errors and also their averages for three different base classification algorithms. Under each algorithm, both choices of $t_2$ effectively control $R_{2\star}(\hatphi)$, but the upper bound from Proposition~\ref{prop:t1} is overly conservative compared with that of Theorem~\ref{prop:ti}, which results in a notable increase in $P_3(\hat{Y} = 2)$. This is undesirable since $P_3(\hat{Y} = 2)$ is one component in $R^c(\hatphi)$, and the goal is to minimize $R^c(\hatphi)$ under appropriate error controls.

 \begin{figure}
 \centering
 \begin{minipage}{0.5\linewidth}
		\includegraphics[width=6cm]{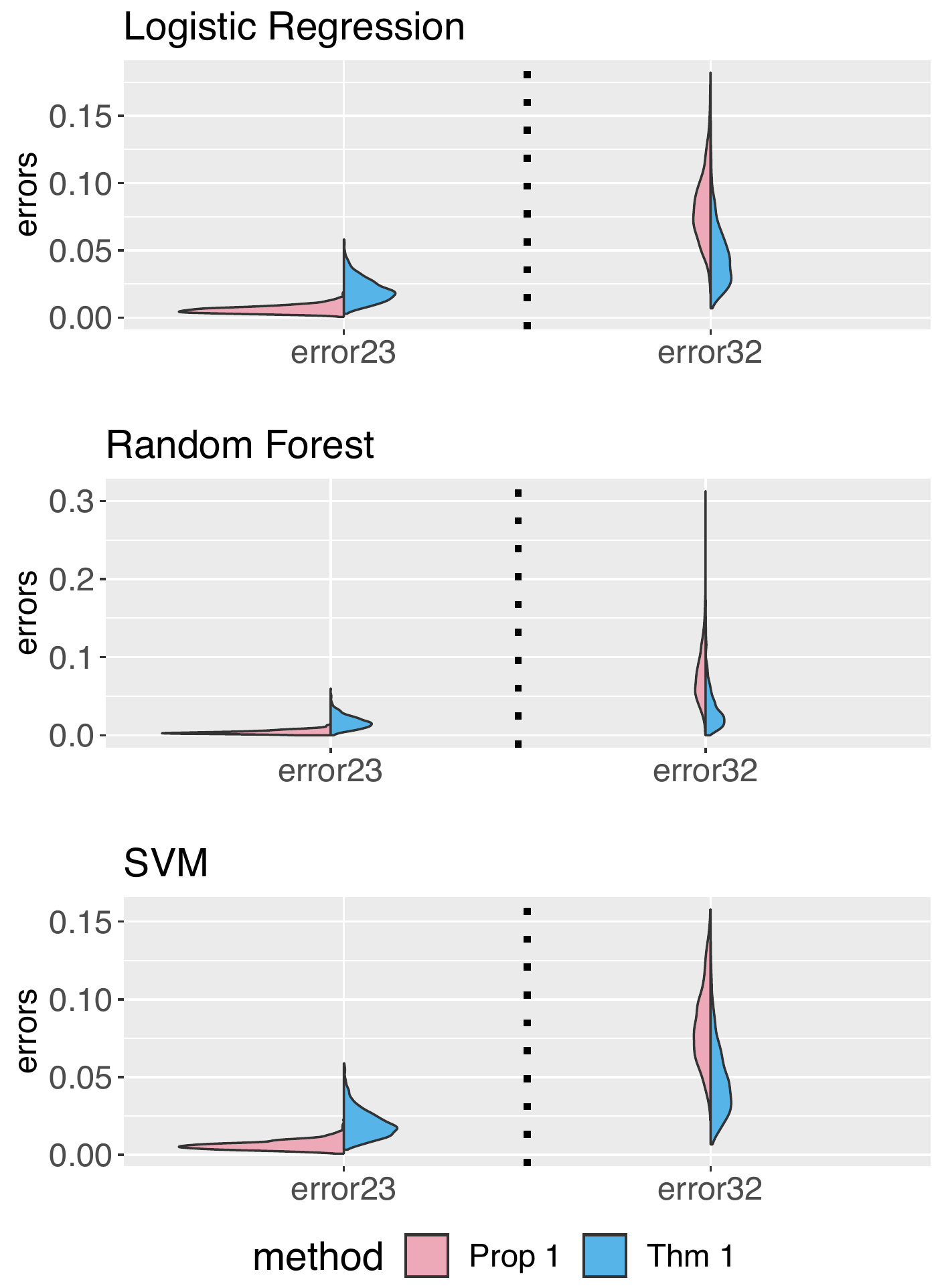}
	\end{minipage}
	\hspace{-1cm}
	\begin{minipage}{0.35\linewidth}
		\centering
			\footnotesize
		\begin{tabular}{c|cc}
			\hline\hline 
			 \multicolumn{3}{c}{Logistic Regression} \\\hline
			    Method & Error23 & Error32 \\\hline
			  Prop 1 & 0.006  & 0.082 \\
			    Thm 1 & 0.020   & 0.046\\\hline\hline
			 \multicolumn{3}{c}{Random Forest} \\\hline
			   Method & Error23 & Error32 \\\hline
			    Prop 1 & 0.004  &  0.077\\
			   Thm 1 & 0.017   & 0.033 \\\hline\hline
			 \multicolumn{3}{c}{SVM} \\\hline
			   Method & Error23 & Error32 \\\hline
			   Prop 1 &  0.006 &  0.083\\
			   Thm 1  &  0.020  & 0.047 \\\hline
		\end{tabular}
	\end{minipage}\hfill
	\caption{ \footnotesize The distribution and averages of approximate errors on the test set under the setting \textbf{T1.1}. ``error23" and ``error32" correspond to $R_{2\star}(\hatphi)$ and $P_3(\hat{Y}= 2)$, respectively.}\label{fig:sim_t2}
 \end{figure}

Now we consider comparing our H-NP classifier against alternative approaches. We construct an example of ``approximate" error control using the empirical ROC curve approach. In this case, each class of observations is split into two parts: one for training the base classification method, the other for threshold selection using the ROC curve. Under the setting \textbf{T1.1}, using similar splitting ratios as before,  we separate $\cS_{i}$ into $50\%$ and $50\%$ for $\cS_{is}$  and $\cS_{it}$ for $i = 1,2,3$. The same test set is used. We re-compute the scoring functions ($T_1$ and $T_2$) corresponding to the new split. $t_1$ is selected using the ROC curve generated by $T_1$ aiming to distinguish between class 1 (samples in $\cS_{1t}$) and class $2'$ (samples in $\cS_{2t}\cup\cS_{3t}$) merging classes 2 and 3, with specificity calculated as the rate of misclassifying a class-1 observation into class $2'$. Similarly, $t_2$ is selected using $T_2$ dividing samples in $\cS_{2t}\cup\cS_{3t}$ into class 2 and class 3, with specificity defined as the rate of misclassifying a class-2 observation into class 3. More specifically, in Eq~\eqref{eq:decision_simple} we use $t_1 = \sup\left\{t:\frac{\sum_{X \in \cS_{1t}}\bone\{T_1(X) < t\}}{|\cS_{1t}|}\leq \alpha_1\right\}$
and  $t_2 = \sup\left\{t:\frac{\sum_{X \in \cS_{2t}}\bone\{T_2(X) < t\}}{|\cS_{2t}|}\leq \alpha_2\right\}$ to obtain the classifier for the ROC curve approach. 

The comparison between our H-NP classifier  and the ROC curve approach  is summarized in Figure~\ref{fig:sim_roc}. Recalling $\alpha_i$ and $\delta_i$ are both $0.05$, we mark the $95\%$ quantiles of the {under-classification} errors by solid black lines and the target error control levels  by dotted red lines. First we observe that the $95\%$ quantiles of $R_{1\star}$ using the ROC curve approach well exceed the target level control, with their averages centering around the target. We also see the influence of $t_1$ on the $R_{2\star}$ -- without suitably adjusting $t_2$ based on $t_1$, the control on $R_{2\star}(\hatphi)$ in the ROC curve approach is overly conservative despite it being an approximate error control method, which in turn leads to inflation in error $P_3(\hat{Y}= 2)$. In view of this, we further consider a simulation setting where the influence of $t_1$ on $t_2$ is smaller. The setting \textbf{T2.1} moves samples in class 1 further away from classes 2 and 3 by having $\mu_1 = (0, -3)^\top$, while the other parts remain the same as in the setting \textbf{T1.1}. $\alpha_i, \delta_i$ are still $0.05$. As shown in Figure~\ref{fig:sim_roc_1}, the ROC curve approach does not provide the required level of control for $R_{1\star}$ or $R_{2\star}$.

In Supplementary Sections~C.4-C.6, we include more comparisons with alternative methods with different overall approaches to the problem, including weight-adjusted classification, cost-sensitive learning, and ordinal regression, and show that our H-NP framework is more ideal for our problem of interest.  

\vspace{-0.5cm}

 \begin{figure}
 \centering
 \begin{minipage}{0.5\linewidth}
		\includegraphics[width=6.5cm]{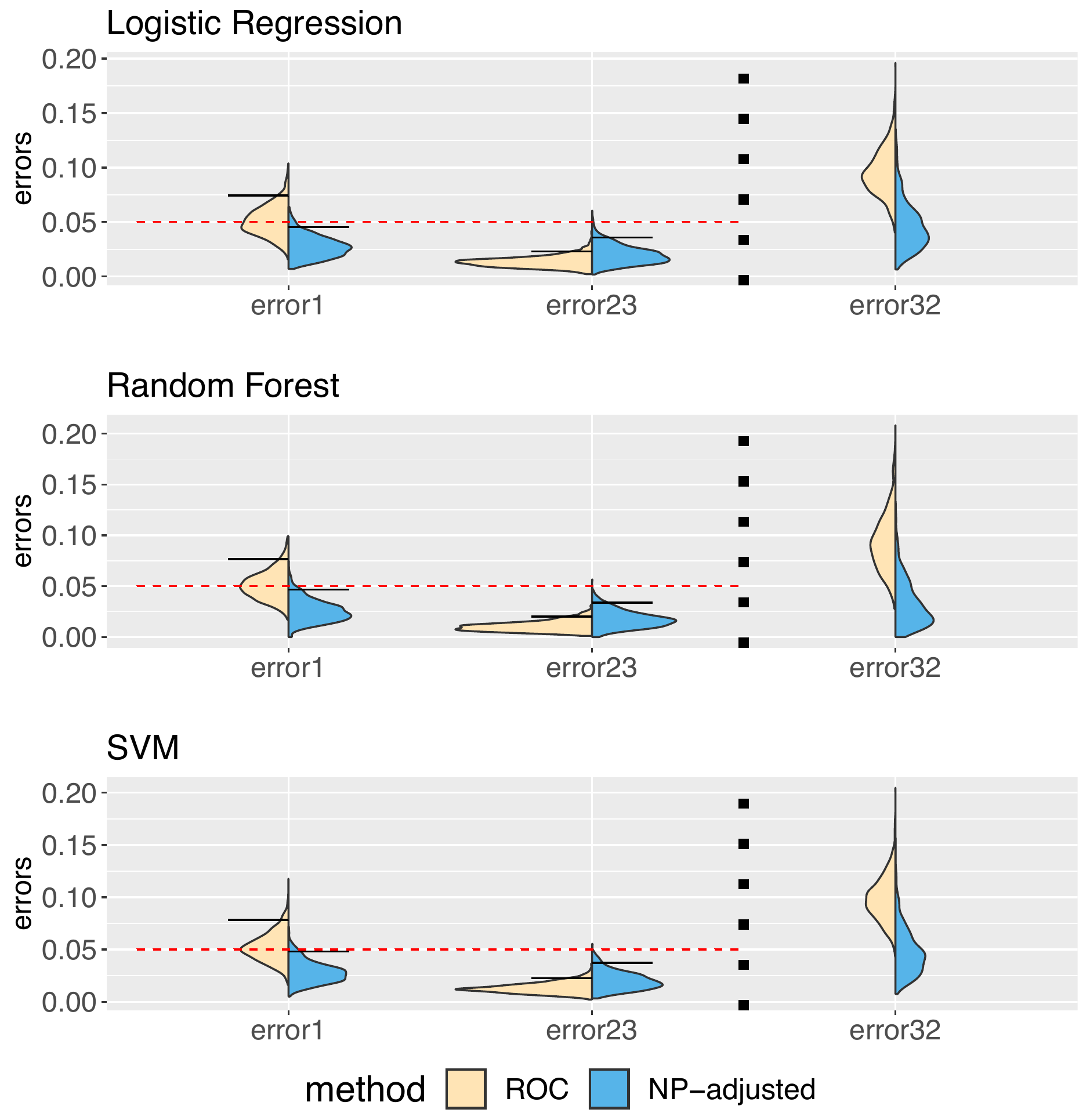}
	\end{minipage}
	\begin{minipage}{0.45\linewidth}
		\centering
			\footnotesize
		\begin{tabular}{c|ccc}
			\hline\hline 
			 \multicolumn{4}{c}{Logistic Regression} \\\hline
			    & Error1 & Error23 & Error32  \\
			    Method &\multicolumn{2}{c}{($95\%$ quantile)} &(mean) \\\hline
			  ROC & 0.074  & 0.023 & 0.096\\
			   H-NP & 0.045 &  0.036 & 0.047\\\hline\hline
			 \multicolumn{4}{c}{Random Forest} \\\hline
			    & Error1 & Error23 & Error32  \\
			    Method  &\multicolumn{2}{c}{($95\%$ quantile)} &(mean) \\\hline
			  ROC & 0.077 &  0.020 & 0.093\\
			  H-NP& 0.047  & 0.034 & 0.032\\\hline\hline
			 \multicolumn{4}{c}{SVM} \\\hline
			    & Error1 & Error23 & Error32  \\
			    Method  &\multicolumn{2}{c}{($95\%$ quantile)} &(mean) \\\hline
			  ROC &  0.078  &  0.023 & 0.098 \\
			  H-NP&  0.048 & 0.037 & 0.047 \\\hline
		\end{tabular}
	\end{minipage}\hfill
	\caption{ \footnotesize The distributions of approximate errors on the test set under setting \textbf{T1.1}. ``error1", ``error23" and ``error32" correspond to  $R_{1\star}(\hatphi)$, $R_{2\star}(\hatphi)$ and $P_3(\hat{Y}= 2)$, respectively.}\label{fig:sim_roc}
 \end{figure}

\begin{figure}[h!]
\centering
 \includegraphics[width=15cm]{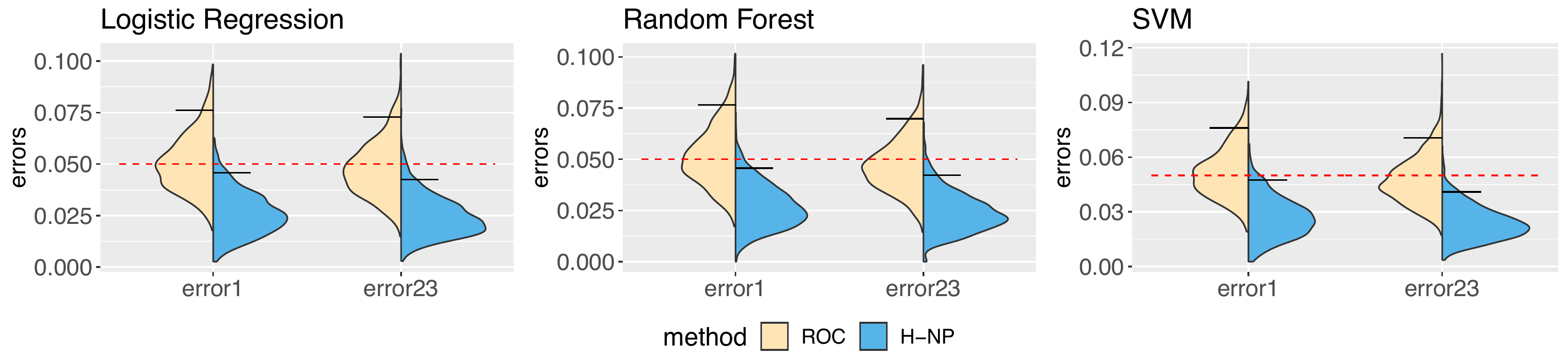}
 \caption{\footnotesize The distributions of approximate errors on the test set under setting \textbf{T2.1}. ``error1" and ``error23"  correspond to the errors $R_{1\star}(\hatphi)$ and $R_{2\star}(\hatphi)$, respectively.}\label{fig:sim_roc_1}
\end{figure}

\section{Application to COVID-19 severity classification}\label{sec:COVID}
\vspace{-0.1in}

\subsection{ScRNA-seq data and featurization}\label{sec:feature}

We integrate 20 publicly available scRNA-seq datasets 
 to form a total of $864$ COVID-19 patients with three severity levels marked as ``Severe/Critical" (318 patients), ``Mild/Moderate" (353 patients), and ``Healthy" (193 patients). The detail of each dataset and patient composition can be found in Supplementary Table~S1. The severe, moderate and healthy patients are labeled as class 1, 2 and 3, respectively. 

For each patient, PBMC scRNA-seq data is available in the form of a matrix recording the expression levels of genes in hundreds to thousands of cells. Following the workflow in \cite{lin2022scalable}, we first perform data integration including cell type annotation and batch effect removal, before selecting $3{,}000$ highly variable genes and constructing their pseudo-bulk expression profiles under each cell type, where each gene's expression is averaged across the cells of this type in every patient. The resulting processed data for each patient $j$ is a matrix $A^{(j)} \in \R^{n_g \times n_c}$, where $n_c=18$ is the number of cell types, and $n_g=3{,}000$ is the number of genes for analysis. More details of the integration process can be found in Supplementary Section~A. Supplementary Figure~S1 shows the distribution of the sparsity levels, i.e., the proportion of genes with zero values, under each cell type across all the patients. 
Several cell types, despite having a significant proportion of zeros, have varying sparsity across the three severity classes (Supplementary Figure~S3), suggesting their activity level might be informative for classification. Since age information is available (although in different forms, see Supplementary Table~S4) in most of the datasets we integrate, we include it as an additional clinical variable for classification.
The details of processing the age variable are deferred to Supplementary Section~A.

Since classical classification methods typically use feature vectors as input,  appropriate featurization that transforms the expression matrices into vectors is needed. We propose four ways of featurization that differ in their considerations of the following aspects.

\begin{itemize}
    \item As we observe the sparsity level in some cell types changes across the severity classes, we expect different treatments of zeros will influence the classification performance. Three approaches are proposed: 1) no special treatment (\ref{method:NS}); 2) remove individual zeros but keeping all cell types (\ref{method:FIL}); 3) remove cell types with significant amount of zeros across all three classes (\ref{method:NPCA} and \ref{method:FLL}).
    
    \item Dimension reduction is commonly used to project the information in a matrix onto a vector. We consider performing dimension reduction along different directions, namely row projections, which take combinations of genes (\ref{method:NPCA}), and column projections, which combine cell types with appropriate weights (\ref{method:FLL} and \ref{method:FIL}). We aim to compare choices of projection direction, so we focus on principal component analysis (PCA) as our dimension reduction method.

    \item We consider two approaches to generate the PCA loadings: 1) overall PCA loadings (\ref{method:NPCA} and \ref{method:FIL}), where we perform PCA on the whole data to output a loading vector for all patients; 2) patient-specific PCA loadings (\ref{method:FLL}), where PCA is performed  for each matrix $A^{(j)}$ to get an individual-specific loading vector.
\end{itemize}

The details of each featurization method are as follows.
\begin{enumerate}[label= M.\arabic*]
    \item\label{method:NS}  \textbf{Simple feature screening:} we consider each element $A^{(j)}_{uv}$ (gene $u$ under cell type $v$) as a possible feature for patient $j$ and use its standard deviation across all patients, denoted as $SD_{uv}$, to
    screen the features. Elements that hardly vary across the patients are likely to have a low discriminative power for classification.
    Let $SD_{(i)}$ be the $i$-th largest element in $\{SD_{uv}\mid u \in [n_g], v \in [n_c]\}$. The feature vector for each patient consists of the entries in $\{ A^{(j)}_{uv} \mid SD_{uv} \geq SD_{(n_f)} \}$, where $n_f$ is the number of features desired and set to $3{,}000$. 
    
    \item\label{method:NPCA}  \textbf{Overall gene combination:} removing cell types with mostly zero expression values across all patients (details in Supplementary Section~A), we select 17 cell types   to construct $\Tilde{A}^{(j)} \in \R^{n_g \times 17}$ that only preserves columns in $A^{(j)}$ corresponding to the selected cell types. Then, $\Tilde{A}^{(1)},\ldots, \Tilde{A}^{(N)}$  are concatenated column-wise to get $\Tilde{A}^{\mathrm{all}} \in \R^{n_g \times (N \times 17)}$, where $N=864$. Let $\Tilde{w} \in \R^{n_g\times 1}$ denote the first principle component loadings of $(\Tilde{A}^{\mathrm{all}})^\top$, and the feature vector for patient $j$ is given by $X_j= \Tilde{w}^\top \Tilde{A}^{(j)}$. 

     \item\label{method:FLL}\textbf{Individual-specific cell type combination:} for patient $j$, the loading vector $\Tilde{w}_j \in \R^{1 \times 17}$ is taken as the absolute values of first principle component loadings for $\Tilde{A}^{(j)}$, the matrix with selected 17 cell types in \ref{method:NPCA} (details in Supplementary Section~A).   The principle component loading vector $\Tilde{w}_j$ that produces $X_j= (\Tilde{A}^{(j)} \Tilde{w}_j)^\top$ is patient-specific, intending to reflect different cell type compositions in different individuals.  
     
    
    \item\label{method:FIL}    \textbf{Common cell type combination:} we compute an expression matrix $\overline{A}$ averaged over all patients defined as   
    \vspace{-1cm}
    $$\overline{A}_{uv} = \frac{\sum_{j \in [N]} A^{(j)}_{uv}}{|\{ j \in [N] \mid A^{(j)}_{uv} \neq 0\}|}\,, \vspace{-0.5cm}$$
    where $|\cdot|$ is the cardinality function. Let $w \in \R^{n_c \times 1}$ denote the first principle component loadings of $\overline{A}$, then the feature vector for the $j$-th patient is $X_j= (A^{(j)} w)^\top$. 

\end{enumerate}

We next evaluate the performance of these featurizations when applied as input to different base classification methods for H-NP classification.

\vspace{-0.4cm}

\subsection{Results of H-NP classification}\label{sec:result}

After obtaining the feature vectors and applying a suitable base classification method, we apply Algorithm~\ref{alg:opt_class} to  control
the {under-classification} errors. Recall that $Y=1, 2, 3$ represent the severe, moderate and healthy categories, respectively, and the goal is to control $R_{1\star}(\hatphi)$ and $R_{2\star}(\hatphi)$. In this section, we evaluate the performance of the H-NP classifier applied to each combination of featurization method in Section~\ref{sec:feature} and base classification method (logistic regression, random forest, SVM (linear)), which is used to train the scores ($T_1$ and $T_2$). In each class, we leave out $30\%$ of the data as the test set and split the rest $70\%$ as follows for training the H-NP classifier: $35\%$ and $35\%$ of $\cS_1$ form $\cS_{1s}$ and $\cS_{1t}$;  $35\%$, $25\%$ and $10\%$ of $\cS_2$ form $\cS_{2s}$, $\cS_{2t}$ and $\cS_{1e}$; $35\%$ and $35\%$ of $\cS_3$ form $\cS_{3s}$ and $\cS_{3e}$.
For each combination of featurization and base classification method, we perform random splitting of the observations for 50 times to produce the results in this section.

In Figure~\ref{fig:distribution}, the yellow halves of the violin plots show the distributions of different approximate errors from the classical classification methods; Supplementary Table~S7 records the averages of these errors. In all the cases, the average of the approximate $R_{1\star}$ error is greater than $20\%$, in many cases greater than $40\%$. On the other hand, the approximate $R_{2\star}$ error under the classical paradigm is already relatively low, with the averages around $10\%$. Under the H-NP paradigm, we set  $\alpha_1,\alpha_2 = 0.2$ and $\delta_1,\delta_2 = 0.2$, i.e., we want to control each {under-classification} error under $20\%$ at a $20\%$ tolerance level.

\begin{figure}[!ht]
\centering
    \begin{subfigure}[b]{0.44\textwidth}
    \centering
    \includegraphics[width=\textwidth]{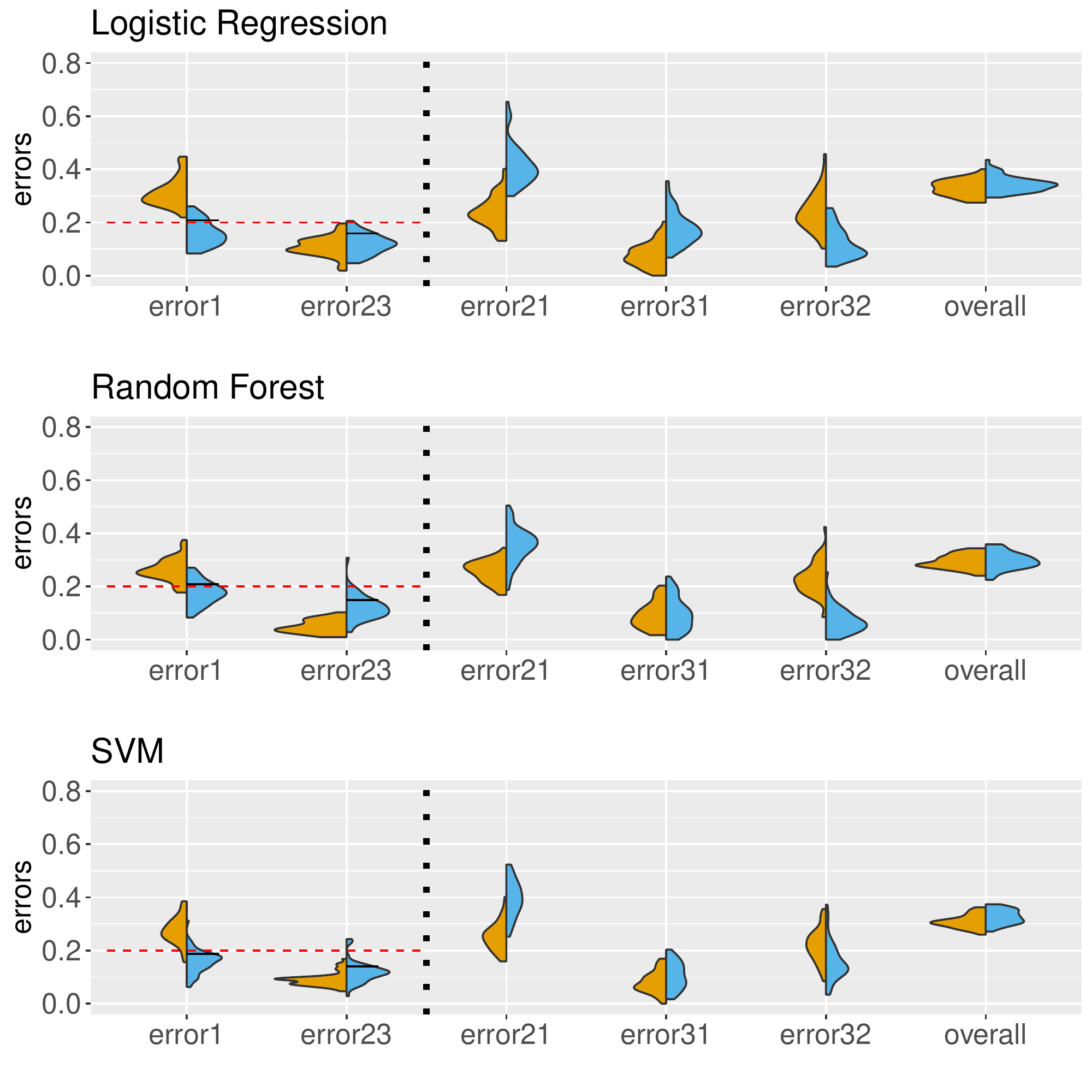}
    \caption{\ref{method:NS}}\label{fig:distribution_NS_1}
     \end{subfigure}
    \begin{subfigure}[b]{0.44\textwidth}
    \centering
    \includegraphics[width=\textwidth]{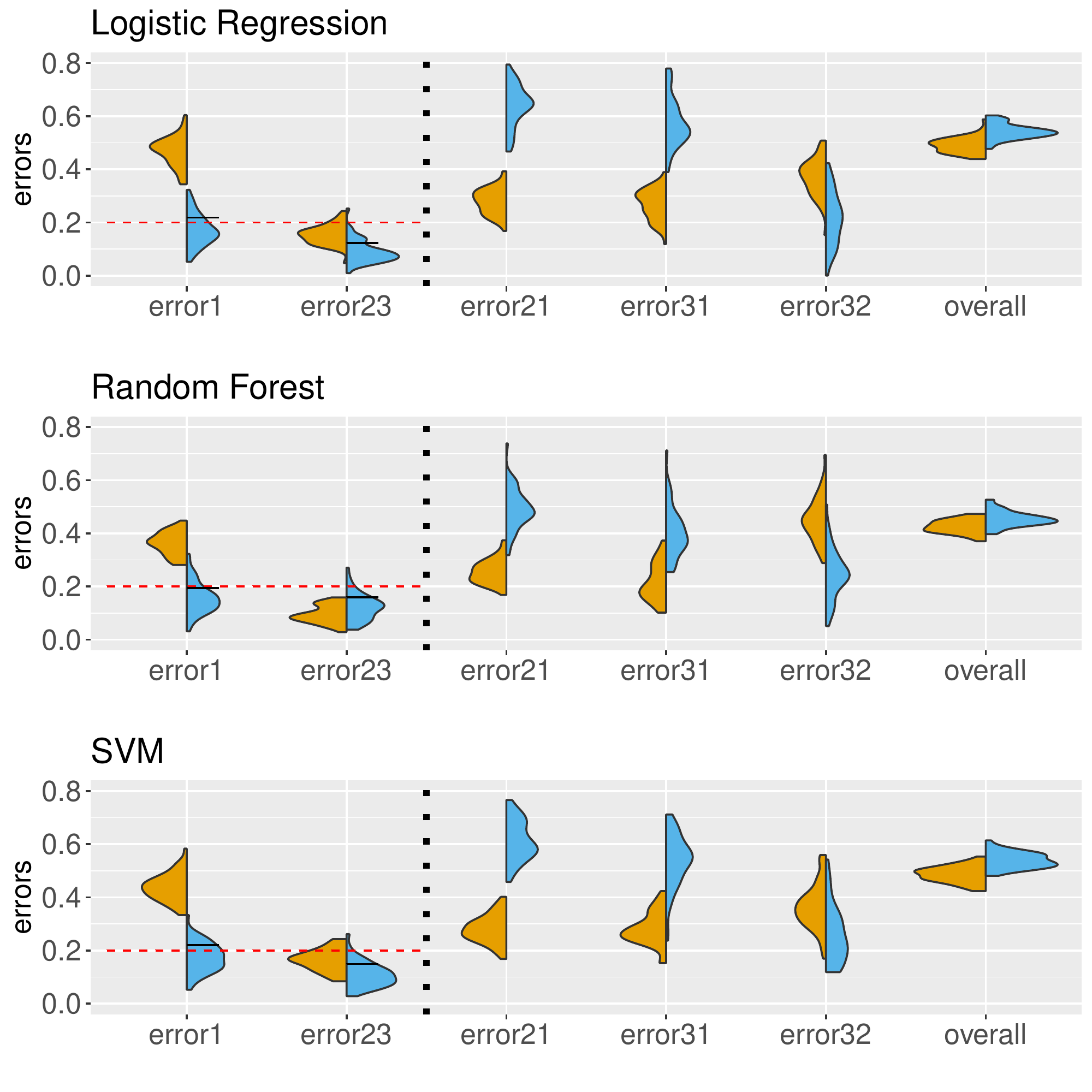}
    \caption{\ref{method:NPCA}}\label{fig:distribution_NPCA_1}
     \end{subfigure}
     \begin{subfigure}[b]{0.44\textwidth}
    \centering
    \includegraphics[width=\textwidth]{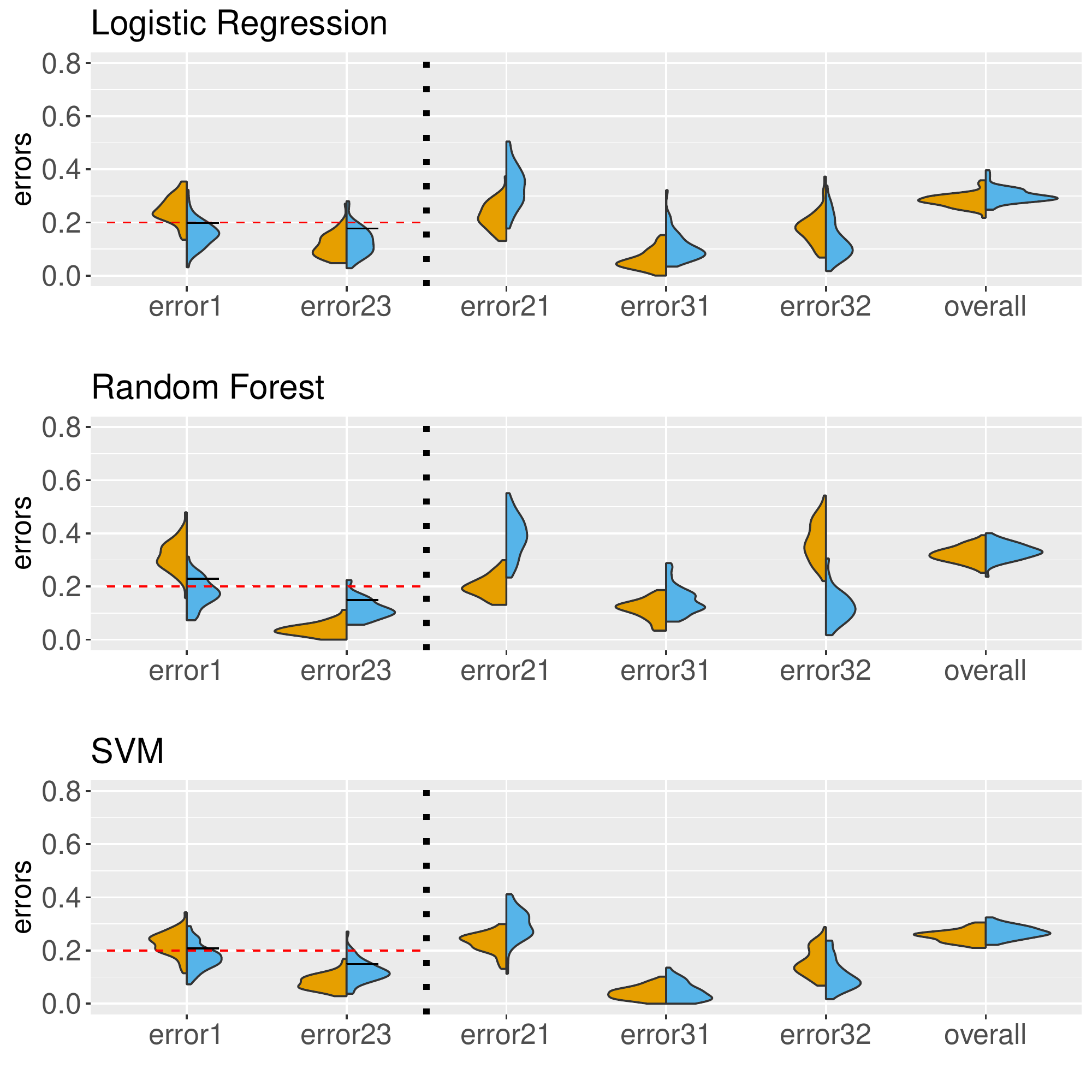}
    \caption{\ref{method:FLL}}\label{fig:distribution_FLL_1}
     \end{subfigure}
     \begin{subfigure}[b]{0.44\textwidth}
    \centering
    \includegraphics[width=\textwidth]{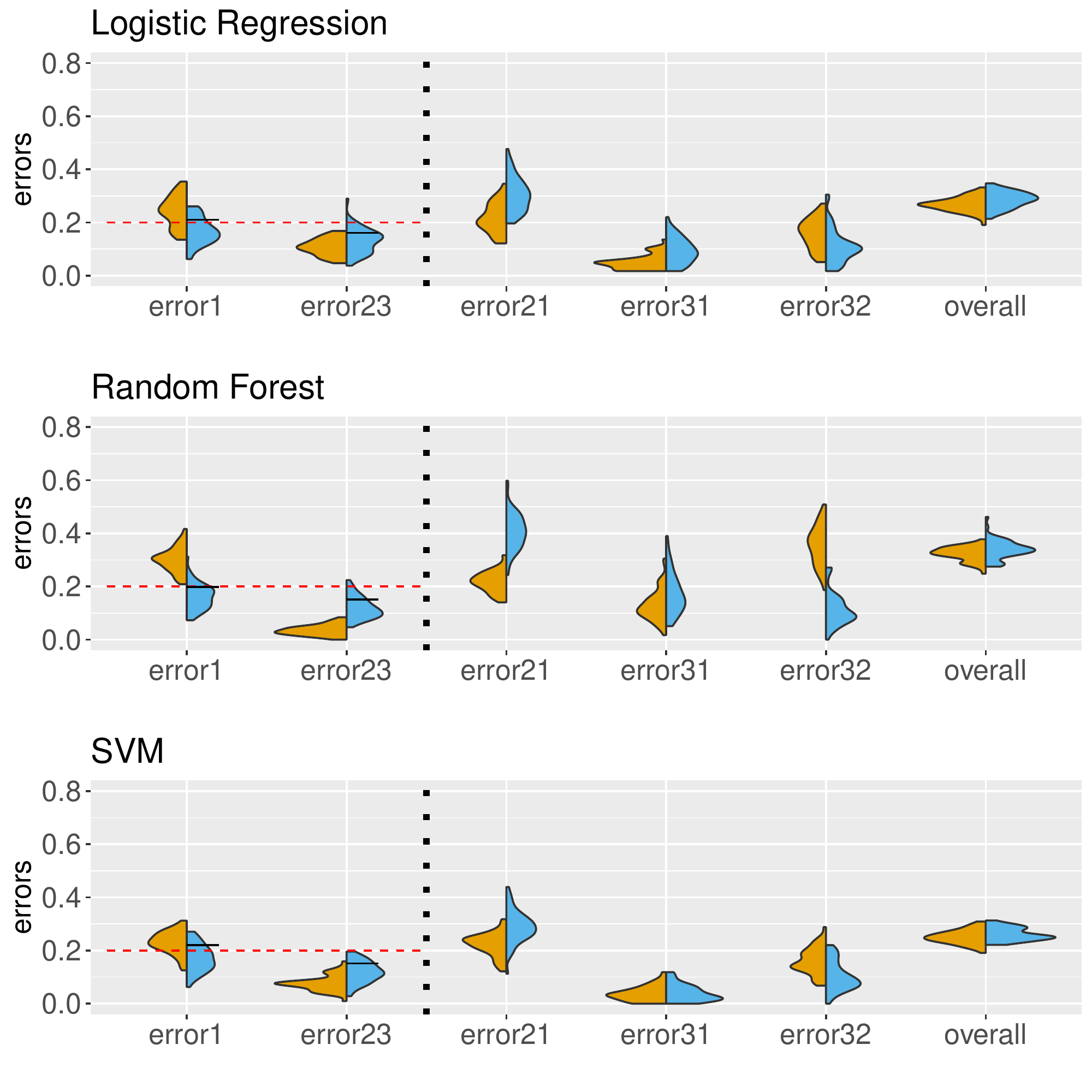}
    \caption{\ref{method:FIL}}\label{fig:distribution_FIL_1}
     \end{subfigure}
     \centering
     \begin{subfigure}[b]{0.2\textwidth}
    \centering
    \includegraphics[width=\textwidth]{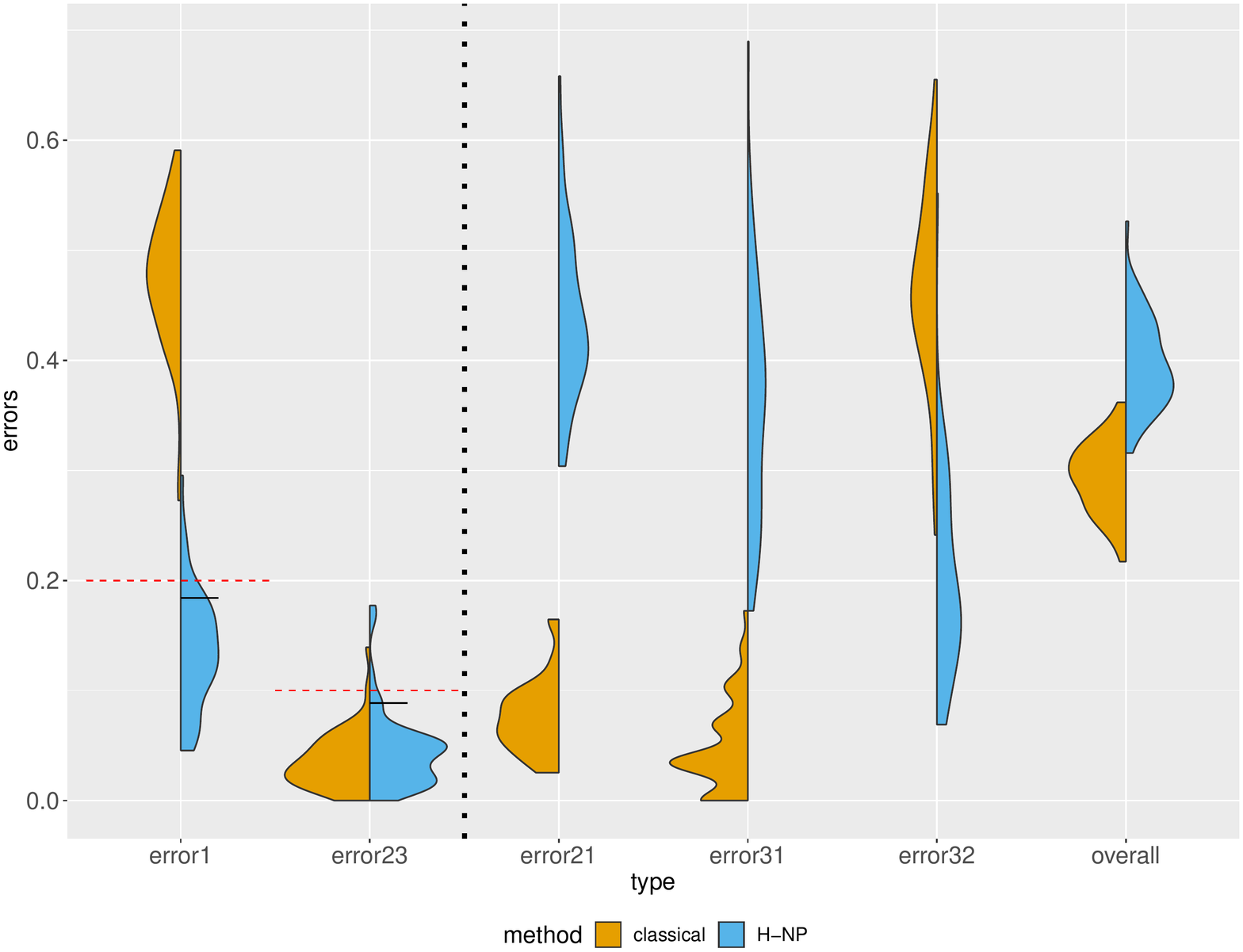}
     \end{subfigure}
     \caption{ \footnotesize The distribution of approximate errors for each combination of featurization method and base classification method. ``error1", ``error23", ``error21", ``error31", ``error32", ``overall" correspond to $R_{1\star}(\hatphi)$, $R_{2\star}(\hatphi)$, $P_2(\hat{Y}= 1)$, $P_3(\hat{Y}= 1)$, $P_3(\hat{Y}= 2)$ and $P(\hat{Y} \neq Y)$, respectively.}\label{fig:distribution}
\end{figure}

With the prespecified $\alpha_1, \alpha_2, \delta_1, \delta_2$, for a given base classification method Algorithm~\ref{alg:opt_class} outputs an H-NP classifier that controls the {under-classification} errors while minimizing the weighted sum of the other empirical errors.  The blue half violin plots in Figure~\ref{fig:distribution} show the resulting approximate errors after H-NP adjustment. We observe that the common cell type combination feature~\ref{method:FIL} consistently leads to smaller errors under both the classical and H-NP classifiers, especially for linear classification models (logistic regression and SVM). We have also implemented a neural network classifier. However, as the training sample size is relatively small, its performance is not as good as the linear classification models, and the results are deferred to Supplementary Figure~S14.

In each plot of Figure~\ref{fig:distribution}, the two leftmost plots are the distributions of the two approximate {under-classification} errors $R_{1\star}$ and $R_{2\star}$. We mark the $80\%$ quantiles of $R_{1\star}$ and $R_{2\star}$ by short black lines (since $\delta_1, \delta_2 = 0.2$), and the desired control levels ($\alpha_1, \alpha_2 = 0.2$) by red dashed lines. The four rightmost plots show the approximate errors for the overall risk and the three components in $R^c(\hatphi)$ as discussed in Eq~\eqref{eq:risk2}. For all the featurization and base classification methods, the {under-classification} errors are controlled at the desired levels with a slight increase in the overall error, which is much smaller than the reduction in {under-classification} errors. This demonstrates consistency of our method and indicates its general applicability to various base classification algorithms chosen by users.


Another interesting phenomenon is that when a classical classification method is conservative for specified $\alpha_i$ and $\delta_i$, our algorithm will increase the corresponding threshold $t_i$, which relaxes the decision boundary for classes less prioritized than $i$. As a result, the relaxation will benefit some components in $R^c(\hatphi)$. In Figure~\ref{fig:distribution_FIL_1}, in many cases the classifier produces an approximate error $R_{2\star}$ less than $0.2$ under the classical paradigm, which means it is conservative for the control level $\alpha_2 = 0.2$ at the tolerance level $\delta_2 = 0.2$. In this case, the NP classifier adjusts the threshold $t_2$ to lower the requirement for class 3, thus notably decreasing the approximate error of $P_3(\hat{Y} = 2)$.

\subsection{{Identifying genomic features associated with severity}}

Finally, we show that using this integrated scRNA-seq data in a classification setting
enables us to identify genomic features associated with disease severity in patients at both
the cell-type and gene levels. First, by combining logistic regression with an appropriate featurization, we generate a ranked list of features (i.e., cell types or genes) that are important in predicting severity. At the cell type level, we utilize logistic regression with the featurization~\ref{method:NPCA}, which compresses the expression matrix for each patient into a cell-type-length vector, and rank the cell types based on their coefficients from the log odds ratios of the severe category relative to
the healthy category. Supplementary Table~S8 shows the top-ranked cell types are CD$14^+$ monocytes, NK cells, CD$8^+$ effector T cells, and neutrophils, all with significant p-values. This is consistent with known involvement of these cell types in the immune response of severe patients \citep{lucas2020longitudinal,liu2020longitudinal,rajamanickam2021dynamic}. 

At the gene level, we utilize logistic regression with the featurization~\ref{method:FIL}, which has the best overall classification performance, and compresses each patient's expression matrix into a gene-length vector. Similar to the above analysis at the cell-type level, we generate a ranked gene list which leads to the identification of pathways associated with the severe condition. By performing the pathway enrichment analysis on the ranked gene list, we find that the top-ranked genes are significantly enriched in pathways involved in viral defense and leukocyte-mediated immune response (Supplementary Table~S9).

{Next, we perform further analysis to directly demonstrate the benefits of the H-NP classification results without relying on feature ranking}. Based on the featurization M.4, we construct a gene co-expression network and identify modules with groups of genes that are potentially co-regulated and functionally related. By comparing the predicted severity labels from the H-NP classifier and the classical approach, we show that the H-NP labels are better correlated with the eigengenes from these functional modules, suggesting that the H-NP labels better capture the underlying signals in the data related to disease mechanism and immune response (Supplementary Figures~S15-S17). Then, we compare the gene ontology enrichment of the functional modules constructed for the severe and healthy patients separately, using the predicted H-NP labels. We find strong evidence of immune response to the virus among
severe patients, while no such evidence is observed in the healthy group (Supplementary Tables~S10 and~S11). {Finally, we note that compared with the results from the severe patients
as labeled by the classical paradigm, the H-NP paradigm shows more significantly
enriched modules with specific references to important cell types, including T cells, and
subtypes of T cells (Supplementary Tables~S10 and~S12). Together, these results demonstrate that by prioritizing
the severe category in our H-NP framework, we can uncover stronger biological signals in
the data related to immune response.}

More detailed descriptions of the methods used and analysis of results can be found in Supplementary Sections~D.4 and~D.5.


\vspace{-0.2in}

\section{Discussion}
\vspace{-0.1in}

In general disease severity classification, {under-classification} errors are more consequential as they can increase the risk of patients receiving insufficient medical care. By assuming the classes have a prioritized ordering, we propose an H-NP classification framework and its associated algorithm (Algorithm~\ref{alg:opt_class})  capable of controlling {under-classification} errors at desired levels with high probability. The algorithm performs post hoc adjustment on scoring-type classification methods and thus can be applied in conjunction with most methods preferred by users.  The idea of choosing thresholds on the scoring functions based on a held-out set bears resemblance to conformal splitting methods \citep{lei2014classification, wang2022set}. However, our approach differs in that we assign only one label to each observation, while maintaining high probability error controls. Additionally, our approach prioritizes certain misclassification errors, unlike conformal prediction which treats all classes equally.

Through simulations and the case study of COVID-19 severity classification, we demonstrate the efficacy of our algorithm in achieving the desired error controls. We have also compared different ways of constructing interpretable feature vectors from the multi-patient scRNA-seq data and shown that the common cell type PCA featurization overall achieves better performance under various classification settings. {By performing extensive gene ontology enrichment analysis, we illustrate that the use of scRNA-seq data has allowed us to gain biological insights into the disease mechanism and immune response of severe patients. We note here that although parts of our analysis rely on a ranked feature list obtained from logistic regression, there exist tools to perform such a feature selection step for all the other base classification methods used in this paper, including neural networks, which can utilize saliency maps and other feature selection procedures \citep{adebayo2018sanity, novakovsky2023obtaining}. We have chosen logistic regression in our illustrative analysis based on its stable classification performance and ease of interpretation.
} 
In addition, if the main objective is to build a classifier for triage diagnostics using other clinical variables,
one can easily apply our method to other forms of patient-level COVID-19 data with other
base classification methods.


Even though our case study has three classes, the framework and algorithm developed are general. Increasing the number of classes has no effect on the minimum size requirement of the left-out part of each class for threshold selection since it suffices for each class $i$ to satisfy $n_{i} \geq \log \delta_i/ (1 - \alpha_i)$. We also note that the notion of prioritized classes can be defined in a context-specific way. For example, in some diseases like Alzheimer's disease, the transitional stage is considered to be the most important \citep{xiong2006measuring}. 

There are several interesting directions for future work. For small data problems where the minimum sample size requirement is not full-filled, we might consider adopting a parametric model, under which we can not only develop a new algorithm without minimum sample size requirement, but also study the oracle type properties of the classifiers.  
In terms of featurizing multi-patient scRNA-seq data, we have chosen PCA as the dimension reduction method to focus on other aspects of comparison; 
more dimension reduction methods can be explored in future work. It is also conceivable that the class labels in the case study are noisy with possibly biased diagnosis. Accounting for label noise with a realistic noise model and extending the work of \citet{yao2022asymmetric} to a multi-class NP classification setting will be another interesting direction to pursue.

\section*{Acknowledgements}
The authors would like to thank the Editor, Associate Editor, and two anonymous reviewers for their valuable comments, which have led to a much improved version of this paper. The authors would also like to thank Dr Yingxin Lin and the Sydney Precision Data Science Centre for their generous help with curating and processing the COVID-19 scRNA-seq data. The authors gratefully acknowledge: the UT Austin Harrington Faculty Fellowship to Y.X.R.W. and NSF DMS-2113754 to J.J.L. and X.T. The authors report there are no competing interests to declare.

\vspace{-0.25in}

\bibliographystyle{unsrtnat}
\bibliography{HNPref}

\def\spacingset#1{\renewcommand{\baselinestretch}%
{#1}\small\normalsize} \spacingset{1.9}

\appendix

\appendixpage

\section{Preprocessing of the integrated COVID-19 data}\label{supp.sec:data}


We integrate 20 collections of scRNA-seq datasets from peripheral blood mononuclear cells (PBMCs). A total of 864 patients are available and their severity levels can be found in Table~\ref{supp.table:count}. Table~\ref{supp.table:pop} summarizes populations and geographic locations covered by the datasets. We note that some of these datasets contain patients with longitudinal records; we take only one sample from these multiple measurements to ensure independence.   

Before integration, we performed size factor standardization and log transformation on the raw count expression matrices using the \texttt{logNormCount} function in the R package 
\texttt{scater} (version 1.16.2) \citep{mccarthy2017scater} and generated log transformed gene expression matrices. All the PBMC datasets are integrated by scMerge2 \citep{lin2022atlas}, which is specifically designed for merging multi-sample and multi-condition studies. Following the standard pipeline for assessing the quality of integration, in Figure~\ref{supp.fig:umap_integration}, we show the UMAP projections of all cells from all the studies, obtained from the top 20 principle components of the merged gene-by-cell expression matrix, for (a) before integration and (b) after integration. The cells are colored by their cell types (left column) or which study (or batch) they come from (right column). Before integration, cells from the same cell type are split into separate clusters based on batch labels, indicating the presence of batch effects. After integration, cells from the same cell type are significantly better mixed while the distinctions among cell types are preserved.


To construct pseudo-bulk expression profiles, we input the cell types annotated by 
\texttt{scClassify} \citep{lin2020scclassify} (using cell types in \citet{stephenson2021single} as reference) into scMerge2. The resulting profiles are used to identify mutual nearest subgroups as pseudo-replicates and to estimate parameters of the scMerge2 model. We select the top $3{,}000$ highly variable genes through the function \texttt{modelGeneVar} in R package 
\texttt{scran} \citep{lun2016step}, and for each patient calculate the average expression of each cell type for selected genes, i.e., for each patient, the integrated dataset provides a $n_g\times n_c$ matrix recording the average gene expressions, where $n_g$ is the number of genes ($n_g = 3{,}000$) and $n_c$ is the number of cell types ($n_c = 18$).

\begin{table}[ht]
\centering
\footnotesize
\begin{tabular}{c|ccc |c}
\hline
Publication & Severe/Critical & Mild/Moderate & Healthy & Total\\\hline
\citet{arunachalam2020systems} & 4&3&5&12\\
\cite{bost2021deciphering} &21&6&5&32\\
\cite{covid2021blood} & 62&31&10&103\\
\cite{combes2021global} & 9&11&14&34\\
\cite{lee2020immunophenotyping} & 3&4&5&12\\
\cite{liu2021time} & 30&3&14&47\\
\cite{ramaswamy2021immune}* &-&-&19&19\\
\cite{ren2021COVID} &70&61&20&151\\
\cite{schulte2020severe} &17&19&38&74\\
\cite{schuurman2021integrated} & 2&6&4&12\\
\cite{silvin2020elevated}& 5&2&3&10\\
\cite{sinha2022dexamethasone}& 21&-&-&21\\
\citet{stephenson2021single} &28&53&32&113\\
\citet{su2020multi} &12&117&-&129\\
\cite{thompson2021metabolic}& 5&-&3&8\\
\cite{unterman2022single}* &10&-&-&10\\
\citet{wilk2020single}&11&20&8&39 \\
\cite{yao2021cell}& 6&5&-&11\\
\cite{zhao2021single}& 1&8&10&19\\
\citet{zhu2020single} &1&4&3&8\\\hline
Total &  318 & 353 & 193 & 864\\\hline
\end{tabular}
\caption{Number of patients under each severity level in each dataset. The datasets marked with * were utilized by both studies \citep{ramaswamy2021immune,unterman2022single} in their respective analyses.} \label{supp.table:count}
\end{table}

\begin{table}[h!!!]
\centering
\footnotesize
\begin{tabular}{p{5cm}|p{5cm}|p{2cm}}
\hline
Publication & Population &Country\\\hline
\cite{arunachalam2020systems} & Black, Caucasian &  US\\\hline
\cite{bost2021deciphering} & -&  Italy\\\hline
\cite{covid2021blood} & - & UK\\\hline
\cite{combes2021global} & - &  US\\\hline
\cite{lee2020immunophenotyping} & - & South Korea\\\hline
\cite{liu2021time} &  Asian, Caucasian & Italy\\\hline
\cite{ramaswamy2021immune} &- &   US\\\hline
\cite{ren2021COVID} & Asian & China\\\hline
\cite{schulte2020severe} & -  & Germany\\\hline
\cite{schuurman2021integrated} &  Black, Caucasian & Netherlands\\\hline
\cite{silvin2020elevated}& -  &  France\\\hline
\cite{sinha2022dexamethasone}&  Asian, Black, Caucasian, Others &Canada\\\hline
\citet{stephenson2021single} &- &UK \\\hline
\citet{su2020multi} & Asian, Black, Caucasian, Others  & US\\\hline
\cite{thompson2021metabolic}& - & US\\\hline
\cite{unterman2022single} &- &   US\\\hline
\cite{wilk2020single}&  Asian, Black, Caucasian, Hispanic/Latino, Others &US \\\hline
\cite{yao2021cell}& Asian, Black, Caucasian, Hispanic/Latino, Others &US\\\hline
\cite{zhao2021single}& - & China\\\hline
\citet{zhu2020single} & Asian & China\\\hline

\end{tabular}
\caption{Populations and geographic locations covered by the datasets.}\label{supp.table:pop}
\end{table}

In the featurization methods~M.2 and~M.3, we remove the cell type ILC with its zero proportion hardly changing across all three classes (Figure~\ref{fig:zeros}) and an average zero proportion greater than $95\%$ (Table~\ref{supp.table:avg_missing}). 17 cell types are left:  B, CD14 Mono, CD16 Mono, CD4 T, CD8 T, DC,gdT, HSPC, MAST, Neutrophil, NK, NKT, Plasma, Platelet, RBC, DN, MAIT.  Also, in~M.3, we find that using the absolute values of PCA loadings notably increase the prediction performance under the classical paradigm (even though it is still not as good as~M.4). 

\begin{figure}[h!]
\centering
 \includegraphics[width=15cm]{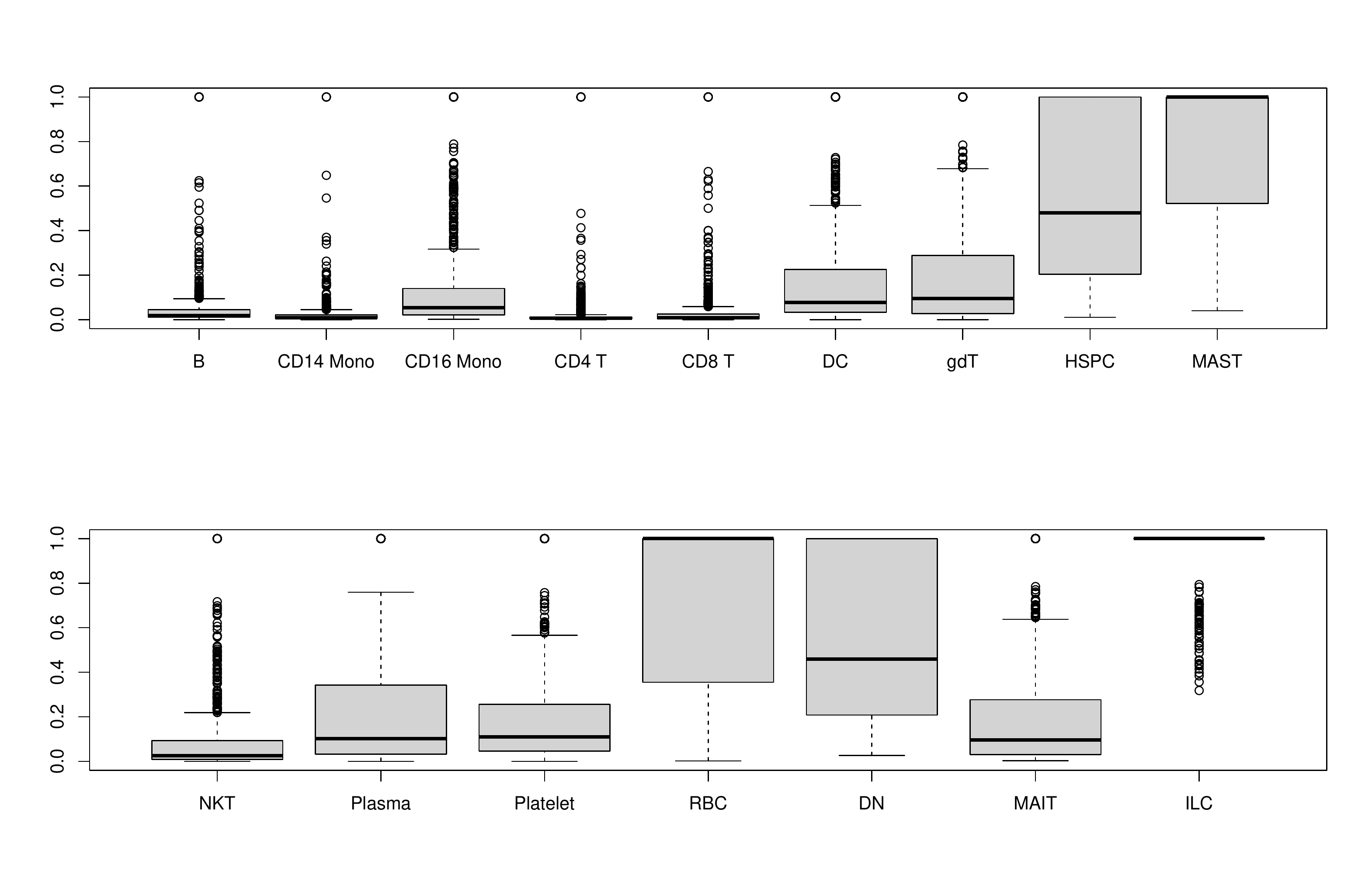}
 \caption{The distribution of the proportion of zero values across patients for each cell type.}\label{supp.fig:zeros_2}
\end{figure}

\begin{table}[h!!!]
\centering
\footnotesize
\begin{tabular}{c|c|c|c}
\hline
cell type & zero proportion & cell type & zero proportion \\
\hline
       B &  0.054    &  Neutrophil& 0.514  \\
       CD14 Mono &0.028& NK  &  0.025  \\
       CD16 Mono &  0.142 & NKT &  0.099\\ 
       CD4 T  & 0.024    &  Plasma & 0.243\\
       CD8 T &0.042  &  Platelet & 0.232\\                    
        DC & 0.186  & RBC    & 0.730 \\
        gdT &  0.209    &  DN  & 0.524\\
        HSPC  &   0.548  &  MAIT   & 0.220\\
         MAST&     0.786     &    ILC     &     0.972 \\ \hline                                            
\end{tabular}
\caption{The average proportion of zero values across patients for each cell type.}\label{supp.table:avg_missing}
\end{table}

\begin{figure}[!ht]
\centering
  \begin{subfigure}[b]{1\textwidth}
    \centering
    \includegraphics[width=\textwidth]{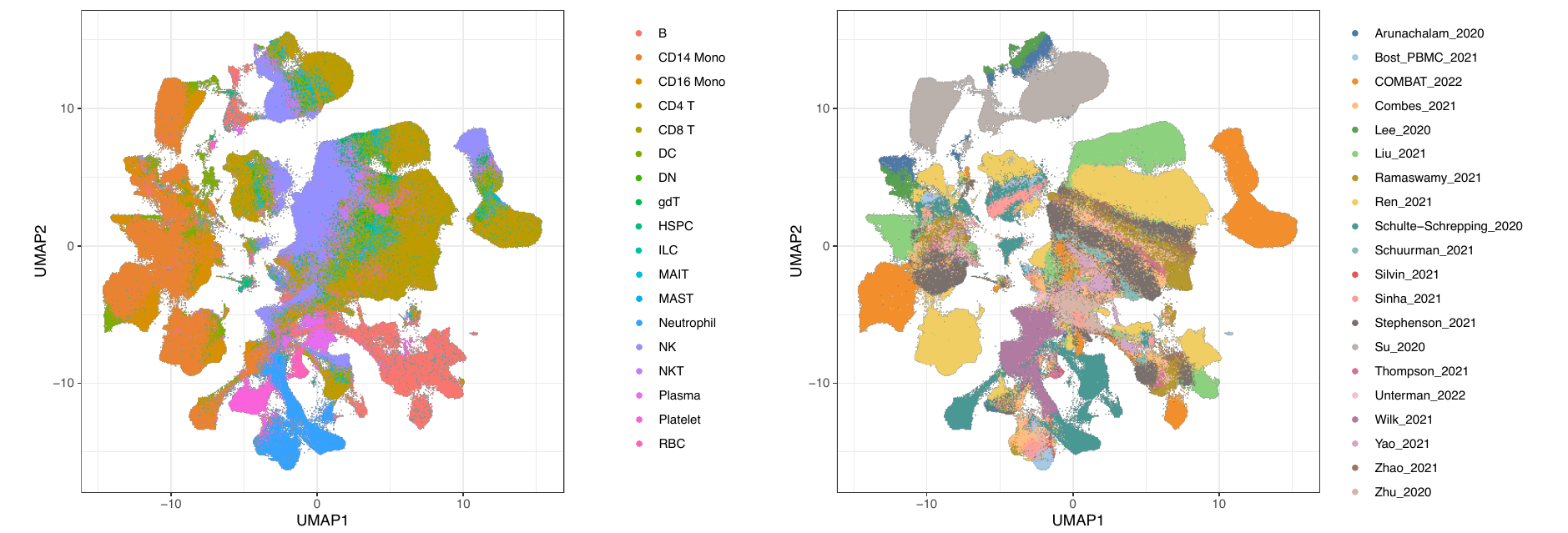}
    \caption{Before integration.}
     \end{subfigure}
  \begin{subfigure}[b]{1\textwidth}
    \centering
    \includegraphics[width=\textwidth]{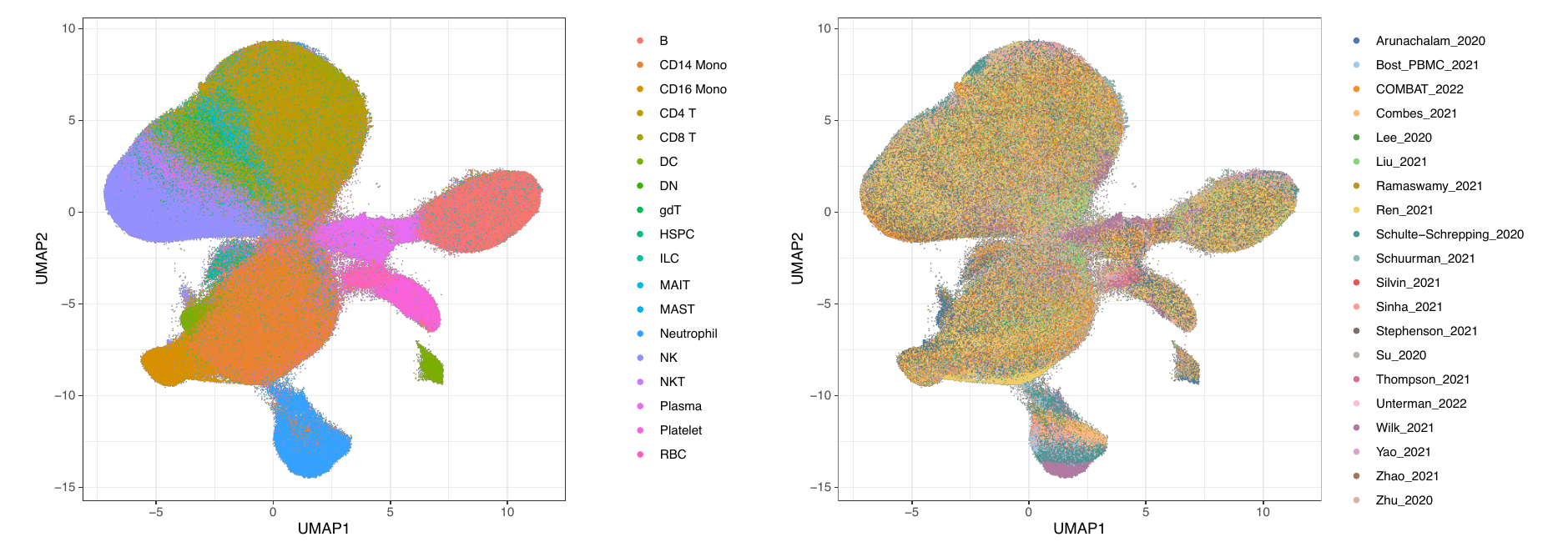}
    \caption{After integration through scMerge.}
     \end{subfigure}
     \caption{ Two left UMAPs plots are colored based on cell types predicted by scClassify (using  \cite{stephenson2021single} as reference). Two right UMAPs plots are colored by the batches.}
     \label{supp.fig:umap_integration}
 \end{figure}

\begin{figure}[h!]
\centering
 \includegraphics[width=16cm]{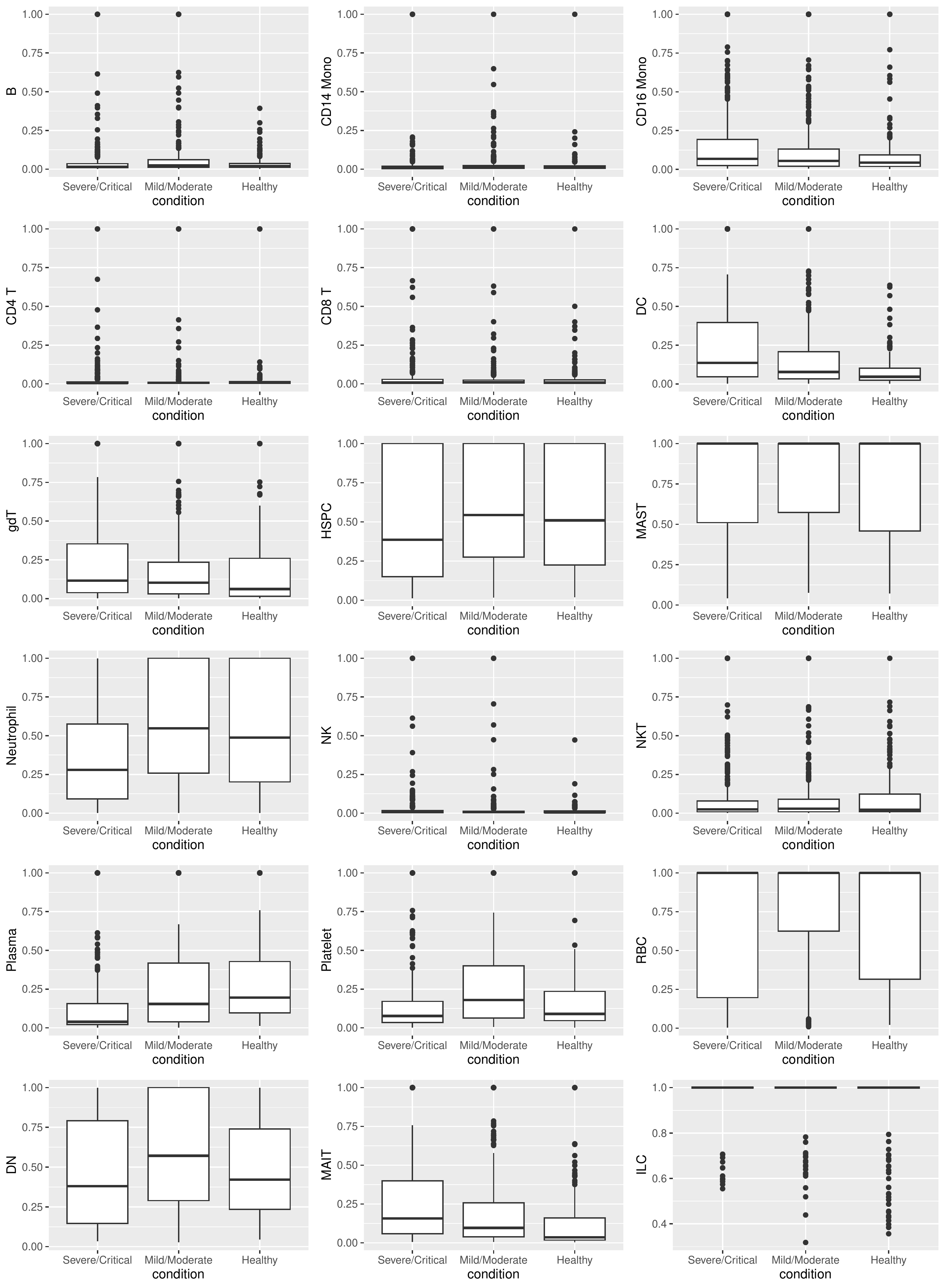}
 \caption{The proportions of zeros for different severity classes.}\label{fig:zeros}
\end{figure}

Other than scRNA-seq data, we also include age as a predictor in the integrated dataset. Most of the datasets used in our study recorded age information either as an exact number or an age group, while the rest did not provide this information (see Table~\ref{supp.table:age}). In the integrated dataset, we use the lower end of the age group recorded for patients with no exact age, and replace the missing values with the average age (52.23).

\begin{table}[h!!!]
\centering
\footnotesize
\begin{tabular}{c|cc}
\hline
Publication & Age recording format & Example\\\hline
\cite{arunachalam2020systems} & exact age & 64\\
\cite{bost2021deciphering} & not available & NA\\
\cite{covid2021blood} & age group&61-70 \\
\cite{combes2021global} & exact age & 64\\
\cite{lee2020immunophenotyping} & exact age & 64\\
\cite{liu2021time} & exact age & 64\\
\cite{ramaswamy2021immune} &exact age & 64\\
\cite{ren2021COVID} &exact age & 64\\
\cite{schulte2020severe} &age group & 61-65 \\
\cite{schuurman2021integrated} & exact age & 64\\
\cite{silvin2020elevated}& exact age & 64\\
\cite{sinha2022dexamethasone}&exact age & 64\\
\citet{stephenson2021single} & age group & 60-69 \\
\citet{su2020multi} &exact age & 64\\
\cite{thompson2021metabolic}& not available & NA\\
\cite{unterman2022single} &exact age & 64\\
\cite{wilk2020single}& age group & 60-69  \\
\cite{yao2021cell}& not available & NA\\
\cite{zhao2021single}& exact age & 64\\
\citet{zhu2020single} &exact age & 64\\\hline

\end{tabular}
\caption{Format of age information in each dataset. An example record for a 64-year-old patient is provided for each dataset.  }\label{supp.table:age}
\end{table}

\clearpage

\section{Proofs of the main results}
\subsection{Proof of Proposition~1}\label{supp.sec:prop_1}

Recall that $\cT_i =  \{ T_i(X ) \mid X \in \cS_{it}\}$, and $t_{i(1)}, \ldots, t_{i(n_i)}$ are the order statistics, with $n_i$ being the cardinality of $\cT_i$. Let $t_{i(k)} $ be the $k$-th order statistic. Suppose $T_i(X)$ is the classification score
of an independent observation from class $i$, and $F_i$ is the cumulative distribution function for $-T_i(X)$. Then, $$P_i\left[\left.T_i(X)< t_{i(k)} \right|  t_{i(k)} \right] =P_i\left[\left.-T_i(X)> -t_{i(k)} \right| t_{i(k)} \right] = 1 - F_i\left(- t_{i(k)} \right),$$ 
and 
\begin{align}\label{eq:threshold}
 & \IP\left(P_i\left[\left.T_i(X)< t_{i(k)} \right| t_{i(k)} \right] > \alpha \right) \notag\\
  & = \IP\left[1 - F_i\left(- t_{i(k)} \right)> \alpha \right]
   =\IP\left[-t_{i(k)} < F_i^{-1}(1 -\alpha)\right] \notag\\
   &= \IP\left[ - t_{i(k)} <  F_i^{-1}( 1- \alpha) , -t_{i(k + 1)}< F_i^{-1}(1 - \alpha), \ldots, - t_{i(n_i)} < F_i^{-1}(1 - \alpha)  \right]\notag\\
   &=   \IP\left[ \mbox{at least $n_i - k + 1$ elements in $\cT_i$ are less than  $F_i^{-1}(1 - \alpha)$} \right] \notag \\
  & = \sum^{n_i}_{j = n_i - k + 1} {n_i \choose j} \IP\left[-T_i(X)  < F_i^{-1}(1- \alpha)\right]^j \left(1 - \IP\left[-T_i(X)  < F_i^{-1}(1 - \alpha)\right]\right)^{n_i - j}\notag\\
  &= \sum^{k - 1}_{j = 0} {n_i \choose j} \left(1 - \IP\left[-T_i(X)  < F_i^{-1}(1 - \alpha)\right]\right)^j \IP\left[-T_i(X)  < F_i^{-1}(1- \alpha)\right]^{n_i - j}  \notag\\ 
  & = (n_i + 1 - k ) {n_i \choose {k- 1}} \int^{\IP\left[-T_i(X)  < F_i^{-1}(1 - \alpha)\right]}_0 u^{ n_i - k} ( 1- u)^{k - 1} du \notag\\
 &   \leq (n_i + 1 - k ) {n_i \choose {k- 1}} \int^{1- \alpha}_0 u^{ n_i - k} ( 1- u)^{k - 1} du \notag\\
 &= \sum^{k - 1}_{j = 0} {n_i \choose j}( \alpha)^{ j} (1  - \alpha)^{n_i - j} =  v(k,n_i,\alpha)
\end{align}
The inequality holds because $ \IP\left[-T_i(X)  < F_i^{-1}(1 - \alpha)\right] \leq 1 - \alpha$, and it becomes an equality when $F_i$ is continuous.

\subsection{Proof of Theorem~1}\label{supp.sec:thm_1}

Given $(t_1,t_2,\ldots, t_{i-1})$, recall that $t_{i(k)}$ and  $t'_{i(k)}$  are the $k$-th order  statistic of the sets $$\cT_i = \{ T_i(X) \mid X \in \cS_{it}\} \qmq{and} \cT'_i = \{ T_i(X) \mid X \in \cS_{it},  T_1(X)< t_1, \ldots, T_{i - 1}(X)< t_{i - 1} \}\,,$$
respectively, where $\cS_{it}$ is the left-out class-$i$ samples, and $(t_1,t_2,\ldots, t_{i-1})$ are the thresholds for the previous decisions in the classifier~(3).  $n_i$ and $n_i'$ are the cardinalities of $\cT_i$ and $\cT'_i$. Obviously, $n_i' \leq n_i$. Also, we set $$\hatp_i = \frac{n_i'}{n_i}\,,\,  p_i =  \hatp_i + c(n_i)\,,\, \alpha_i' = \frac{\alpha_i}{p_i}\,,\, \delta_i' = \delta_i - \exp\{-2n_i c^2(n_i)\}\,,$$
where $\alpha_i$ and $\delta_i$ are the prespecified control level and violation tolerance level, $\alpha_i'$ and $\delta_i'$ are the adjusted counterparts, and $c(n) = \mathcal{O}(1/\sqrt{n})$. With the adjusted $\alpha_i'$ and $\delta_i'$, we consider the following two cases when selecting the upper bound of threshold:
\begin{equation}\label{supp.eq:ot}
\ot_i = \begin{cases}  t'_{i(k'_i)}\,, & \text{if } n_i' \geq \log \delta_i'/ \log (1 - \alpha_i') \qmq{and} \alpha_i' < 1\,; \\
t_{i(k_i)}\,, &\mbox{otherwise}\,,
\end{cases}
\end{equation}
where $k_i = \max \{k \mid v(k,n_i,\alpha_i) \leq \delta_i\} \qmq{and}
    k'_i = \max \{k \mid v(k,n'_i,\alpha'_i)\leq \delta_i'\}\,.$  We are going to prove that $ \IP\left( P_i\left[\left. T_1(X)< t_1, \ldots, T_{i - 1}(X)< t_{i - 1},\qmq{and}T_i (X)< \ot_i\right| \ot_i \right] > \alpha_i \right) \leq \delta_i$ by two cases.
    
\textbf{Case 1:}  We consider the set $ \cE = \{ n_i' \geq \log \delta_i'/ \log (1 - \alpha_i') \qmq{and} 1 - \alpha_i' > 0\}$ (case 1 in Eq~\eqref{supp.eq:ot}). Under this event, and we want to show that $$\IP\left( \left. P_i\left[\left. T_1(X)< t_1, \ldots, T_{i - 1}(X)< t_{i - 1},\qmq{and}T_i (X)< t'_{i(k'_i)} \right|  t'_{i(k'_i)}\right] > \alpha_i \right|\cE \right) \leq \delta_i\,.$$

 Suppose that $T_i(X)$ is the classification score
of an independent observation from class $i$. $F'_i$  is  the cumulative distribution function for the classification score $-T_i(X)$ when $T_1(X)< t_1, \ldots, T_{i - 1}(X)< t_{i - 1} $.  Then, similar to the proof of Proposition~1,
\begin{equation*}
    P_i\left[\left.T_i (X)< t'_{i(k)} \right| t'_{i(k)}\,, T_1(X)< t_1, \ldots, T_{i - 1}(X)< t_{i - 1}\right] = 1 - F'_i\left( -t'_{i(k)} \right)\,.
\end{equation*}
Note that $\alpha_i'$ is determined by $n_i$, $n_i'$, $\alpha_i$. Meanwhile, $\delta'_i$ is fixed, as it only depends on the given values $n_i$ and $\delta_i$. We have
\begin{align}\label{supp.eq:condi_ni}
  & \IP\left( P_i\left[ \left. T_i (X)<  t'_{i(k)} \right| t'_{i(k)}\,, T_1(X)< t_1, \ldots, T_{i - 1}(X)< t_{i - 1} \right] > \alpha'_i \mid n_i'\right)\notag \\
   & = \IP\left[   -t'_{i(k)}   > (F'_i)^{-1} (1 -\alpha'_i)\mid n_i' \right]\notag\\
  & = \frac{\IP\left[   -t'_{i(k)}   > (F'_i)^{-1} (1 -\alpha'_i) \qmq{and} |\cT_i| = n_i' \right]}{\IP(|\cT_i| = n_i')}
\end{align}
Note that the event $\{-t'_{i(k)}   > (F'_i)^{-1} (1 -\alpha'_i) \qmq{and} |\cT_i| = n_i'\}$ indicates that $n_i'$ elements in $\cS_{it}$ satisfy $\{T_1(X)< t_1, \ldots, T_{i - 1}(X)< t_{i - 1}\}$; among these elements, at least $n_i' - k + 1$ elements have $T_i(X)$ less than  $(F'_i)^{-1} (1 -\alpha'_i)$. We can consider $\cS_{it}$ as independent draws from a multinomial distribution with three kinds of outcomes ($A_1,  A_2, A_3$) defined in Supplementary Figure~\ref{supp.fig:multi}. 
\begin{figure}[h!]
\centering
 \includegraphics[width=15cm]{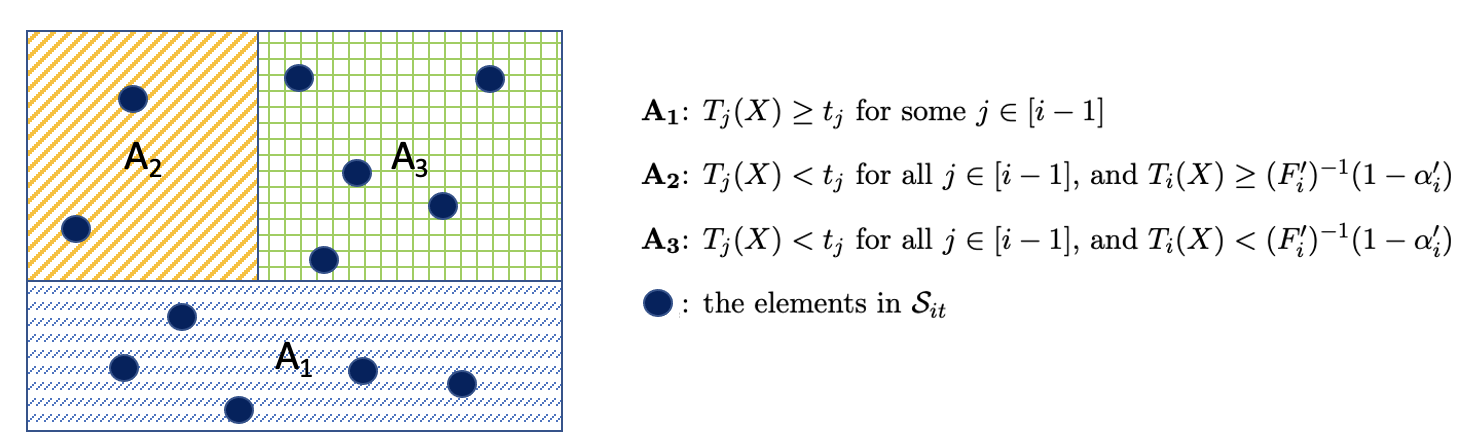}
 \caption{Partition of $\cS_{it}$ into three kinds of outcomes.}\label{supp.fig:multi}
\end{figure}
Therefore, 
\begin{align*}
  &\IP\left[   -t'_{i(k)}   > (F'_i)^{-1} (1 -\alpha'_i) \qmq{and} |\cT_i| = n_i' \right] \\
  &=  \sum^{n'_i}_{j = n'_i - k + 1} {n_i \choose n_i - n_i' \,,n_i' - j\,, j} P_i(A_1)^{n_i - n_i'}P_i(A_2)^{n_i' - j} P_i(A_3)^{j}\,;\\
&\IP(|\cT_i| = n_i') \\
& = {n_i \choose n_i - n_i'} P_i(A_1)^{n_i - n_i'} P_i(A_2 \cup A_3)^{n_i'}\,.
\end{align*}
Then, by Eq~\eqref{supp.eq:condi_ni},
\begin{align*}
& \IP\left( P_i\left[ \left. T_i (X)<  t'_{i(k)} \right| t'_{i(k)}\,, T_1(X)< t_1, \ldots, T_{i - 1}(X)< t_{i - 1} \right] > \alpha'_i \mid n_i'\right)\notag \\
&=  \sum^{n'_i}_{j = n'_i - k + 1} {n_i' \choose j} \left(\frac{P_i(A_2)}{P_i(A_2 \cup A_3)}\right)^{n_i' - j}\left( \frac{P_i(A_3)} {P_i(A_2 \cup A_3)}\right)^{j}\\
&= \sum^{n'_i}_{j = n'_i - k + 1} {n_i' \choose j} (1 -  P_i\left[-T_i(X) < (F_i')^{-1}(1- \alpha) \mid T_1(X)< t_1, \ldots, T_{i - 1}(X)< t_{i - 1}\right])^{n'_i - j} \\
& \qmq{} \times P_i\left[-T_i(X)  < (F_i')^{-1}(1 - \alpha_i')\mid T_1(X)< t_1, \ldots, T_{i - 1}(X)< t_{i - 1}\right]^j\\
&= \sum^{k - 1}_{j = 0} {n_i' \choose j} (1 -  P_i\left[-T_i(X) < (F_i')^{-1}(1- \alpha) \mid T_1(X)< t_1, \ldots, T_{i - 1}(X)< t_{i - 1}\right])^{ j} \\
& \qmq{} \times P_i\left[-T_i(X)  < (F_i')^{-1}(1 - \alpha_i')\mid T_1(X)< t_1, \ldots, T_{i - 1}(X)< t_{i - 1}\right]^{n'_i - j}\\
& \leq \sum^{k - 1}_{j = 0} {n_i' \choose j} (\alpha_i')^j (1- \alpha_i')^{n_i' - j} = v(k,n_i',\alpha_i')\,.
\end{align*}
The last inequality holds because $$P_i\left[-T_i(X)  < (F_i')^{-1}(1 - \alpha_i')\mid T_1(X)< t_1, \ldots, T_{i - 1}(X)< t_{i - 1}\right] \leq 1 - \alpha_i'\,,$$ and it becomes an equality when $(F_i')^{-1}$ is continuous.

Also,
$$\IP\left( P_i\left[ \left. T_i (X)<  t'_{i(k'_i)} \right| t'_{i(k'_i)}, T_1(X)< t_1, \ldots, T_{i - 1}(X)< t_{i - 1} \right] > \alpha'_i \mid n_i'\right) \leq \delta_i'$$
since $k'_i = \max \{k \mid v(k,n'_i,\alpha'_i)\leq \delta_i'\}$.
Then, 
\begin{align}
    & \IP\left(\left.  P_i\left[ \left. T_i (X)<  t'_{i(k'_i)} \right| t'_{i(k'_i)},T_1(X)< t_1, \ldots, T_{i - 1}(X)< t_{i - 1} \right] > \alpha'_i  \right| \cE \right) \notag\\
    &= \E\left[\left. \sum^{k'_i - 1}_{j = 0} {n_i' \choose j} (\alpha'_i)^{j} (1 - \alpha'_i)^{n'_i - j} \right| \cE \right] \label{eq:E_i_bound}\\
    & \leq \E[\delta_i'\mid \cE]  = \delta_i' \notag\,.
\end{align}

On the other hand, note that  the event $\cE = \{ \alpha_i' < 1 \qmq{and} n_i' \geq \log \delta_i'/ \log (1 - \alpha_i') \}$ is equivalent to 
\begin{equation}\label{eq:E_1}
 \left(1 - \frac{\alpha_i n_i}{n_i' + n_ic(n_i)}\right)^{n_i'} \leq \delta_i'  
\end{equation}
and 
\begin{equation}\label{eq:E_i}
 \frac{\alpha_i n_i}{n_i' + n_ic(n_i)} < 1\,.
\end{equation}
The left part of the inequality~\eqref{eq:E_i} is decreasing with respect to $n_i'$. Also, the left part of the inequality~\eqref{eq:E_1} is nonincreasing in $n_i'$ when the inequality~\eqref{eq:E_i} holds, which implies that $$\E[n_i'\mid n_i' \geq \log \delta_i' \log (1 - \alpha_i') \mbox{ and }  \alpha_i' < 1] \geq \E[n_i'\mid n_i' < \log \delta_i'/ \log (1 - \alpha_i') \mbox{ or } \alpha_i' \geq 1]\,,$$
i.e.,  $E[n_i'\mid \cE] \geq E[n_i'\mid\cE^c]$. Immediately,
$$P_i\left( T_1(X)< t_1, \ldots, T_{i - 1}(X)< t_{i - 1}\right) = \E\left[\frac{n_i'}{n_i}\right] \leq \E\left[\left.\frac{n_i'}{n_i}\right|\cE \right] = \E[\hatp_i \mid \cE]\,,$$
so 
\begin{align*}
 &\IP( P_i(T_1(X)< t_1, \ldots, T_{i - 1}(X)< t_{i - 1})   >  p_i  |\cE)\\
 &= \IP(P_i(T_1(X)< t_1, \ldots, T_{i - 1}(X)< t_{i - 1})  - \hatp_i >  c(n_i) |\cE)\\
 & \leq \IP( \E[\hatp_i\mid \cE] - \hatp_i >  c(n_i) |\cE) \\
 & = \IP\left( \left. \E\left[\left.\frac{n'_i}{n_i}\right| \cE\right] - \frac{n'_i}{n_i}>  c(n_i) \right|\cE\right)\\ 
 & = \IP\left( \left. \E\left[\left.n'_i\right| \cE\right] - n'_i>  n_i c(n_i) \right|\cE\right)\\
 & \leq e^{-2n_i c^2(n_i)}\,.
\end{align*}
The last inequality is by Hoeffding's inequality. Then, \begin{align}\label{eq:E_event}
&\IP( P_i ( T_1(X)< t_1, \ldots, T_{i - 1}(X)< t_{i - 1},\qmq{and}T_i(X)< t'_{i(k'_i)}\mid t'_{i(k'_i)}) > \alpha_i \mid \cE) \notag\\
 & = \IP\left[ P_i(T_i (X)< t'_{i(k'_i)} \mid t'_{i(k'_i)},   T_1(X)< t_1, \ldots, T_{i - 1}(X)< t_{i - 1}) \right.\notag\\
 &\qmq{}\left.\times P_i(T_1(X)< t_1, \ldots, T_{i - 1}(X)< t_{i - 1}) > \alpha_i \mid \cE\right] \notag \\
 & \leq \IP[ P_i(T_i (X)< t'_{i(k'_i)} \mid t'_{i(k'_i)}, T_1(X)< t_1, \ldots, T_{i - 1}(X)< t_{i - 1}) > \alpha_i/p_i \mid \cE] \notag\\
  & \qmq{}+ \IP[P_i(T_1(X)< t_1, \ldots, T_{i - 1}(X)< t_{i - 1}) > p_i \mid \cE] \notag \\
  &\leq \delta_i'   + e^{-2n_i c^2(n_i)} =  \delta_i  \,,
\end{align}


\textbf{Case 2:}  We consider the event $\cE^c = \{n_i' < \log \delta_i'/ \log (1 - \alpha_i') \qmq{or} 1 - \alpha'_i \leq 0\}$. Under this event, $\ot_i = t_{i(k_i)}$. Since $n_i$ is deterministic, we have
$$\IP\left(\left. P_i\left[\left.T_i(X)< t_{i(k)} \right| t_{i(k)}\right] > \alpha_i  \right| \cE^c \right) \leq   \sum^{k - 1}_{j = 0} {n_i \choose j}( \alpha_i)^{ j} (1  - \alpha_i)^{n_i - j} = v(k,n_i,\alpha_i)\,.$$
The proof is similar to that of Eq~\eqref{eq:threshold} and~\eqref{eq:E_i_bound}, and even easier since $n_i$ is deterministic. We do not repeat the argument here. Recall that $k_i = \max \{k \mid v(k,n_i,\alpha_i) \leq \delta_i\}$, so 
\begin{multline}\label{eq:t2_all}
 \IP\left(\left. P_i\left[T_1(X)< t_1, \ldots, T_{i - 1}(X)< t_{i - 1},\qmq{and}T_i(X) < t_{i(k_i)}\mid t_{i(k_i)}\right] > \alpha_i \right| \cE^c \right) \\ \leq \IP\left(\left. P_i \left[T_i(X)<t_{i(k_i)}\mid t_{i(k_i)}\right] > \alpha_i \right| \cE^c \right) \leq \delta_i \,.    
\end{multline}
Eq~\eqref{eq:t2_all} implies

 By Eq~\eqref{supp.eq:ot}, \eqref{eq:E_event} and~\eqref{eq:t2_all},
 $$ \IP\left(P_i\left[T_1(X)< t_1, \ldots, T_{i - 1}(X)< t_{i - 1},\qmq{and}T_i(X)<\ot_i \mid \ot_i \right] > \alpha_i\right) \leq \delta_i \,,$$
 which implies the inequality~(10) in Theorem~1.
 
 \subsection{The change of $v(k,n_2',\alpha_2')$ in Figure~1b}\label{supp.sec:v_change}

In the end of Section~2.3, we discuss the selection of $t_1$  and $t_2$ to minimize $R^c$. Recall that $k$ denotes the position of yellow ball (in $\cT'_{2}$) in Figure~1b; $\alpha_2'$ and $\delta_2'$ are the adjusted control level and violation tolerance for the  second {under-classification} error $R_{2\star}(\hatphi)$;  $n_2'$ is the cardinality of $\cT'_{2}$. $k$, $\alpha_2'$ and $n_2'$ are functions of $t_1$. In the following discussion, we will show that the change in $v(k, n_2', \alpha_2')$ with respect to changing $t_1$ is not monotonic. 

We consider a random variable $Z \sim \mbox{Binomial} (n, \alpha)$. We abbreviate $P_{n, \alpha}(z) = P_{n, \alpha}(Z = z)$ and $F_{n, \alpha}(z) = P_{n, \alpha}(Z \leq z)$. Note that $F_{n, \alpha}(z) = v(z, n, \alpha)$. We will explore how the change of $n$ (i.e., the change of $n_2'$) affect the value of $F_{n, \alpha}(z)$. The following lemma discusses the changes in case 1 of Figure~1b.

\begin{lemma}
Let $Z \sim \mbox{Binomial}(n, \alpha_{n} )$, where $\alpha_{n} = \frac{c_1}{n+ c_2}$ for some positive constant $c_1$ and $c_2$, then 
$F_{ n, \alpha_n}(z -1) < F_{ n + 1, \alpha_{n+1}}(z )$.
\end{lemma}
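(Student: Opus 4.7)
The plan is to prove the inequality by decomposing the difference $F_{n+1,\alpha_{n+1}}(z) - F_{n,\alpha_n}(z-1)$ into two nonnegative pieces via an intermediate distribution $\text{Binomial}(n+1, \alpha_n)$. Concretely, I would write
\begin{equation*}
F_{n+1,\alpha_{n+1}}(z) - F_{n,\alpha_n}(z-1) = \underbrace{\bigl[F_{n+1,\alpha_{n+1}}(z) - F_{n+1,\alpha_n}(z)\bigr]}_{(\mathrm{I})} + \underbrace{\bigl[F_{n+1,\alpha_n}(z) - F_{n,\alpha_n}(z-1)\bigr]}_{(\mathrm{II})},
\end{equation*}
and show $(\mathrm{I}) \ge 0$ and $(\mathrm{II}) > 0$.

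For $(\mathrm{I})$, I would invoke stochastic monotonicity of the Binomial family in its success probability: since $\alpha_n = c_1/(n+c_2)$ is strictly decreasing in $n$, we have $\alpha_{n+1} < \alpha_n$, so $\text{Binomial}(n+1,\alpha_{n+1})$ is stochastically dominated by $\text{Binomial}(n+1,\alpha_n)$, giving $F_{n+1,\alpha_{n+1}}(z) \ge F_{n+1,\alpha_n}(z)$. This piece is the easy one.

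For $(\mathrm{II})$, I would use an explicit coupling: if $Z \sim \text{Binomial}(n,\alpha_n)$ and $B \sim \text{Bernoulli}(\alpha_n)$ are independent, then $Z + B \sim \text{Binomial}(n+1,\alpha_n)$. Conditioning on $B$ yields
\begin{equation*}
F_{n+1,\alpha_n}(z) = (1-\alpha_n)\,F_{n,\alpha_n}(z) + \alpha_n\,F_{n,\alpha_n}(z-1),
\end{equation*}
so $(\mathrm{II}) = (1-\alpha_n)\bigl[F_{n,\alpha_n}(z) - F_{n,\alpha_n}(z-1)\bigr] = (1-\alpha_n)\,P_{n,\alpha_n}(z)$. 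This is strictly positive whenever $\alpha_n \in (0,1)$ and $z$ lies in the support $\{0,1,\dots,n\}$, which is the range relevant to the paper (for sufficiently large $n$, $\alpha_n = c_1/(n+c_2) < 1$ automatically). Combining $(\mathrm{I}) \ge 0$ and $(\mathrm{II}) > 0$ yields the strict inequality.

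The main obstacle is only a bookkeeping one: verifying that $\alpha_n < 1$ and that $z$ falls in the support $\{0,\ldots,n\}$ so that $P_{n,\alpha_n}(z) > 0$. These are trivially satisfied in the regime of interest (the lemma is applied in Section~2.3 with $k$ the rank of the yellow element in $\mathcal{T}'_2$, so $k \in \{1,\ldots,n_2'\}$), and for small $n$ the condition $\alpha_n < 1$ requires only $n > c_1 - c_2$, which can be noted as a mild implicit assumption. No new machinery is needed beyond the coupling and the standard monotonicity-in-$p$ property of the Binomial CDF.
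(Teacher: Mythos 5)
Your proof is correct and is essentially the same argument as the paper's: both rest on the Bernoulli coupling identity $F_{n+1,\alpha}(z) = F_{n,\alpha}(z-1) + (1-\alpha)P_{n,\alpha}(z)$ together with monotonicity of the Binomial CDF in the success probability, differing only in the choice of intermediate term (you split through $F_{n+1,\alpha_n}(z)$, the paper through $F_{n,\alpha_{n+1}}(z-1)$) and in justifying monotonicity by stochastic dominance rather than by the paper's explicit derivative formula $dF_{n,\alpha}(z)/d\alpha = -nP_{n-1,\alpha}(z)$ (which the paper also needs for its second lemma). Your added remark on the implicit requirements $\alpha_n<1$ and $z\in\{0,\dots,n\}$ for strictness is a fair point that the paper leaves unstated.
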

\begin{proof}
Let $Y = Z + X$, where $Z \sim \mbox{Binomial}(n,\alpha)$ and $X \sim \mbox{Bernoulli}(\alpha)$ are independent. Then, $Y \sim \mbox{Binomial}(n + 1,\alpha)$ and 
\begin{align}
    F_{n + 1,\alpha}(z)  &=    P(Y \leq z) =  P(X + Z \leq z) \notag\\
    & = P(X \leq 1 \mid Z \leq z - 1) P(Z \leq z - 1) +  P(X = 0 \mid Z = z) P(Z = z) \notag\\
    & = P(X \leq 1) F_{n, \alpha} (z -1)  +  P(X = 0) P_{n, \alpha}(z) \notag\\
    & = F_{n, \alpha} (z -1)  +  (1 -\alpha) P_{n, \alpha}(z)\,. \label{eq:FF_n_differ}
\end{align}
Also, for fixed $z$ and $n$, we have 
\begin{align*}
    \frac{d  F_{n,\alpha}(z) }{d\alpha} & =  \frac{d}{d\alpha}  (n - z ) {n \choose z} \int^{1 - \alpha}_0 u^{ n - z - 1} ( 1- u)^{z} du\\
   & = - (n - z) {n \choose z}  \alpha^{z} (1 - \alpha)^{ n - z - 1}  = - n P_{n -1, \alpha} (z)\,.
\end{align*}
Therefore, there exists a constant $c \in [\alpha_{n + 1}, \alpha_{n}]$ such that 
\begin{equation}\label{eq:FF_differ}
    \frac{F_{n, \alpha_{n+1}} (z)  - F_{n, \alpha_{n}} (z)}{\alpha_{n} - \alpha_{n + 1} } = n P_{n -1, c} (z)\,.
\end{equation}
Note that $ \alpha_n  - \alpha_{n + 1} > 0$. Then,
\begin{align*}
     &F_{n + 1, \alpha_{n+ 1}}(z) - F_{n, \alpha_{n}}(z - 1)\\
     &= F_{n + 1, \alpha_{n+ 1}}(z) - F_{n, \alpha_{n + 1}} (z -1) + F_{n, \alpha_{n + 1}} (z - 1) - F_{n, \alpha_{n}}(z - 1)\\
     & \geq (1 - \alpha_{n+1}) P_{n ,\alpha_{n+1}}(z) >0.
\end{align*}
\end{proof}

Note that
$$\alpha_2' = \frac{\alpha_2}{p_2} = \frac{\alpha_2}{n_2'/n_2  + c(n_2)} = \frac{\alpha_2 n_2}{n_2'  + c(n_2) n_2}.$$
Since $n_2$ is fixed, we can write $\alpha_2$ in the form of $\frac{c_1}{n_2' + c_2}$ for some positive constants $c_1$ and $c_2$.
In other words, if we remove an element to the left of the yellow element in Figure~1b case 1, the value of $v(k,n_2', \alpha_2'$) will decrease.

By contrast, there are situations in case 2 of Figure~1b that will increase $v(k,n_2', \alpha_2')$, as discussed in the following lemma.

\begin{lemma}
Let $Z \sim \mbox{Binomial}(n, \alpha_{n} )$, where $\alpha_{n} = \frac{c_1}{n+ c_2}$ and $c_2 \geq c_1 > 0$. Fix $z > 0$, $F_{ n, \alpha_n}(z)$ is decreasing with respect to $n \geq n_l$ where  $n_l = \min\{n \mid (n - 1) \alpha_{n+ 1} \geq z\}$.
\end{lemma}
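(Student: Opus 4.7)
The plan is to establish the one-step inequality $F_{n+1,\alpha_{n+1}}(z) \leq F_{n,\alpha_n}(z)$ for every $n \geq n_l$. The key first move is to recycle the identity $F_{n+1,\alpha}(z) = F_{n,\alpha}(z-1) + (1-\alpha)P_{n,\alpha}(z)$ derived in the proof of the previous lemma. Combining it with the trivial $F_{n,\alpha}(z) = F_{n,\alpha}(z-1) + P_{n,\alpha}(z)$ yields the compact form $F_{n+1,\alpha}(z) = F_{n,\alpha}(z) - \alpha P_{n,\alpha}(z)$. Evaluating at $\alpha = \alpha_{n+1}$ and subtracting $F_{n,\alpha_n}(z)$ gives the clean decomposition
\begin{equation*}
F_{n+1,\alpha_{n+1}}(z) - F_{n,\alpha_n}(z) = \bigl[F_{n,\alpha_{n+1}}(z) - F_{n,\alpha_n}(z)\bigr] - \alpha_{n+1} P_{n,\alpha_{n+1}}(z),
\end{equation*}
and by the derivative formula $\tfrac{d}{d\alpha} F_{n,\alpha}(z) = -n P_{n-1,\alpha}(z)$ (already used in the previous lemma) the bracketed quantity equals $\int_{\alpha_{n+1}}^{\alpha_n} n P_{n-1,\alpha}(z)\, d\alpha$. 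Everything now reduces to bounding this integral by $\alpha_{n+1} P_{n,\alpha_{n+1}}(z)$.

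Next I would exploit the hypothesis $n \geq n_l$. A direct computation gives $\tfrac{d}{d\alpha}\log P_{n-1,\alpha}(z) = \tfrac{z}{\alpha} - \tfrac{n-1-z}{1-\alpha}$, which is non-positive exactly when $\alpha \geq z/(n-1)$. By definition of $n_l$, the hypothesis is precisely $\alpha_{n+1} \geq z/(n-1)$, so $\alpha \mapsto P_{n-1,\alpha}(z)$ is non-increasing on the whole interval $[\alpha_{n+1}, \alpha_n]$. Hence the integral is bounded by its left-endpoint value:
\begin{equation*}
\int_{\alpha_{n+1}}^{\alpha_n} n P_{n-1,\alpha}(z)\, d\alpha \leq (\alpha_n - \alpha_{n+1})\, n\, P_{n-1,\alpha_{n+1}}(z).
\end{equation*}
At this point I would invoke the elementary pmf identity $n P_{n-1,\alpha}(z) = \tfrac{n-z}{1-\alpha} P_{n,\alpha}(z)$ (obtained by a one-line comparison of $\binom{n-1}{z}$ and $\binom{n}{z}$), which converts the right-hand side to a multiple of $P_{n,\alpha_{n+1}}(z)$. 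Cancelling this common factor with the $-\alpha_{n+1}P_{n,\alpha_{n+1}}(z)$ term reduces the desired inequality to the scalar relation $(\alpha_n - \alpha_{n+1}) \tfrac{n-z}{1-\alpha_{n+1}} \leq \alpha_{n+1}$.

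The last step is then pure algebra. Substituting $\alpha_n - \alpha_{n+1} = \alpha_{n+1}/(n+c_2)$ and $1-\alpha_{n+1} = (n+1+c_2-c_1)/(n+1+c_2)$, the inequality becomes $(n-z)(n+1+c_2) \leq (n+c_2)(n+1+c_2-c_1)$. Expanding both sides, the gap equals $n(c_2 - c_1) + c_2(1 + c_2 - c_1) + z(n+1+c_2)$, which is nonnegative precisely because $c_2 \geq c_1 > 0$ and $z \geq 0$. The case $z > n$ is trivial since then $F_{n,\alpha_n}(z) = 1$. I expect the only genuine subtlety to lie in coordinating the two hypotheses: $n \geq n_l$ is used only to secure monotonicity of $P_{n-1,\alpha}(z)$ so that the integrand can be pulled out at its left endpoint, whereas $c_2 \geq c_1$ is exactly what makes the resulting one-step algebraic comparison close. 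Neither hypothesis is redundant.
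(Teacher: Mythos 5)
Your argument is correct and follows essentially the same route as the paper's: the identity $F_{n+1,\alpha}(z)=F_{n,\alpha}(z)-\alpha P_{n,\alpha}(z)$, the derivative formula for $F_{n,\alpha}(z)$ in $\alpha$, monotonicity of $P_{n-1,\alpha}(z)$ on $[\alpha_{n+1},\alpha_n]$ when $(n-1)\alpha_{n+1}\geq z$, and the same closing algebraic comparison (you merely phrase the mean-value step as an integral bounded at its left endpoint and factor out $P_{n,\alpha_{n+1}}(z)$ to reduce to a scalar inequality, where the paper carries the binomial coefficients through explicitly). The one point you elide is that $n\geq n_l$ yields $(n-1)\alpha_{n+1}\geq z$ for \emph{all} such $n$ only because $(n-1)\alpha_{n+1}=\tfrac{(n-1)c_1}{n+1+c_2}$ is increasing in $n$ --- a one-line check the paper includes and you should add, since the definition of $n_l$ as a minimum does not by itself give the condition for every $n\geq n_l$.
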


\begin{proof}
Eq~\eqref{eq:FF_n_differ} implies
\begin{equation}\label{eq:F_differ_1}
    F_{n + 1,\alpha}(z) - F_{n,\alpha}(z) = - \alpha P_{n ,\alpha}(z).
\end{equation}
On the other hand, for fixed $n$ and $z$,
\begin{equation}\label{eq:dp_alpha}
   \frac{d  \log(P_{n, \alpha}(z)) }{d\alpha}  =  \frac{z}{\alpha} - \frac{n - z}{1 - \alpha}
   \begin{cases}
    >0 \,, & \alpha < z/n \,;\\
    =0\,, & \alpha = z/n \,;\\
     <0\,, &\alpha > z/n\,\,,
   \end{cases}
\end{equation}
so  $P_{n - 1, c}(z)$ is decreasing on $[\alpha_{n + 1}, \alpha_{n}]$ when $ (n - 1) \alpha_{n + 1} \geq z$. Then, according to Eq~\eqref{eq:FF_differ}
\begin{equation}\label{eq:F_differ_2}
   F_{n, \alpha_{n+1}} (z)  - F_{n, \alpha_{n}} (z) \leq n(\alpha_{n} - \alpha_{n + 1} ) P_{n -1, \alpha_{n+ 1}} (z) \,. 
\end{equation}  

By Eq~\eqref{eq:F_differ_1} and~\eqref{eq:F_differ_2}, for  $ (n - 1) \alpha_{n + 1} \geq z$,
\begin{align*}
    F_{n + 1, \alpha_{n+ 1}}(z) - F_{n, \alpha_{n}}(z) &= F_{n + 1, \alpha_{n+ 1}}(z) - F_{n, \alpha_{n + 1}} (z) + F_{n, \alpha_{n + 1}} (z) - F_{n, \alpha_{n}}
    (z)\\
   & \leq - \alpha_{n+1} P_{n ,\alpha_{n+1}}(z) + n(\alpha_{n}  - \alpha_{n+1}) P_{n -1, \alpha_{n+1}} (z)\\
   & = - {n \choose z} (\alpha_{n+1})^{z + 1} (1 - \alpha_{n+1})^{ n - z} \\
   & \qmq{} +  \frac{n}{ n + c_2}  {n  - 1\choose z} (\alpha_{n+1})^{z + 1} (1 - \alpha_{n+1})^{ n - z - 1}\\
   & = \left( \alpha_{n+1}  - 1 + \frac{n - z}{ n + c_2}\right) {n \choose z} ( \alpha_{n +1})^{z+1} ( 1 -\alpha_{n +1})^{n - z - 1} \\
    & \leq  \left( -\frac{ n+1 + c_2 - c_1 }{n+1 + c_2}   +  \frac{n}{ n + c_2}\right) {n \choose z} ( \alpha_{n +1})^{z+1} ( 1 -\alpha_{n +1})^{n - z - 1} \\
    & =   - \frac{(c_2 - c_1) ( n + c_2) + c_2}{(n + c_2)(n + c_2+1)}  {n \choose z} ( \alpha_{n +1})^{z+1} ( 1 -\alpha_{n +1})^{n - z - 1}\\
    &< 0\,,
\end{align*}
since $ c_2 \geq c_1 > 0$. 

Furthermore,
$$(n - 1)\alpha_{n+1} = \frac{n -1}{ n + 1 + c_2} c_1$$ 
is increasing in $n$, i.e, any $n \geq n_l$ satisfies $(n - 1)\alpha_{n+1} \geq z$. It establishes that $F_{n, \alpha_{n}}(z)$ is decreasing with respect to $n$ when $n \geq n_l$. 
\end{proof}

This lemma presents a situation for case 2 in Figure~1b, where $v(k,n_2', \alpha_2')$ will increase.

\section{Additional results for simulation studies}

\subsection{Summary of simulation settings}

In the simulation studies, we consider $\cI = 3$ and the feature vectors in class $i$ are generated as ${(X^i)}^\top \sim N(\mu_i, I)$. The following simulation settings are used throughout the paper:

\begin{itemize}
    \item Setting \textbf{T1.1}:  $N_i = 500$, $\mu_1 = (0, -1)^\top$, $\mu_2 = (-1, 1)^\top$, $\mu_3 = (1, 0)^\top$. When applying the H-NP method, the observations are randomly separated into parts for score training, threshold selection, and computing empirical errors: $\cS_1$ is split into $50\%$, $50\%$ for $\cS_{1s}$, $\cS_{1t}$;  $\cS_2$ is split into $45\%$, $50\%$ and $5\%$ for $\cS_{2s}$, $\cS_{2t}$ and $\cS_{2e}$; $\cS_3$ is split into $95\%$, $5\%$ for $\cS_{3s}$, $\cS_{3e}$, respectively. 
    \item Setting \textbf{T2.1}: $N_i = 500$, $\mu_1 = (0, -1)^\top$, $\mu_2 = (-1, 1)^\top$, $\mu_1 = (0, -3)^\top$. The data splitting is the same as in the setting \textbf{T1.1} when applying the H-NP method.
    \item Setting \textbf{T3.1}: $N_1 = 1{,}000, N_2 = 200, N_3 = 800$. $\mu_1 = (0, 0)^\top$, $\mu_2 = (-0.5, 0.5)^\top$, and $\mu_3 = (2, 2)^\top$.  The data splitting is the same as in the setting \textbf{T1.1} when applying the H-NP method.
\end{itemize}

All the results in the simulation studies are based on 1{,}000 repetitions from a given setting. To approximate and evaluate the true population errors, we additionally generate $60{,}000$  observations as the test set. The ratio of the three classes in the test set is the same as $N_1:N_2:N_3$.

We further consider simulation settings \textbf{T1.2}-\textbf{T1.7}, which are variations of \textbf{T1.1} with the same values of $\mu_i$. The details of these simulation settings can be found in Table~\ref{supp.fig:setting summary}. In the following subsections, these settings allow us to investigate the impact of different splitting settings, score functions, and imbalanced class sizes on the performance of our H-NP classifier. We also compare the performance of our method with cost-sensitive learning, ordinal classification and weight-adjusted classification.

\begin{table}[h!!!]
\centering
\footnotesize
\begin{tabular}
{c|c|ccc|cccc|ccc}
\hline\hline
        & Class& & 1& & &2 & & & &3 \\\hline
     & & $\cS_{1s}$   & $\cS_{1t}$  & $N_1$ &  $\cS_{2s}$   & $\cS_{2t}$ & $\cS_{2e}$   &$N_2$ & $\cS_{3s}$  & $\cS_{3e}$   & $N_3$ \\ 
      Setting    &  Method   & (\%) & (\%)  & (size) &  (\%)   & (\%) & (\%)  & (size) & (\%) & (\%) & (size)\\ \hline
          \multicolumn{12}{c}{Basic setting}\\\hline
        \multirow{2}{*}{  T1.1 } & classical &  100 & - & $500$ &  100 & - & - & $500$ & 100  &- & $500$ \\
        & H-NP &  50 & 50 & $500$ &  45  & 50 & 5 & $500$ & 95  &5 & $500$  \\ \hline
          \multirow{2}{*}{T1.2} &classical&  100(90,10) & - & $500$ &  100(90,10) & - & - & $500$ & 100(90,10) & - & $500$  \\
        &H-NP&  50(40,10) & 50 & $500$ &  45(35,10) & 50 & 5 & $500$ & 95(85,10) &5 & $500$  \\\hline
         \multicolumn{12}{c}{Different splitting ratios (smaller $\cS_{it}$)}\\\hline
 T1.3& H-NP &  80 & 20 & $500$ &  75 & 20 & 5 & $500$ & 95 &5 & $500$  \\
        T1.4&H-NP &  70& 30 & $500$ &  65 & 30 & 5 & $500$ & 95 &5 & $500$  \\
            T1.5& H-NP & 60 & 40& $500$ &  55 & 40 & 5 & $500$ & 95 &5 & $500$  \\\hline
           \multicolumn{12}{c}{Different splitting ratios (larger $\cS_{it}$)}\\\hline
   T1.6& H-NP & 30 & 70 & $500$ &  25 & 70 & 5 & $500$ & 95 &5 & $500$  \\\hline
    
               \multicolumn{12}{c}{Imbalanced class sizes }\\\hline
      \multirow{2}{*}{T1.7} & classical &  100 & - & 500 &  100 & - & - & 500 & 100 & - & $1{,}000$ \\
      & H-NP&  50 & 50 & 500 &  45 & 50 & 5 & 500 & 95 &5 & $1{,}000$\\\hline
\end{tabular}
\caption{Data splitting and class sizes in the setting \textbf{T1.1} and its variations. The feature vectors in class $i$ are generated as ${(X^i)}^\top \sim N(\mu_i, I)$, where $\mu_1 = (0, -1)^\top$, $\mu_2 = (-1, 1)^\top$, $\mu_3 = (1, 0)^\top$ and $I$ is the $2\times2$ identity matrix. Under \textbf{T1.2}, we use 10\% of the data (split from $\cS_{is}$) to calibrate the  probability estimates $\widehat{\IP}(Y = i \mid X)$ for $i\in[\cI]$ before computing the score $T_i$. To approximate
the true population errors, we additionally generate $60{,}000$ observations and divide them into each class following the ratio $N_1:N_2:N_3$.}\label{supp.fig:setting summary}
\end{table}

\clearpage

\subsection{The influence of different splitting settings}\label{supp.sec:split}

We compare different splitting ratios in simulation settings \textbf{T1.3-T1.6}, assigning larger or smaller proportions of the data to select thresholds (see Table~\ref{supp.fig:setting summary} for more details).

Under the settings \textbf{T1.6} (larger threshold sets) and \textbf{T1.4} (smaller threshold sets), Figures~\ref{supp.fig:sim_error_0} and~\ref{supp.fig:sim_error_3} and show similar trends as Figure~2 in the main paper. As the minimum sample size requirement on $\cS_{it}$ is $59$ in this case, the setting in Figure~2 already exceeds the requirement. Further increasing its size makes minimal difference in the performance of the H-NP classifier.

Figure~\ref{sub.fig:partition_options} shows a comprehensive comparison of settings \textbf{T1.3-1.5} and the basic setting \textbf{T1.1}, which demonstrate that again these splitting ratios lead to no notable changes. We note that the setting \textbf{T1.3} only assigns 100 samples to each threshold selection set $\cS_{it}$, which is about twice the minimum sample size requirement.
\begin{figure}[!h]
     \begin{subfigure}[b]{0.33\textwidth}
    \centering
    \includegraphics[width=\textwidth]{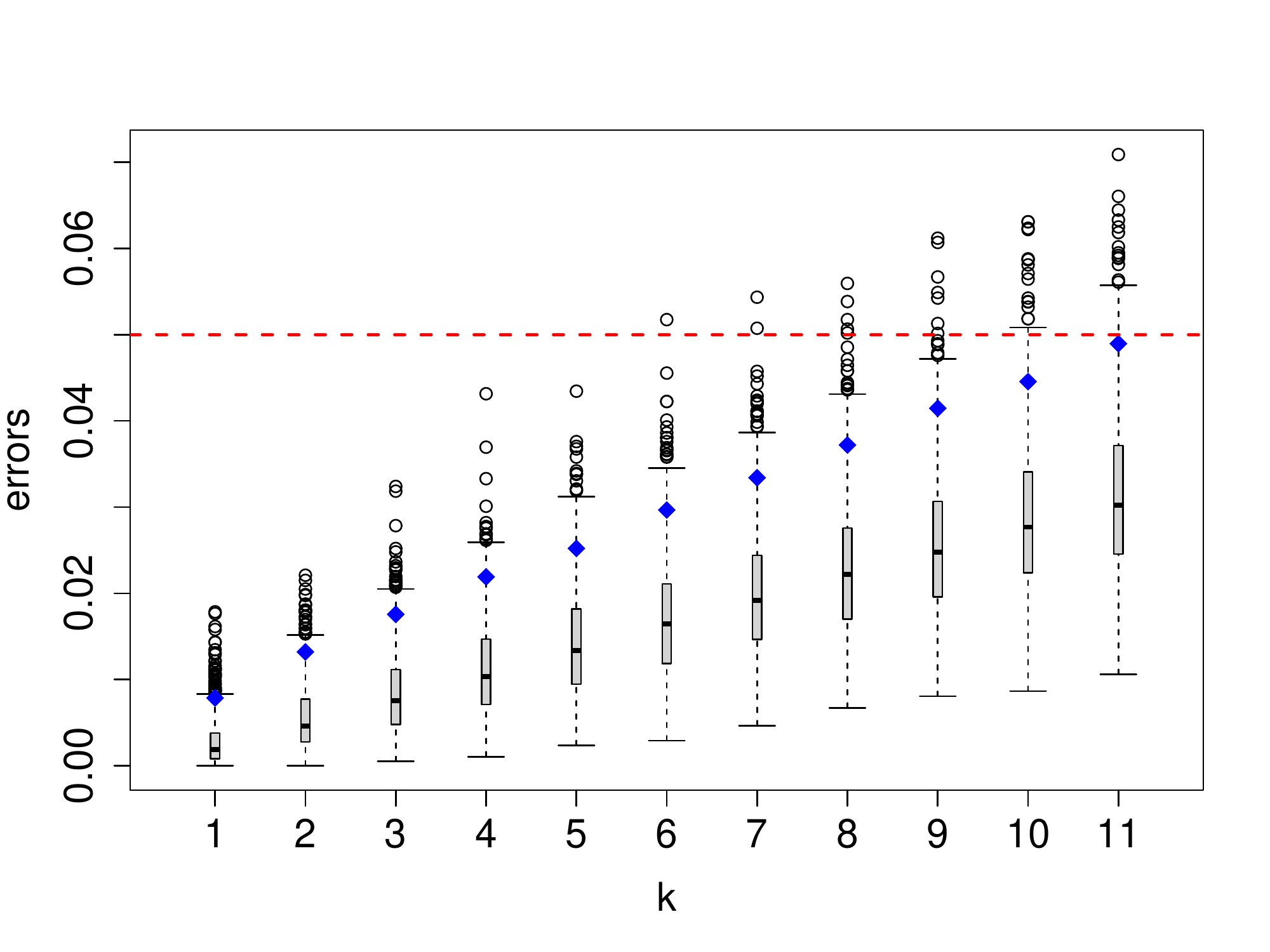}
    \caption{$R_{1\star}$}
     \end{subfigure}
     \hspace{-0.3cm}
     \begin{subfigure}[b]{0.33\textwidth}
         \centering
    \includegraphics[width=\textwidth]{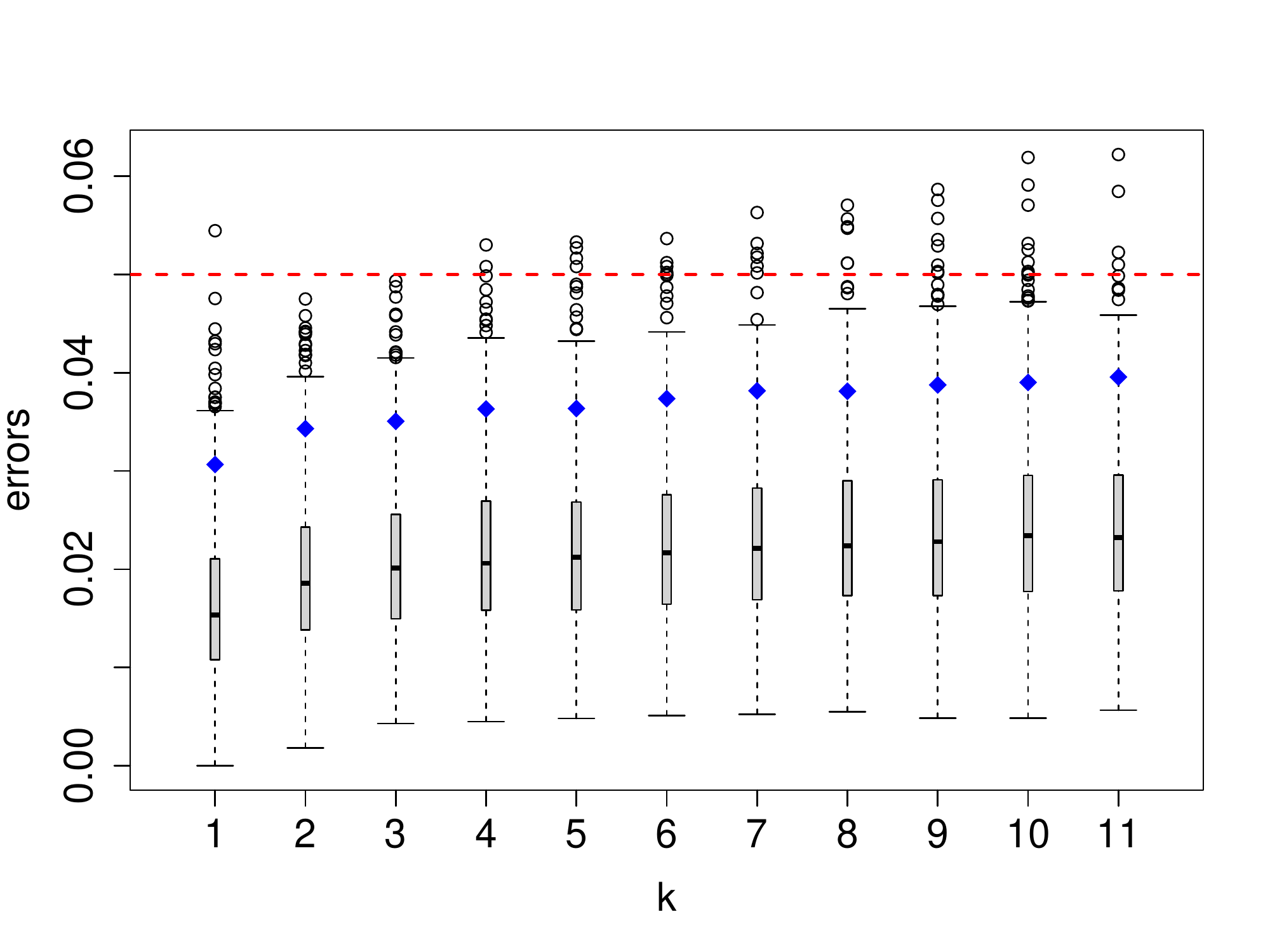} 
   \caption{$R_{2\star}$}
     \end{subfigure}
     \hspace{-0.3cm}
      \begin{subfigure}[b]{0.33\textwidth}
         \centering
    \includegraphics[width=\textwidth]{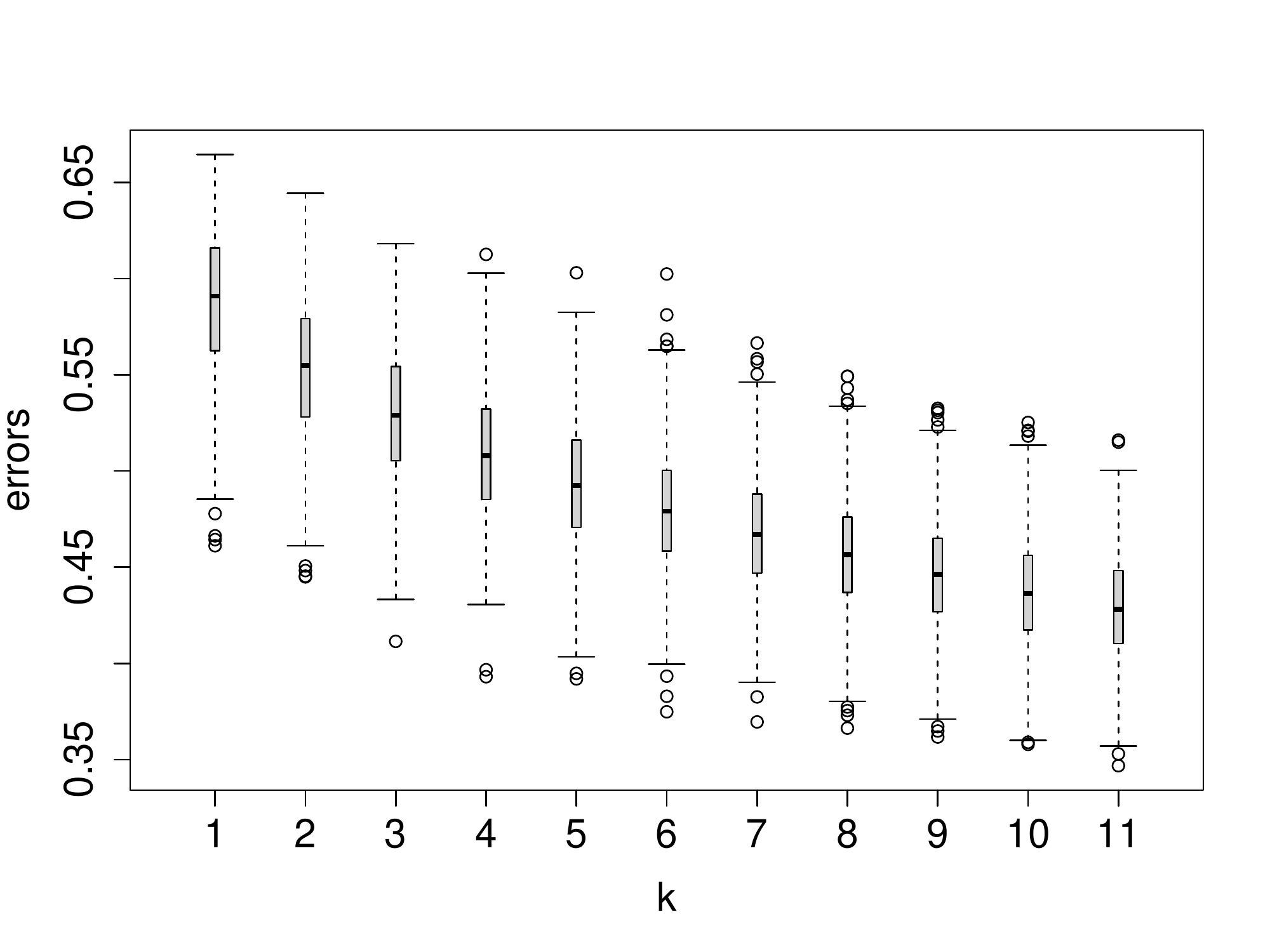}
   \caption{$R^c$}
     \end{subfigure}
     \caption{The distribution of approximate errors when $t_1$ is the  $k$-th largest element in $\cT_1\cap (-\infty, \ot_1)$. The $95\%$ quantiles of $R_{1\star}$ and $R_{2\star}$ are marked by blue diamonds.  The target control levels for  $R_{1\star}(\hatphi)$  and  $R_{2\star}(\hatphi)$ ($\alpha_1 = \alpha_2 = 0.05$) are plotted as red dashed lines.  The data are generated under the setting \textbf{T1.6} (details in Table~\ref{supp.fig:setting summary}). The setting is the same as setting \textbf{T1.1} except a different sample splitting ratio is used: samples in $\cS_1$ are randomly split into $30\%$ for $\cS_{1s}$ (score), $70\%$ for $\cS_{1t}$ (threshold); $\cS_2$ are split into $25\%$ for $\cS_{2s}$, $70\%$ for $\cS_{2t}$, $5\%$ for $\cS_{2e}$ (evaluation);  $\cS_3$ are split into $95\%$ for $\cS_{3s}$ and $5\%$ for $\cS_{3e}$. }\label{supp.fig:sim_error_0}
\end{figure}

\begin{figure}[!h]
     \begin{subfigure}[b]{0.33\textwidth}
    \centering
    \includegraphics[width=\textwidth]{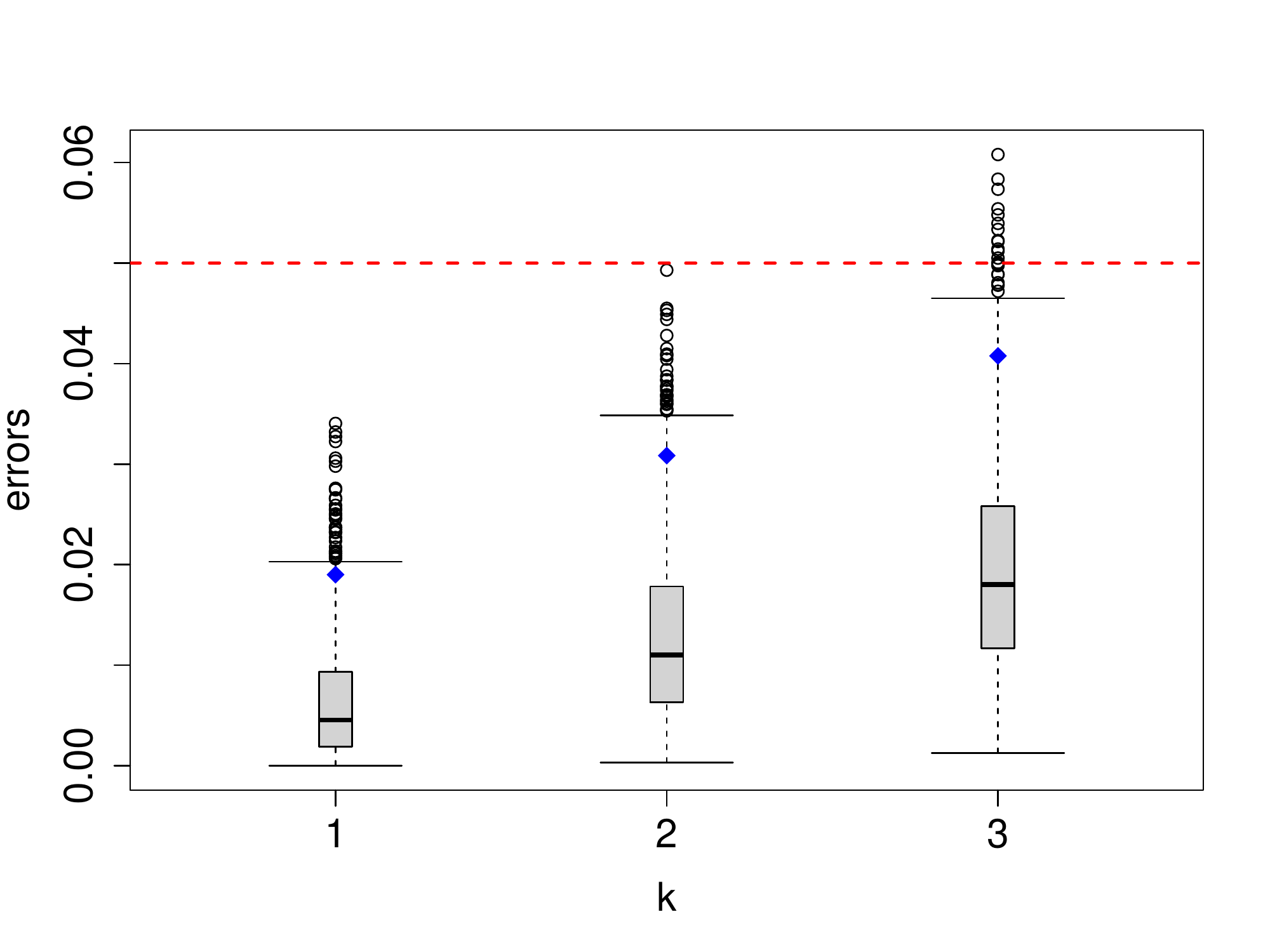}
    \caption{$R_{1\star}$}
     \end{subfigure}
     \hspace{-0.3cm}
     \begin{subfigure}[b]{0.33\textwidth}
         \centering
    \includegraphics[width=\textwidth]{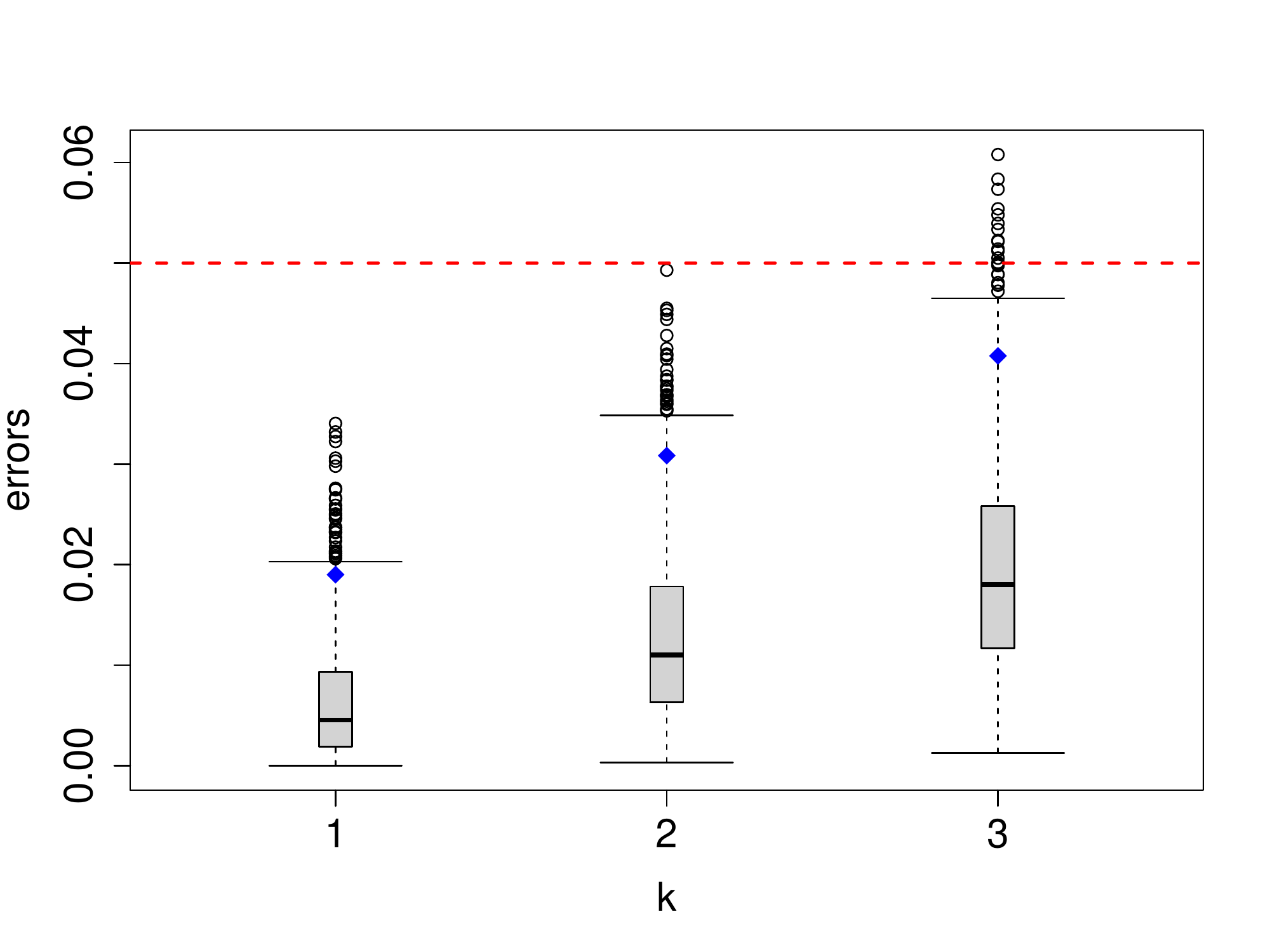} 
   \caption{$R_{2\star}$}
     \end{subfigure}
     \hspace{-0.3cm}
      \begin{subfigure}[b]{0.33\textwidth}
         \centering
    \includegraphics[width=\textwidth]{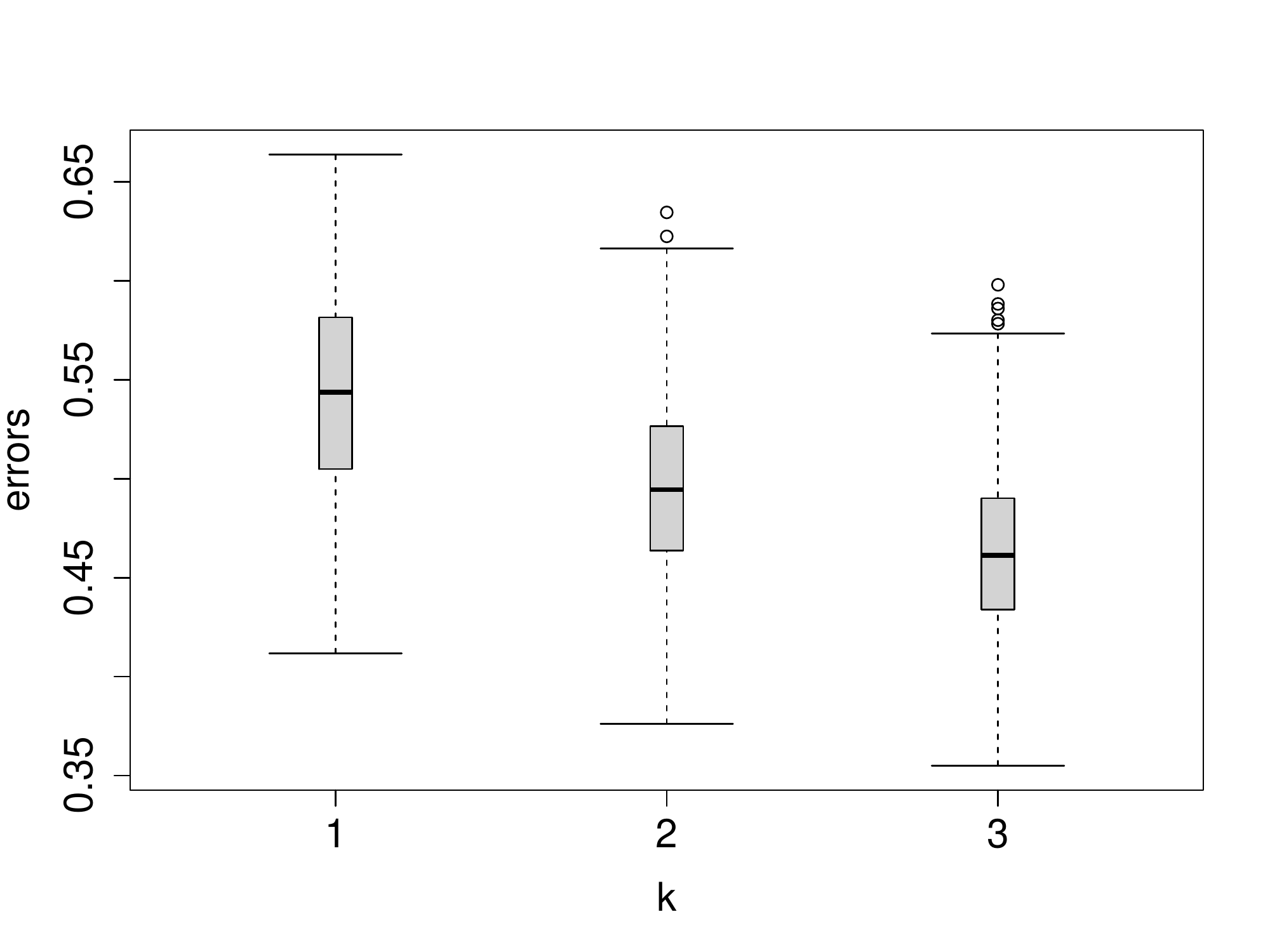}
   \caption{$R^c$}
     \end{subfigure}
     \caption{The distribution of approximate errors when $t_1$ is the  $k$-th largest element in $\cT_1\cap (-\infty, \ot_1)$. The $95\%$ quantiles of $R_{1\star}$ and $R_{2\star}$ are marked by blue diamonds.  The target control levels for  $R_{1\star}(\hatphi)$  and  $R_{2\star}(\hatphi)$ ($\alpha_1 = \alpha_2 = 0.05$) are plotted as red dashed lines.  The data are generated under the setting \textbf{T1.4} (details in Table~\ref{supp.fig:setting summary}). The setting is the same as setting \textbf{T1.1} except a different sample splitting ratio is used: samples in $\cS_1$ are randomly split into $70\%$ for $\cS_{1s}$, $30\%$ for $\cS_{1t}$; $\cS_2$ are split into $65\%$ for $\cS_{2s}$, $30\%$ for $\cS_{2t}$, $5\%$ for $\cS_{2e}$;  $\cS_3$ are split into $95\%$ for $\cS_{3s}$ and $5\%$ for $\cS_{3e}$. }\label{supp.fig:sim_error_3}
\end{figure}

\begin{figure}[!ht]
\centering
  \begin{subfigure}[b]{0.95\textwidth}
    \centering
    \includegraphics[width=\textwidth]{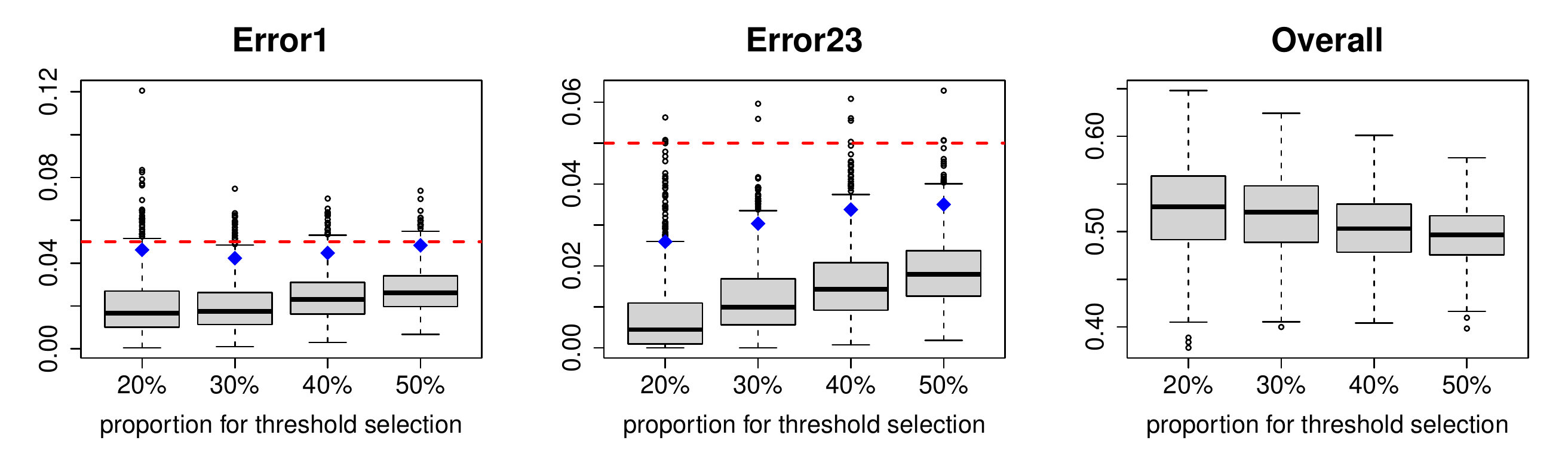}
    \caption{Logistic Regression}
     \end{subfigure}
    \begin{subfigure}[b]{0.95\textwidth}
    \centering
    \includegraphics[width=\textwidth]{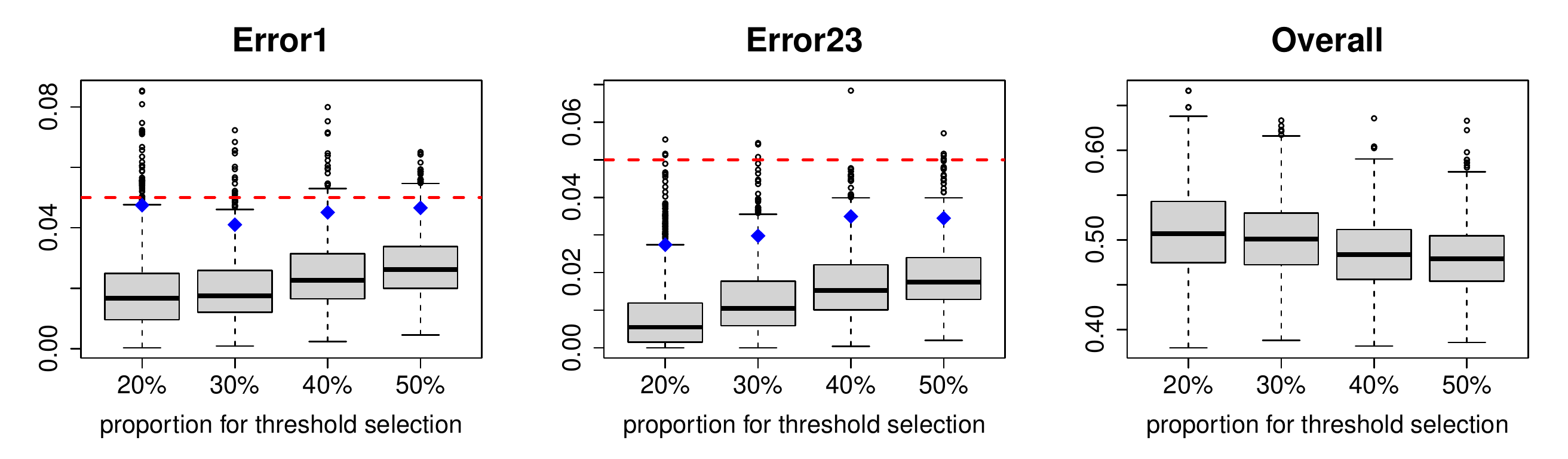}
    \caption{Random Forest}
     \end{subfigure}
 \begin{subfigure}[b]{0.95\textwidth}
    \centering
    \includegraphics[width=\textwidth]{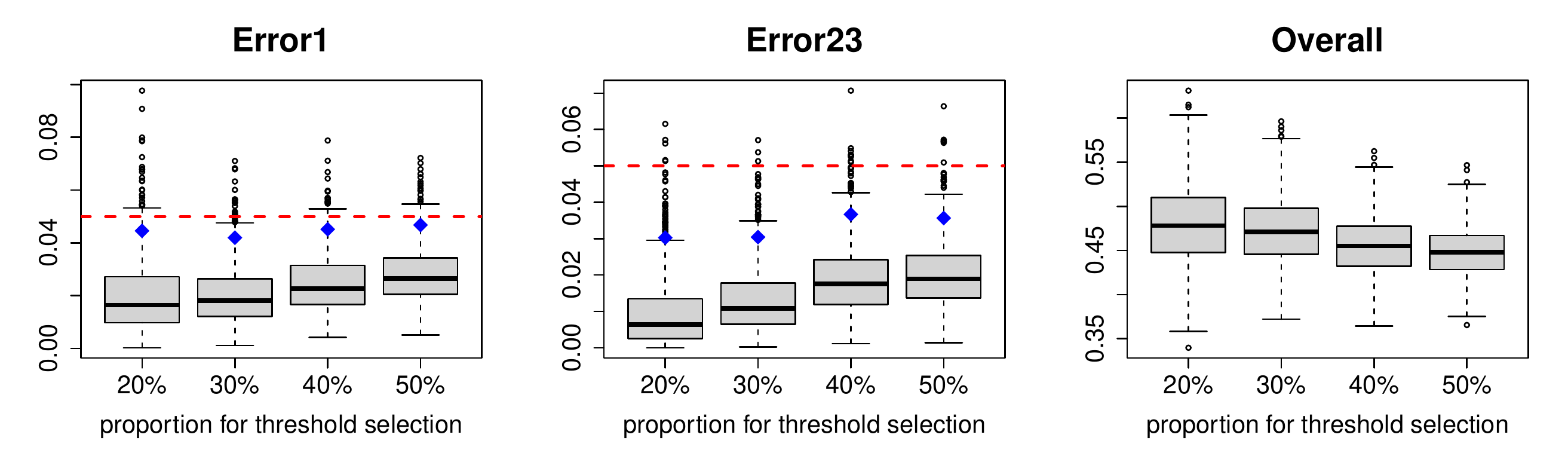}
    \caption{SVM}
     \end{subfigure}
     \caption{\footnotesize The distribution of errors when using 20\% (\textbf{T1.3}), 30\% (\textbf{T1.4}), 40\% (\textbf{T1.5}), and 50\% (\textbf{T1.1}) of the samples for threshold selection, and $\alpha_1, \alpha_2, \delta_1, \delta_2 = 0.05$.  The $(1 - \delta_1)\%$ quantiles of $R^\star_1$ and $(1 - \delta_2)\%$ quantiles of $R^\star_2$ are marked by blue diamonds. Red dashed lines represent the control levels $\alpha_1$ and $\alpha_2$. ``error1", ``error23", and ``overall" correspond to $R_{1\star}(\hatphi)$, $R_{2\star}(\hatphi)$, and $P(\hat{Y} \neq Y)$, respectively.}\label{sub.fig:partition_options}
\end{figure}

\subsection{Variations in calculating the scoring functions}\label{sec:supp_scoring}

As mentioned in Section~2.2, we have normalized each scoring function by the factor $\sum^{\cI}_{j= i + 1}\widehat{\IP}(Y = j\mid X)$, motivated by the NP lemma. 
To illustrate the benefit of normalization empirically, we compare the performance of the normalized scoring functions, $T_i(X) = \widehat{\IP}(Y = i\mid X)/\sum^{\cI}_{j= i + 1}\widehat{\IP}(Y = j\mid X)$, with that of the non-normalized ones, $T_i(X) = \hat{\IP}(Y= i \mid X)$, under the simulation setting \textbf{T1.1} (see Table~\ref{supp.fig:setting summary} for details), the same setting that generated Figure 5 in the main paper. Figure \ref{supp.fig:basic.compare} shows the results for two sets of $\alpha_i, \delta_i$ values. Both scoring functions effectively control the {under-classification} errors, namely ``error1'' and ``error23'', below the desired levels. However, the non-normalized approach controls ``error23" in a conservative manner, which results in higher values of ``error32" (i.e., $P_3(\hatY = 2)$). The other errors not depicted in the plots do not exhibit any noticeable differences.

\begin{figure}[!ht]
\centering
  \begin{subfigure}[b]{0.45\textwidth}
    \centering
    \includegraphics[width=\textwidth]{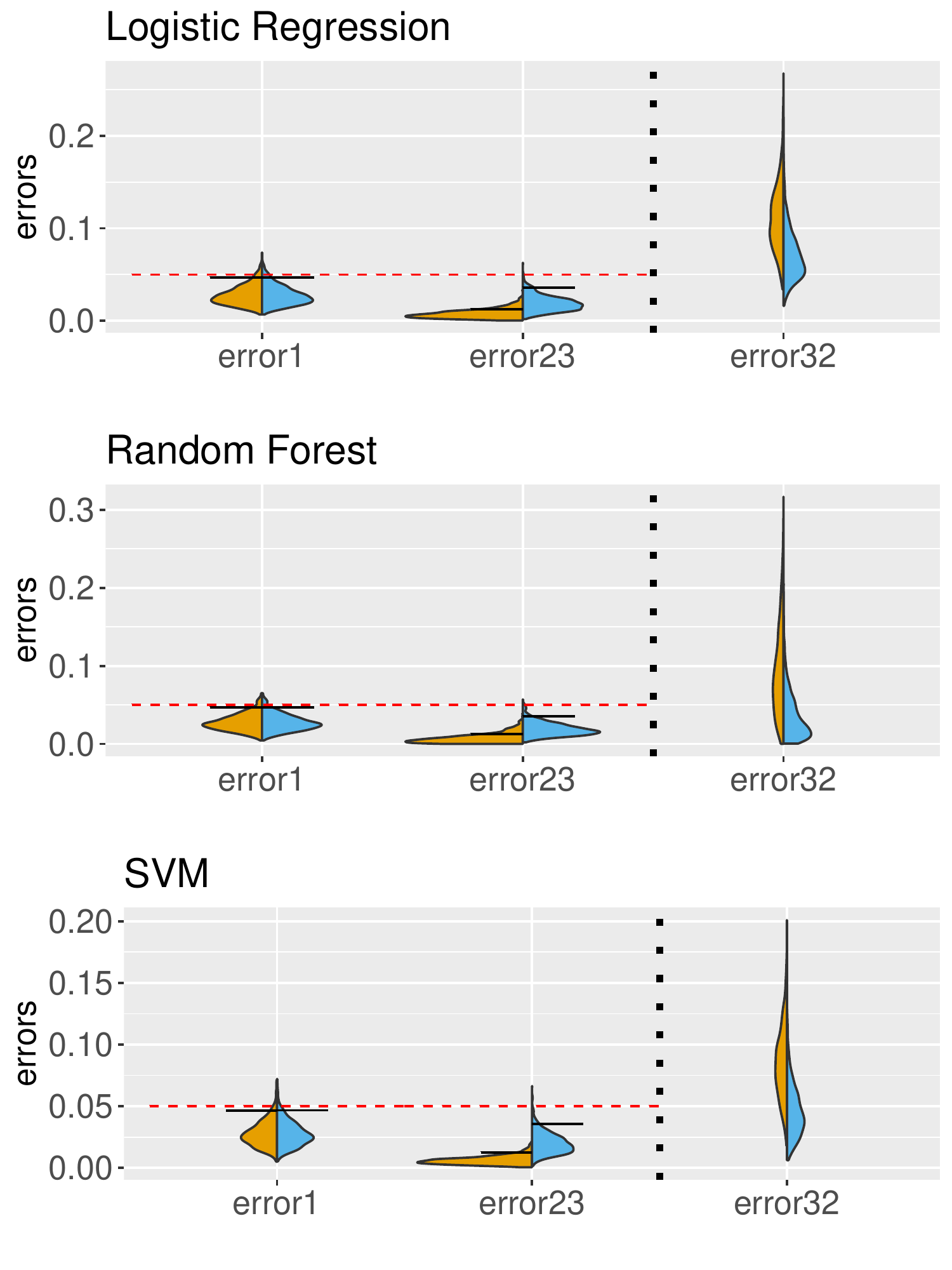}
    \caption{$\alpha_i,\delta_i = 0.05$}
     \end{subfigure}
    \begin{subfigure}[b]{0.45\textwidth}
    \centering
    \includegraphics[width=\textwidth]{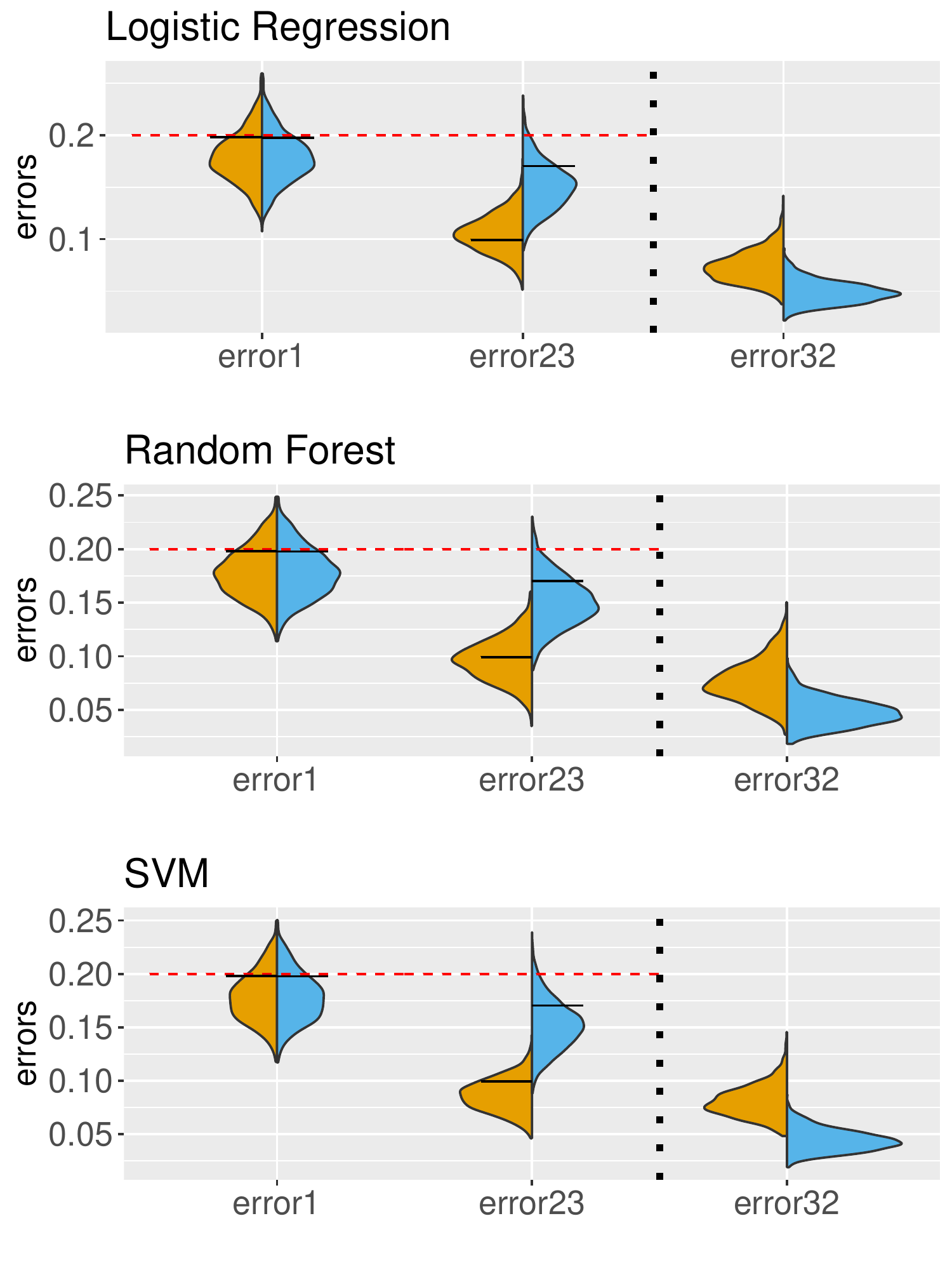}
    \caption{$\alpha_i,\delta_i = 0.2$}
     \end{subfigure}
 \begin{subfigure}[b]{0.45\textwidth}
    \centering
    \includegraphics[width=\textwidth]{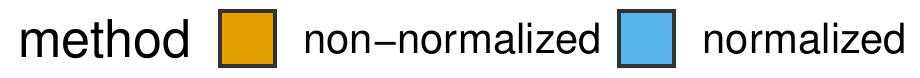}
     \end{subfigure} \caption{We compare the normalized scoring functions proposed in Section~2.2 with the non-normalized version (i.e., let $T_i(X) = \hat{\IP}(Y= i \mid X)$) under the setting \textbf{ T1.1}. ``error1", ``error23", and ``error32"  correspond to the errors $R_{1\star}(\hatphi)$, $R_{2\star}(\hatphi)$, and $P_3(\hat{Y}= 2)$.} \label{supp.fig:basic.compare}
\end{figure}

Next, we note that for general machine learning models such as the random forest and SVM, a probability calibration procedure is often applied to ``correct'' the output scores for more accurate predicted probabilities. 
Under the setting \textbf{T1.1}, we  compare the original scores from each base classifier with the calibrated scores calculated by the function \texttt{CalibratedClassifierCV}  \citep{zadrozny2002transforming} in the Python package \texttt{sklearn}. The latter is denoted as \textbf{T1.2} in Table~\ref{supp.fig:setting summary}, where 10\% of each dataset was used for calibration. Figure~\ref{supp.fig:inf_calibration} shows the effect of calibration for all the errors under two different sets of $\alpha_i, \delta_i$ values. Little differences can be seen for logistic regression and random forest. For SVM, the scores with no calibration perform better in $P_3(\hat{Y}= 1)$, $P_3(\hat{Y}= 2)$ and $P(\hat{Y} \neq Y)$. In all cases, the H-NP approach maintains effective control over the {under-classification} errors, regardless of the type of scores used.

\begin{figure}[!ht]
\centering
  \begin{subfigure}[b]{0.45\textwidth}
    \centering
    \includegraphics[width=\textwidth]{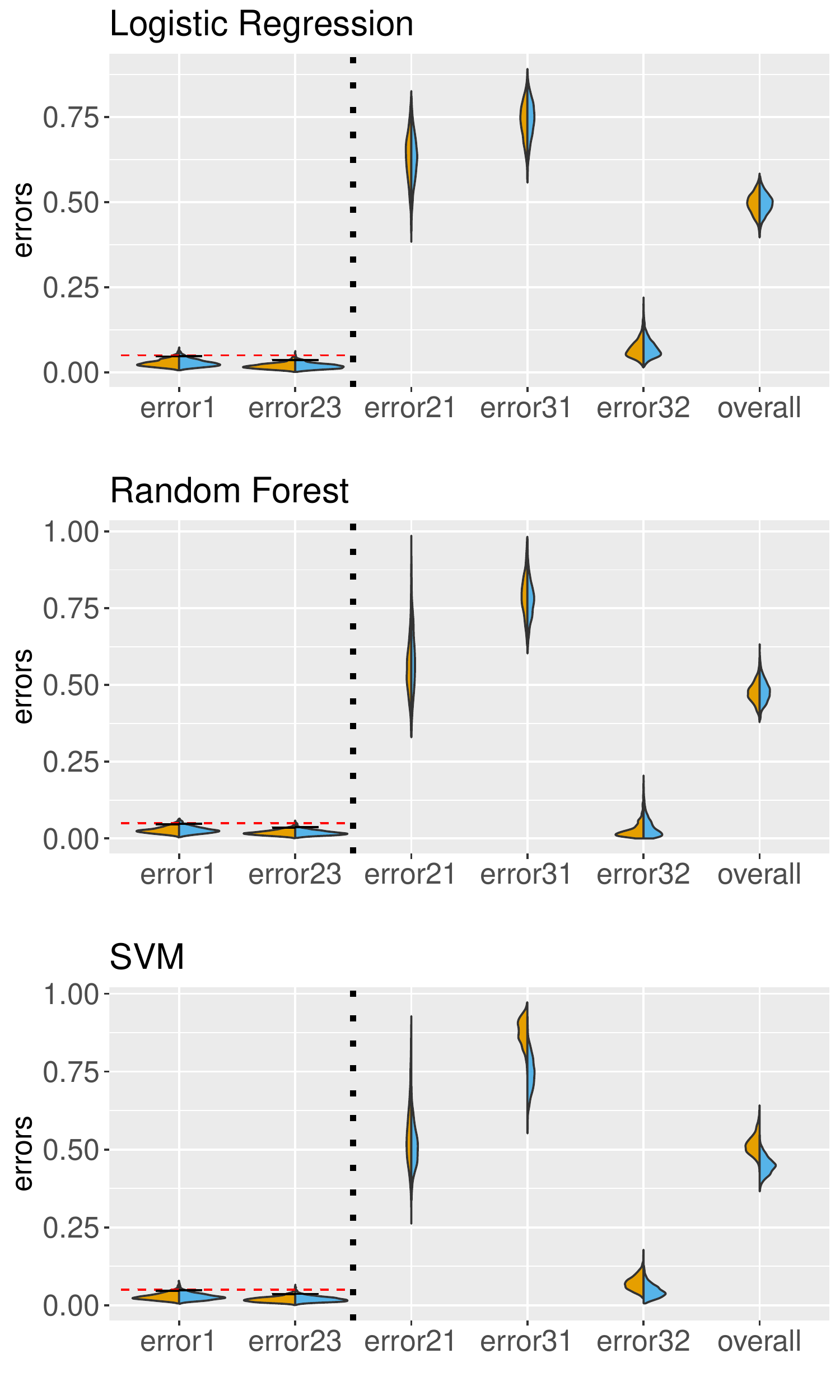}
    \caption{$\alpha_i,\delta_i = 0.05$}
     \end{subfigure}
    \begin{subfigure}[b]{0.45\textwidth}
    \centering
    \includegraphics[width=\textwidth]{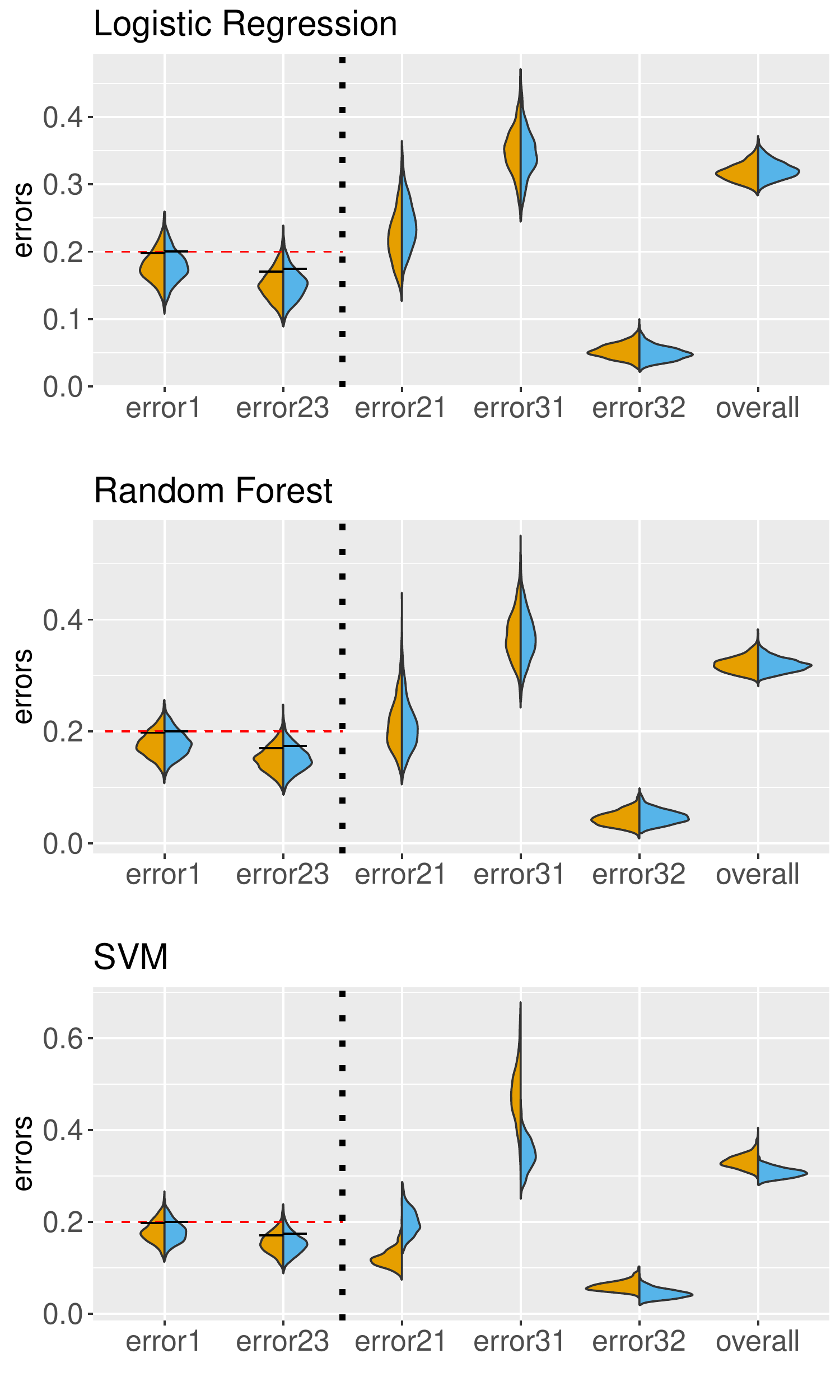}
    \caption{$\alpha_i,\delta_i = 0.2$}
     \end{subfigure}
 \begin{subfigure}[b]{0.45\textwidth}
    \centering
    \includegraphics[width=\textwidth]{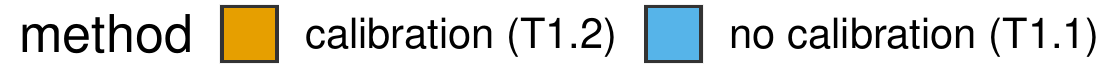}
     \end{subfigure}
     \caption{The distribution of errors when using the original (\textbf{T1.1}) and calibrated (\textbf{T1.2}) scores. ``error1", ``error23", ``error21", ``error31'', ``error32", ``overall" correspond to $R_{1\star}(\hatphi)$, $R_{2\star}(\hatphi)$, $P_2(\hat{Y}= 1)$, $P_3(\hat{Y}= 1)$, $P_3(\hat{Y}= 2)$ and $P(\hat{Y} \neq Y)$, respectively.}\label{supp.fig:inf_calibration}
\end{figure}

\subsection{Imbalanced class sizes and weight-adjusted classification}\label{supp.sec:weighted}

Imbalanced class sizes can lead to more difficulty in controlling prioritized errors. To investigate this effect, we consider the simulation setting \textbf{T1.7} (with sample sizes of $N_1 = 500, N_2 = 500, N_3 = 1{,}000$, see Table~\ref{supp.fig:setting summary}), where the prioritized classes have smaller sample sizes. We compare the results from the classical, H-NP, and weight-adjusted paradigms, where heavier weights are assigned to classes 1 and 2 when computing the empirical loss during optimization.

Table~\ref{supp.fig:T6.imbalance.small_control} shows the averages of the approximate errors and the relevant quantiles based on our chosen tolerance levels, with visualizations provided in Figure~\ref{supp.fig:T6.imbalance.weight.boxplot}. As expected, the imbalanced setting significantly increased the {under-classification} errors under the classical paradigm compared with the balanced setting. We implemented H-NP with two sets of $\alpha_i, \delta_i$ values, and in both settings the {under-classification} errors are effectively controlled under the specified levels.

\begin{table}[h!!!]
\centering
\footnotesize
\begin{tabular}{c|c|cc|c}
\hline\hline
             \multicolumn{5}{c}{Logistic Regression} \\\hline
            Paradigm            &    & Error1 & Error23 & Overall \\ \hline
     \multirow{2}{*}{classical} & mean & 0.491 & 0.176 & 0.264\\
                                &90\% quantile &0.509 & 0.188&  \\
                                &80\% quantile &0.504 & 0.184 & \\
                                &70\% quantile & 0.499 & 0.181 & \\\hline
                                H-NP &   mean   &  0.075 & 0.058  & 0.464 \\
$(\alpha_i,\delta_i = 0.10)$&90\% quantile    &0.098 & 0.077  & \\\hline
      weight & mean &0.135 & 0.007 & 0.466\\
           80:50:1 &90\% quantile &0.145 & 0.010&  \\\hline
          H-NP &   mean   &0.179 & 0.149  & 0.35\\
$(\alpha_i,\delta_i= 0.20)$ &80\% quantile    &0.199 & 0.167  & \\\hline
        weight   & mean &0.203 & 0.032 & 0.36\\
            15:12:1&80\% quantile & 0.211 & 0.036  &\\\hline\hline

    \multicolumn{5}{c}{Random Forest} \\\hline
    Paradigm &    & Error1 & Error23  & Overall  \\ \hline
        \multirow{2}{*}{classical} & mean & 0.489 & 0.184  & 0.271\\
                                &90\% quantile & 0.529 & 0.213  &  \\
                                &80\% quantile &0.514 & 0.202  & \\
                                &70\% quantile & 0.504 & 0.195  & \\\hline
                                H-NP &   mean   & 0.075 & 0.056  & 0.468 \\
$(\alpha_i,\delta_i = 0.10)$&90\% quantile    &0.098 & 0.074  & \\\hline
weight & mean &0.139 & 0.002 & 0.538\\
           15:10:1 &90\% quantile &0.158 & 0.006   &  \\\hline
                               H-NP &   mean   &0.178 & 0.148 & 0.354\\
$(\alpha_i,\delta_i = 0.20)$ &80\% quantile    &0.198 & 0.167  & \\\hline
weight & mean &0.203 & 0.045  & 0.365\\
           4:4:1 &80\% quantile &0.223 & 0.053  &  \\\hline\hline
 
\end{tabular}
\caption{ The averages and quantiles of approximate errors under the setting \textbf{T1.7}.  ``error1", ``error23", and ``overall" correspond to $R_{1\star}(\hatphi)$, $R_{2\star}(\hatphi)$, and $P(\hat{Y} \neq Y)$, respectively.}\label{supp.fig:T6.imbalance.small_control}
\end{table}

When choosing the weights for the weight-adjusted paradigm, we note that there is no direct correspondence between the weights and our parameters $\alpha_i, \delta_i$. For a fair comparison, we performed a grid search on the weights to find weight combinations that led to an overall error roughly similar to that of the H-NP classifier. The grid search is computationally intensive, and the weights are more difficult to interpret than $\alpha_i$ and $\delta_i$. Table~\ref{supp.fig:T6.imbalance.small_control} and Figure~\ref{supp.fig:T6.imbalance.weight.boxplot} show the matched H-NP and weight-adjusted classifications as pairs adjacent to each other for easy comparison. We find that significantly different weights are needed for logistic regression and random forest to achieve similar under-diagnostic errors, suggesting that the choice of weights is not robust across different base classifiers, while the H-NP method is much more consistent. At similar overall error levels, the weight-adjusted classification gives higher $R_{1\star}(\hatphi)$, while the control on $R_{2\star}(\hatphi)$ is more conservative.

\begin{figure}[!ht]
    \centering
    \includegraphics[width=\textwidth]{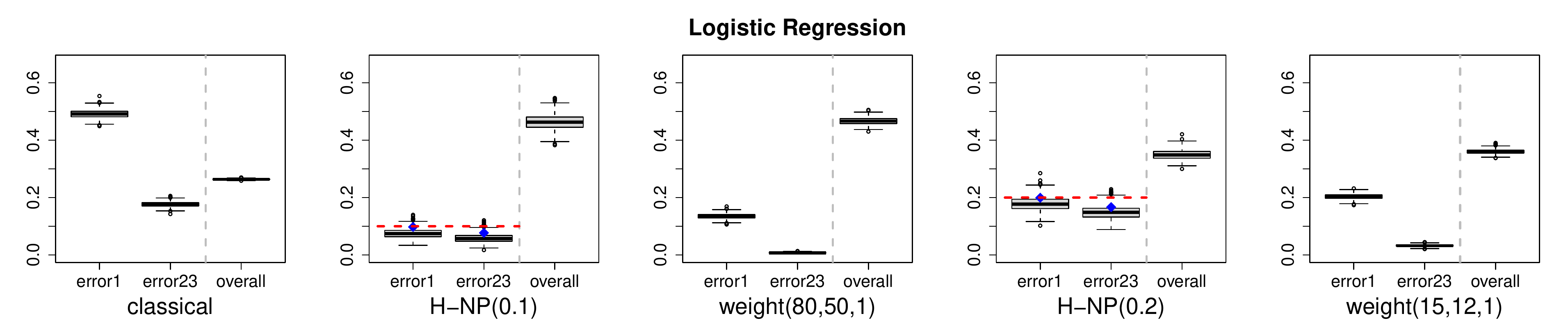}
     \includegraphics[width=\textwidth]{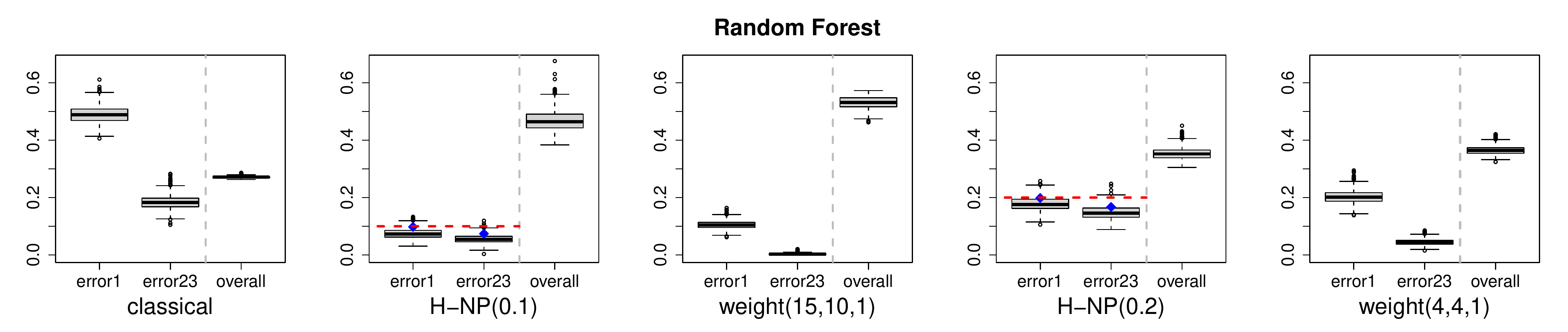}
     \caption{The distributions of approximate errors in the classical, H-NP and weight-adjusted classification paradigms under the setting \textbf{T1.7}. For H-NP, the values in parentheses indicate the values of $\alpha_i$ and $\delta_i$. For weight-adjusted classification,  the values in the parentheses indicate the weights assigned to classes 1,2, and 3, respectively. ``error1", ``error23", and ``overall" correspond to $R_{1\star}(\hatphi)$, $R_{2\star}(\hatphi)$, and $P(\hat{Y} \neq Y)$, respectively.}\label{supp.fig:T6.imbalance.weight.boxplot}
\end{figure}

\subsection{Comparing with cost-sensitive learning}\label{supp.sec:imb_and_cs}

Cost-sensitive learning is an alternative approach to asymmetric error control. However, the costs assigned to different types of errors can be less interpretable and harder to select from a practitioner's perspective (especially in the multi-class setting) than our parameters $\alpha_i$ and $\delta_i$, which represent an upper bound on error rate and a tolerance level in probability, respectively. To perform numerical comparisons, we first note that there is no direct correspondence between our $\alpha_i, \delta_i$ values and the error costs. The only work we are aware of connecting the multi-class NP (NPMC) problem with cost-sensitive learning is \citet{feng2021targeted}.
Different from our problem setting, the NPMC method controls $R_{1\star}(\hat{\phi})$ and $P_2(\hat{Y} \neq 2)$ at levels $\Tilde{\alpha}_1$ and $\Tilde{\alpha}_2$ by computing the appropriate cost assignment. Since $P_2(\hat{Y} \neq 2)$ is a sum of $R_{2\star}(\hat{\phi})$ and $P_2(\hat{Y} =1)$, for a fairer comparison, we set the control levels in NPMC and H-NP to be $\Tilde{\alpha}_2=2\alpha_2$. We compared NPMC and H-NP under the simulation setting \textbf{T1.1} (details in Table~\ref{supp.fig:setting summary}) for feasible choices $\Tilde{\alpha}_i$. 
Setting $\Tilde{\alpha}_1,\Tilde{\alpha}_2=0.2$ for NPMC and $\alpha_1=0.2$, $\alpha_2=0.1$, and $\delta_1, \delta_2=0.1$ for H-NP, Figure~\ref{supp.fig:npmc} shows the distributions of the approximate {under-classification} errors from logistic regression and SVM, with the red dashed lines representing the 90\% quantiles. The distributions of $R_{1\star}(\hatphi)$ highlight the difference between the approximate control by NPMC and the high probability control by H-NP. The NPMC's control on $R_{2\star}(\hatphi)$ appears to be slightly more conservative, resulting in slightly higher $P_3(\hat{Y}= 2)$ values.

\begin{figure}[!ht]
\centering
  \begin{subfigure}[b]{0.45\textwidth}
    \centering
    \includegraphics[width=\textwidth]{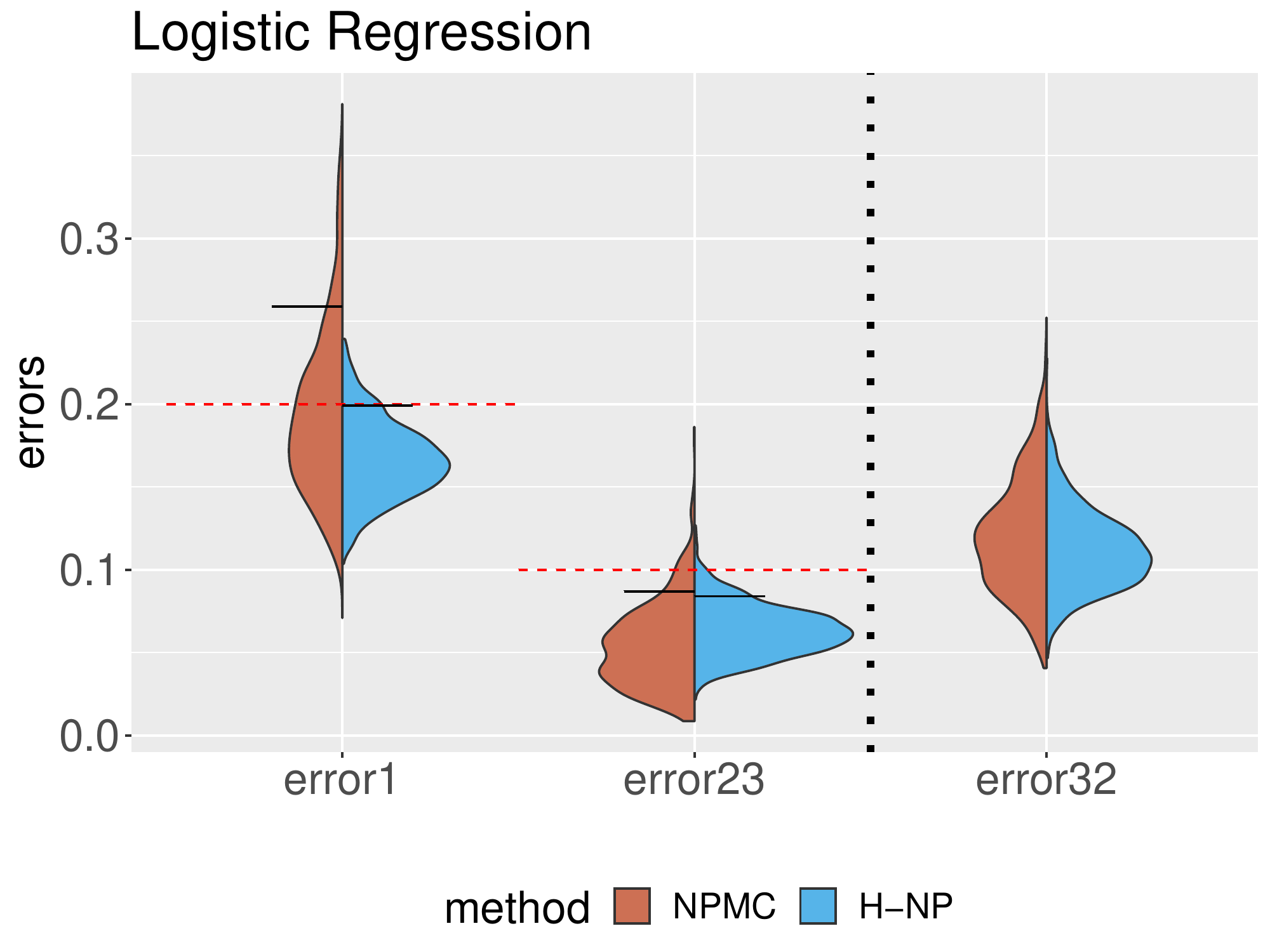}
    \caption{Logistic regression}
     \end{subfigure}
    \begin{subfigure}[b]{0.45\textwidth}
    \centering
    \includegraphics[width=\textwidth]{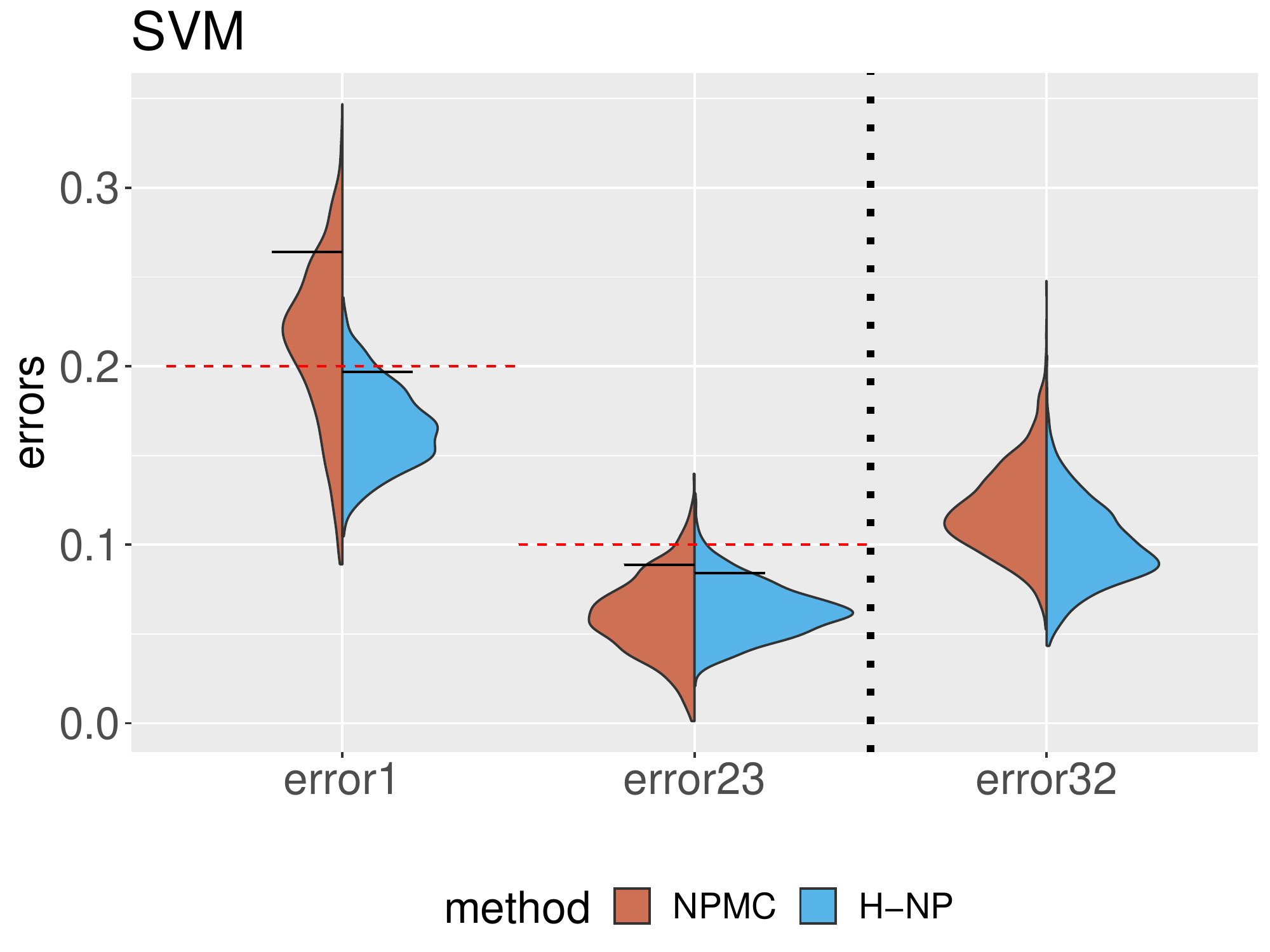}
    \caption{SVM}
     \end{subfigure}
     \caption{ The distributions of approximate errors on the test set under the setting \textbf{T1.1} for NPMC and H-NP approaches. ``error1" ``error23", and ``error32"  correspond to the errors $R_{1\star}(\hatphi)$, $R_{2\star}(\hatphi)$, and $P_3(\hat{Y}= 2)$, respectively.}\label{supp.fig:npmc}
\end{figure}

\subsection{Comparison with ordinal classification}\label{supp.sec:ordinal}

We conduct comparisons with ordinal classification methods, including: 
\begin{itemize}
    \item cumulative link models (CLMs), which are a type of generalized linear models that use cumulative probabilities to characterize ordinal outcomes. We include both the logit and probit link functions \citep{agresti2002categorical}, denoted as CLM (logit/probit) in the results below. We also include the method from \citep{hornung2020ordinal} denoted as ordinalForest, which uses a similar approach using the probit link function but based on random forest;  

    \item
    methods based on decomposing the multi-class classification problem into multiple binary classification problems. We compare with the method FH01 from \citet{frank2001simple}, which combines the results from $\cI - 1$ binary classifiers for classes $\{1, \dots, i\}$ versus classes $\{i + 1, \dots, \cI\}$, and the method oSVM from \citet{cardoso2007learning}, which considers the binary classification problem of classes $\{i - k, \dots, i\}$ versus classes $\{i + 1, \dots,i + 1 +k \}$ for some fixed constant $k$. We use logistic regression and support vector machine (SVM) as the classifiers in FH01 and oSVM respectively;

    \item
    the recent method FWOC by 
\citet{ma2021feature}, which prioritizes important features that are highly correlated with the ordinal structure by incorporating feature weighting in linear discriminant analysis to construct the classifier.
\end{itemize}

The comparison is performed under the simulation setting \textbf{T1.1} in Table~\ref{supp.fig:setting summary} and the results are shown in Figure~\ref{supp.fig:T1.orginal.ordinal}. The averages of approximate {under-classification} errors, $R_{1\star}(\hatphi)$ and $R_{2\star}(\hatphi)$, for all ordinal methods vary widely between 0.1 and 0.4, and these two errors are not guaranteed to be lower than the other errors. Most importantly, these methods do not allow users to specify a control level on the errors of interest. For an easy comparison, we set  $\alpha_i,\delta_i = 0.2$ ($i=1,2$) in our H-NP classifier, and as shown in Figure~\ref{supp.fig:T1.orginal.ordinal}(a), the {under-classification} errors are effectively controlled under the desired levels.

\begin{figure}[!ht]
\centering
  \begin{subfigure}[b]{0.32\textwidth}
    \centering
    \includegraphics[width=\textwidth]{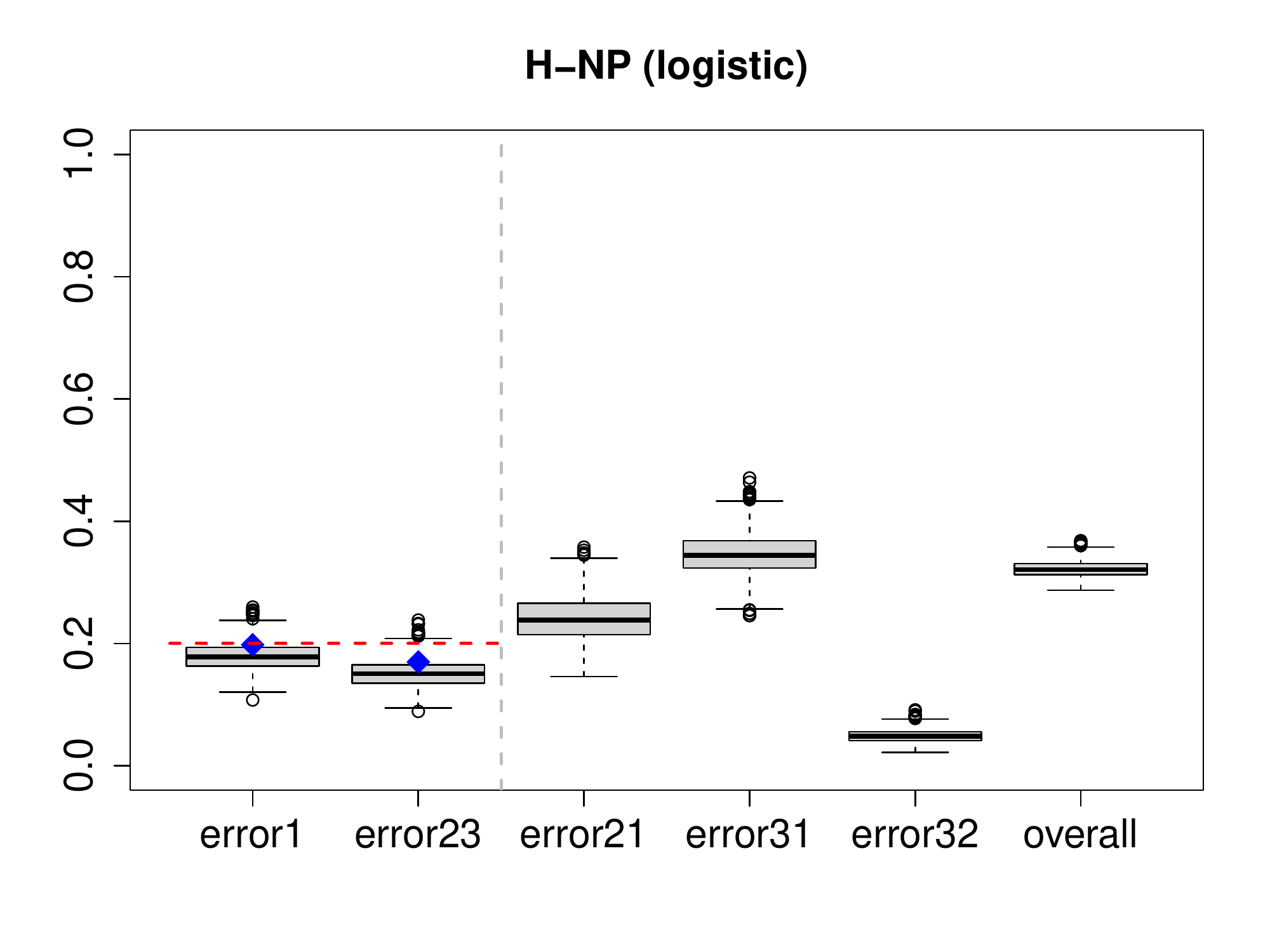}
    \caption{H-NP($\alpha_i, \delta_i = 0.20$)}
     \end{subfigure}
       \begin{subfigure}[b]{0.32\textwidth}
    \centering
    \includegraphics[width=\textwidth]{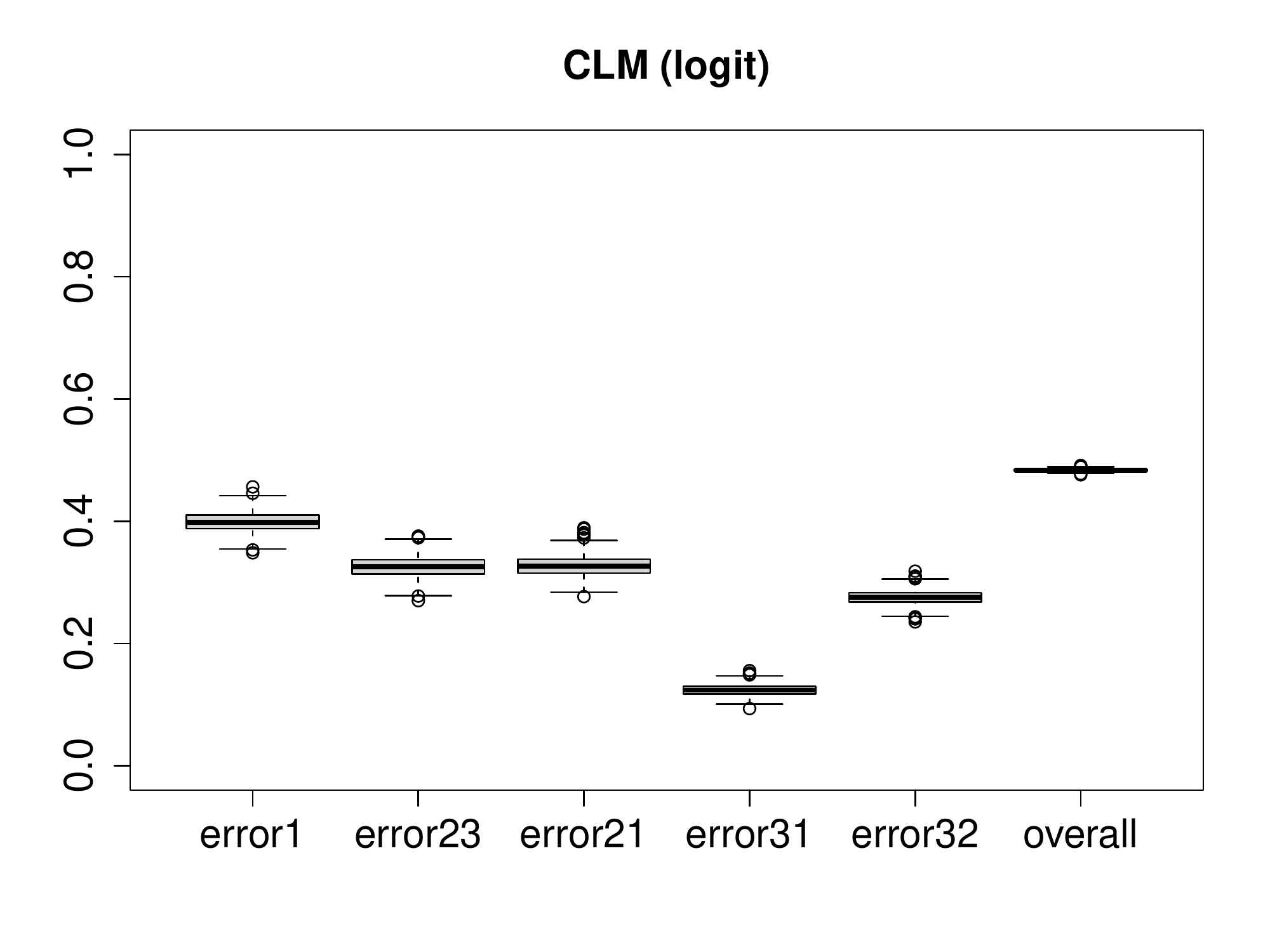}
    \caption{CLM (logit)}
     \end{subfigure}
       \begin{subfigure}[b]{0.32\textwidth}
    \centering
    \includegraphics[width=\textwidth]{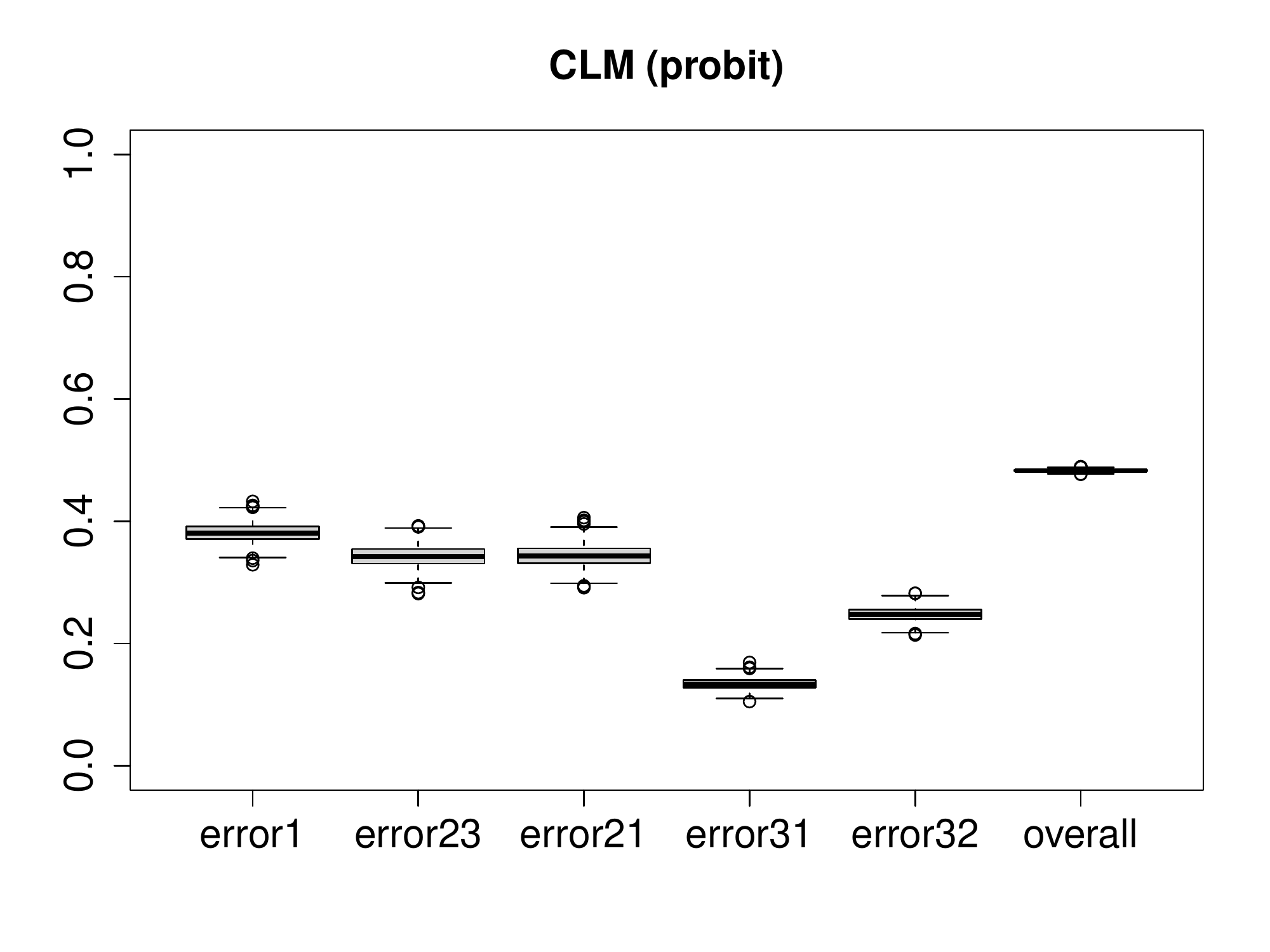}
    \caption{CLM (probit)}
     \end{subfigure}
         \begin{subfigure}[b]{0.32\textwidth}
    \centering
    \includegraphics[width=\textwidth]{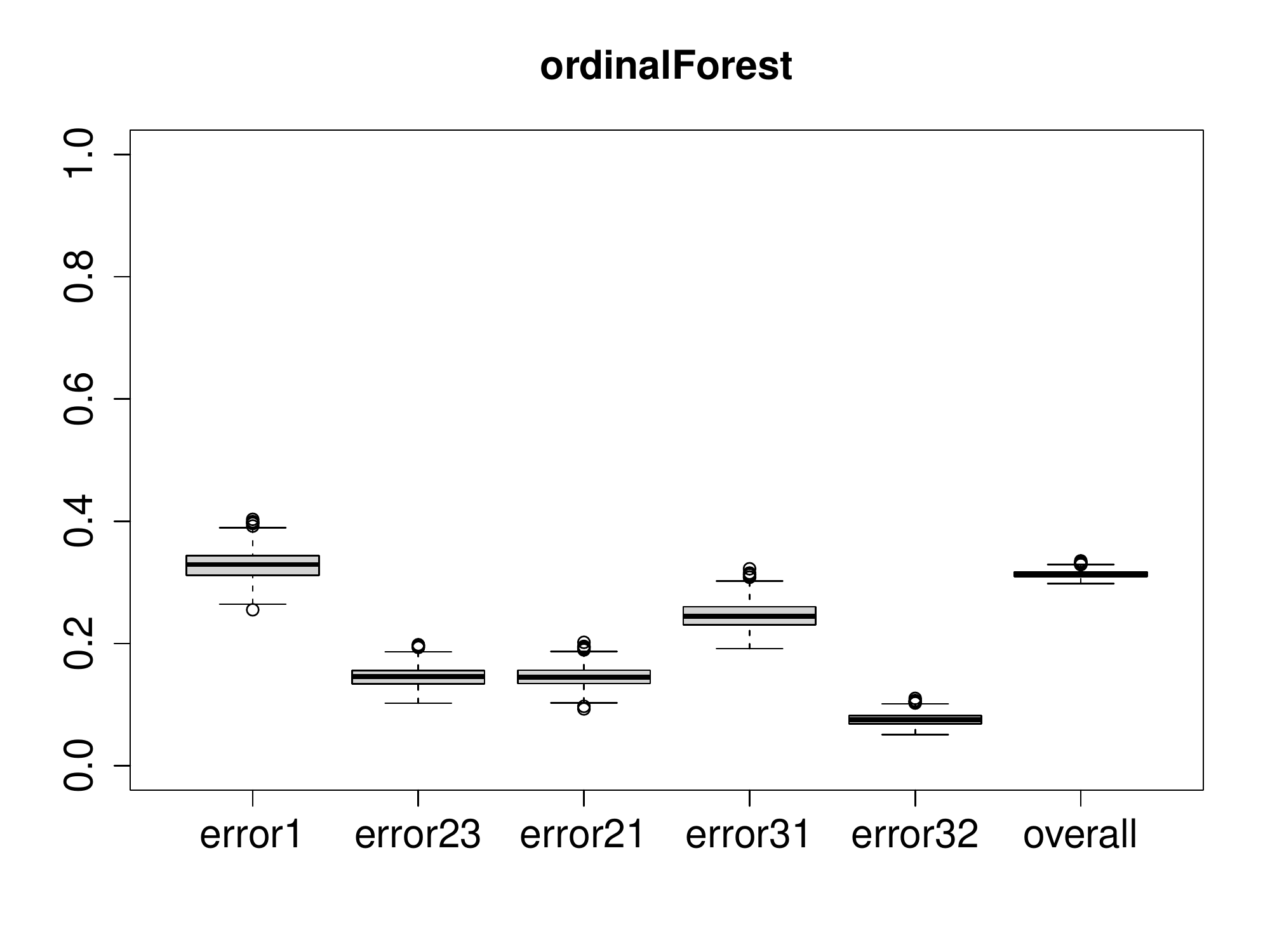}
    \caption{ordinalforest}
     \end{subfigure}
       \begin{subfigure}[b]{0.32\textwidth}
    \centering
    \includegraphics[width=\textwidth]{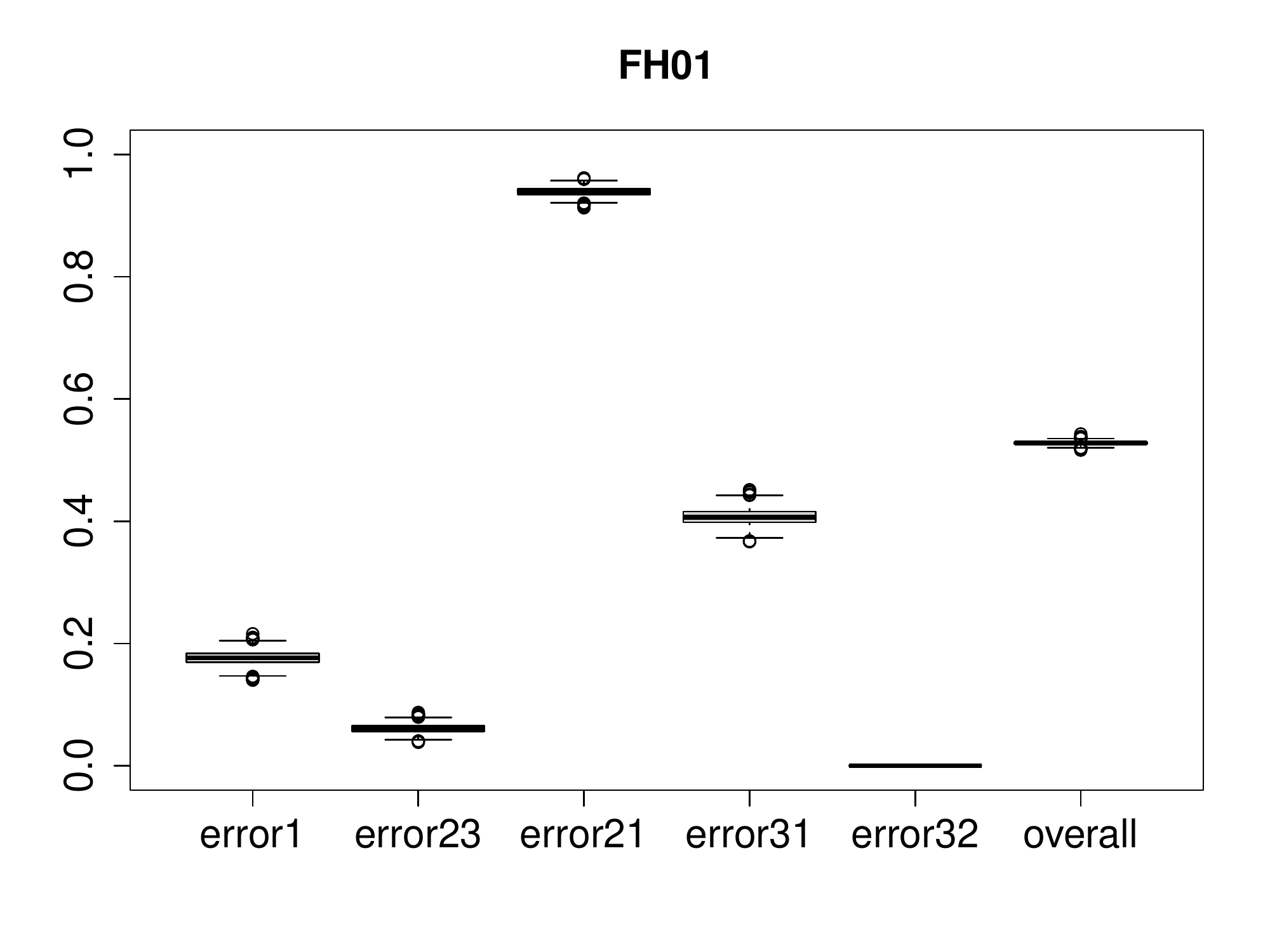}
    \caption{FH01}
     \end{subfigure}
     \begin{subfigure}[b]{0.32\textwidth}
    \centering
    \includegraphics[width=\textwidth]{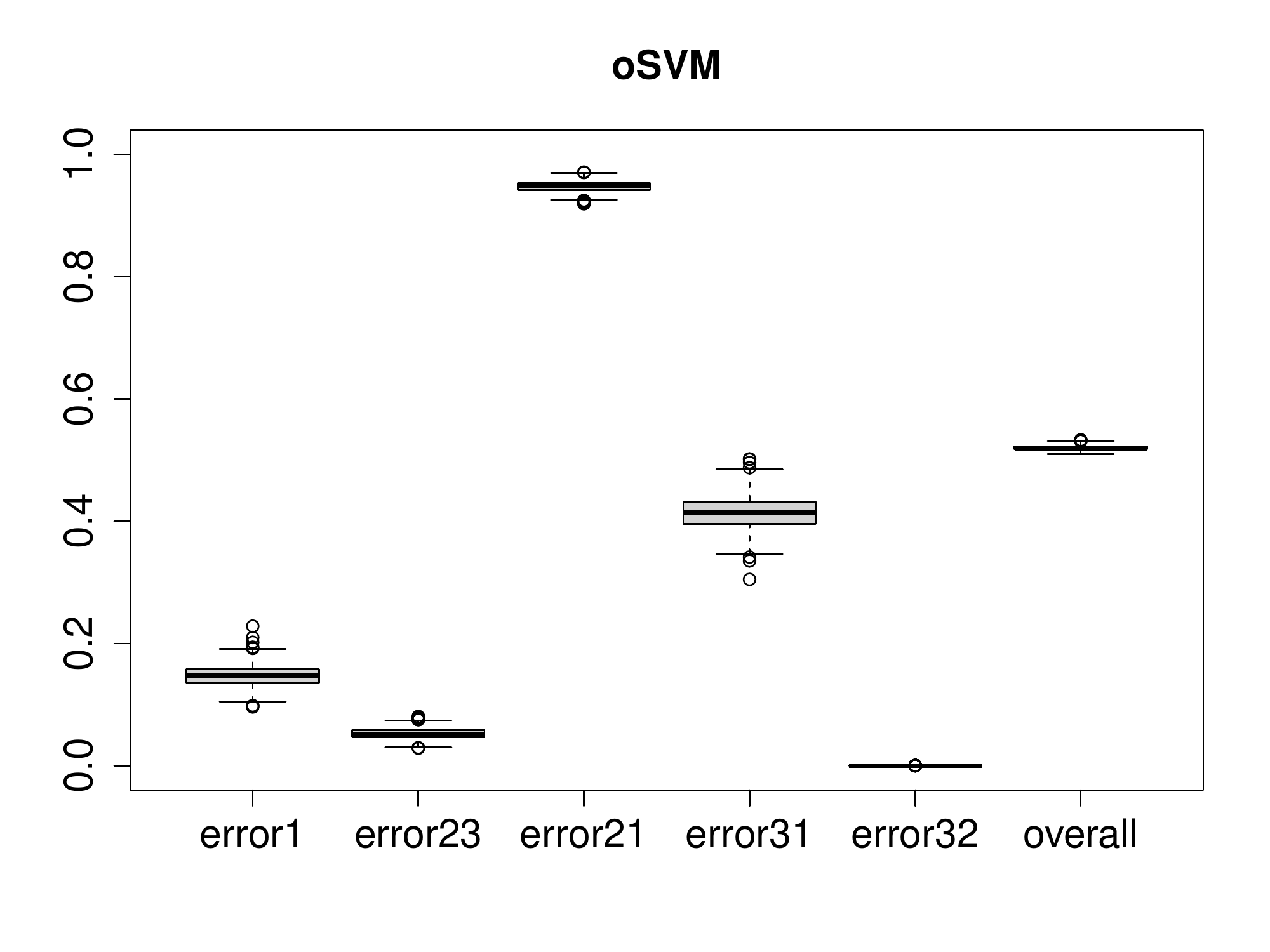}
    \caption{oSVM}
     \end{subfigure}
       \begin{subfigure}[b]{0.32\textwidth}
    \centering
    \includegraphics[width=\textwidth]{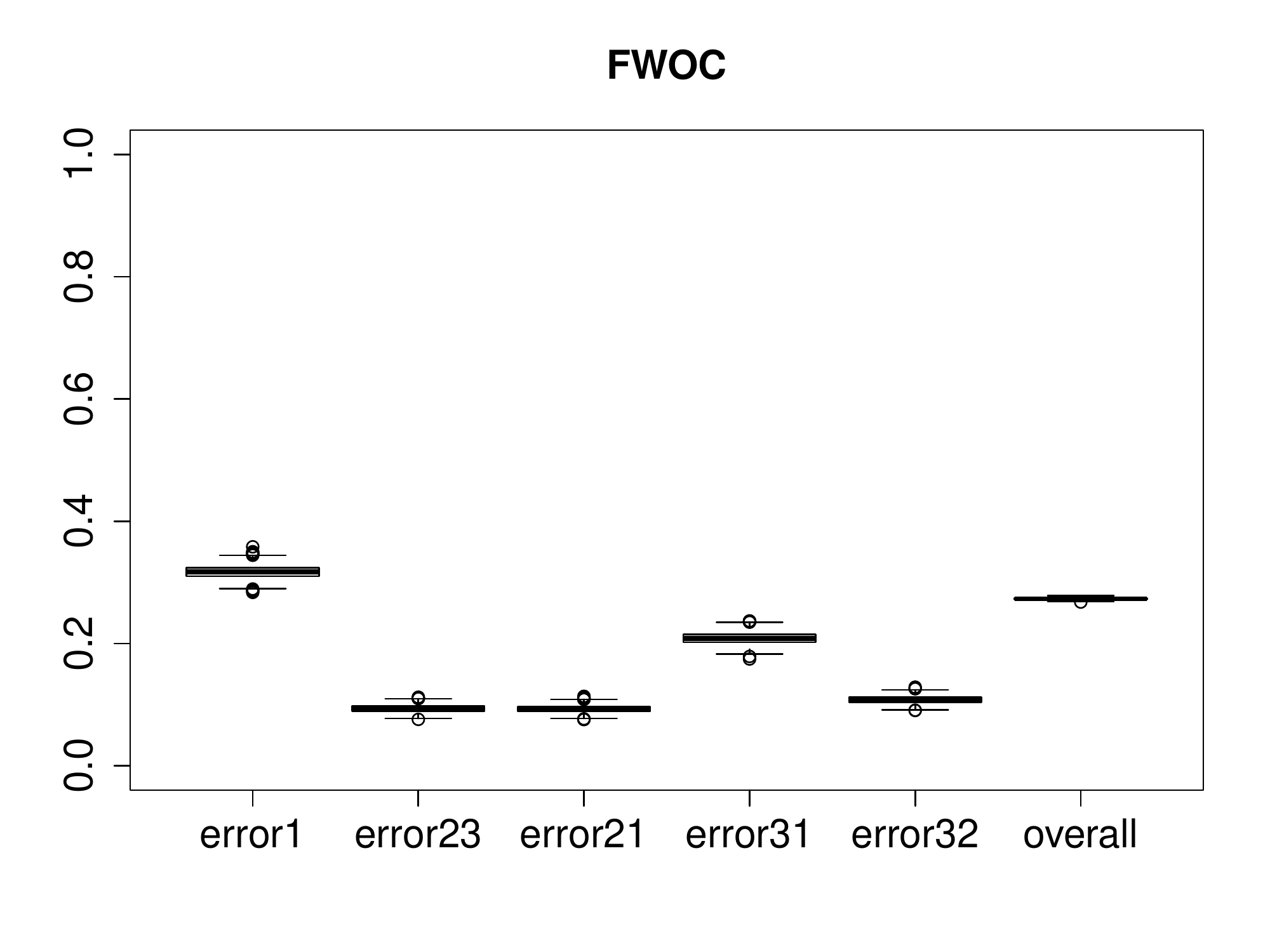}
    \caption{ FWOC}
     \end{subfigure}
     \caption{ The distributions of approximate errors under the setting \textbf{T1.1} for H-NP and ordinal classification methods. (a) H-NP with 
 $\alpha_i, \delta_i = 0.2$ ($i = 1,2$). Ordinal classification methods: (b) and (c) CLM \citet{agresti2002categorical} with logit and probit link;
   (d) method based on random forest \citep{hornung2020ordinal}; (e) method based on logistic regression \citep{frank2001simple}; (f) method based on SVM \citep{cardoso2007learning}; (e) method based on LDA \citep{ma2021feature}. ``error1", ``error23", ``error21", ``error31'', ``error32", ``overall" correspond to $R_{1\star}(\hatphi)$, $R_{2\star}(\hatphi)$, $P_2(\hat{Y}= 1)$, $P_3(\hat{Y}= 1)$, $P_3(\hat{Y}= 2)$ and $P(\hat{Y} \neq Y)$, respectively.}\label{supp.fig:T1.orginal.ordinal}
\end{figure}

\subsection{Non-monotonicity in simulation studies}

Finally, we provide a simulation example to exhibit that the influence of $t_1$ on the weighted sum of errors $R^c(\hatphi)$ is not monotonic. The cause of the  non-monotonicity is explained at the end of Section~2.3. Here, the setting \textbf{T3.1} sets the proportion of each class by setting $N_1 = 1{,}000, N_2 = 200, N_3 = 800$. As a result, the weight for $P_3(\hatY \in \{1, 2\})$  in $R^c(\hatphi)$  increases, and the weight for $P_2(\hatY = 1)$  decreases. We set $\mu_1 = (0, 0)^\top$, $\mu_2 = (-0.5, 0.5)^\top$, and $\mu_3 = (2, 2)^\top$. Under this setting, class-1 and 2 are closer in Euclidean distance compared to class 3. We increase $\alpha_1$ and $\alpha_2$ to $0.1$ to get a wider search region for $t_1$ so that the pattern of $R^c$ is easier to observe. To approximate the true errors $R_{1\star}$, $R_{2\star}$  and  $R^c$ on a test set, we generate $30{,}000$, $6{,}000$ and $2{,}4000$ observations for class $1$, $2$, $3$, respectively. The ratio of the three classes in the test set is the same as $N_1:N_2:N_3$.
Other settings are the same as in the setting \textbf{T1.1}. The results of this new simulation setting are presented in Figure~\ref{supp.fig:sim_error}. We can observe that $R^c$ is not monotonically decreasing, but our procedure still maintains effective controls on the {under-classification} errors.

\begin{figure}[!ht]
\centering
     \begin{subfigure}[b]{0.45\textwidth}
    \centering
    \includegraphics[width=\textwidth]{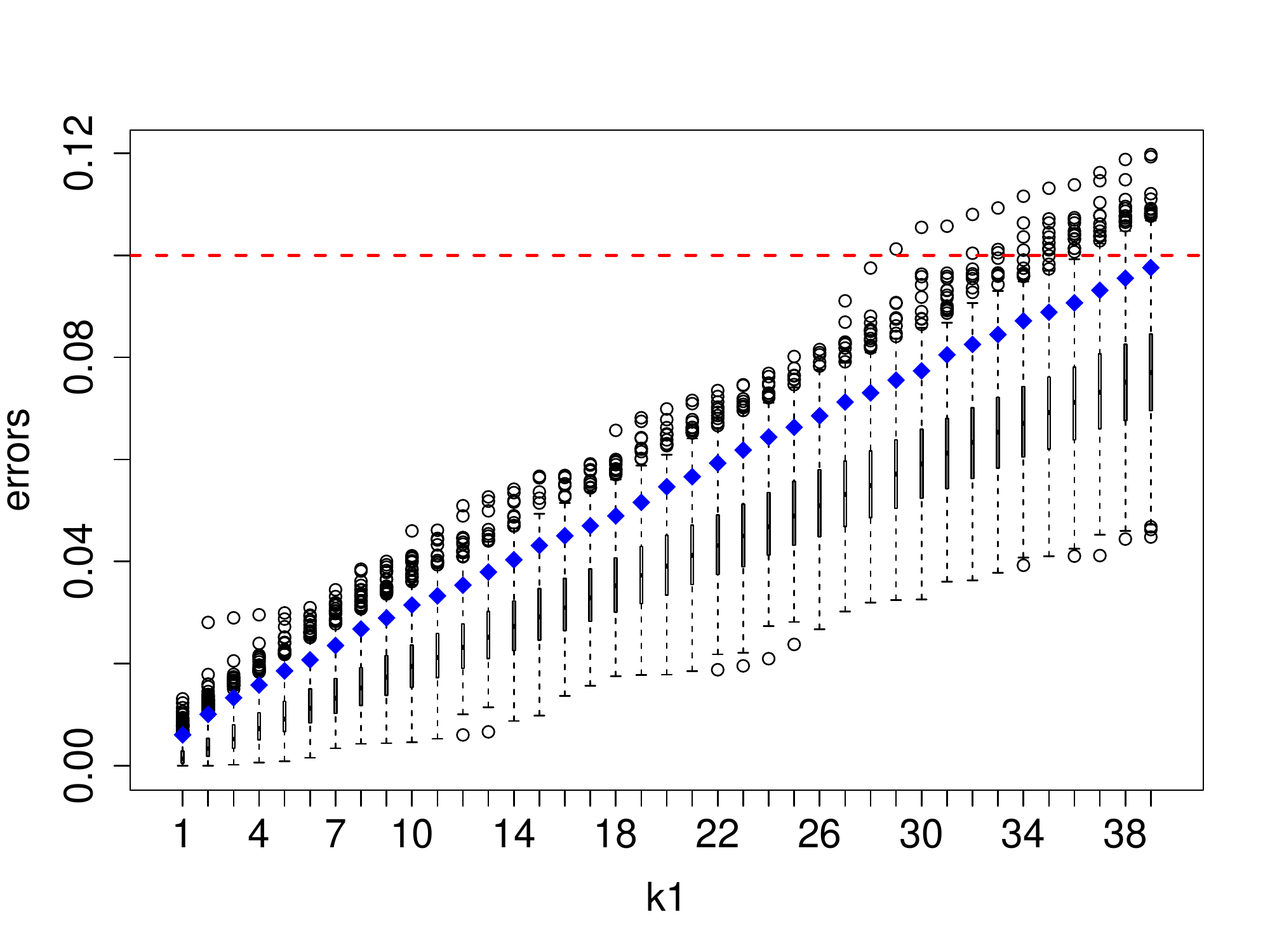}
    \caption{$R_{1\star}$}\label{supp.fig:sim_error_1}
     \end{subfigure}
     \hspace{-0.3cm}
     \begin{subfigure}[b]{0.45\textwidth}
         \centering
    \includegraphics[width=\textwidth]{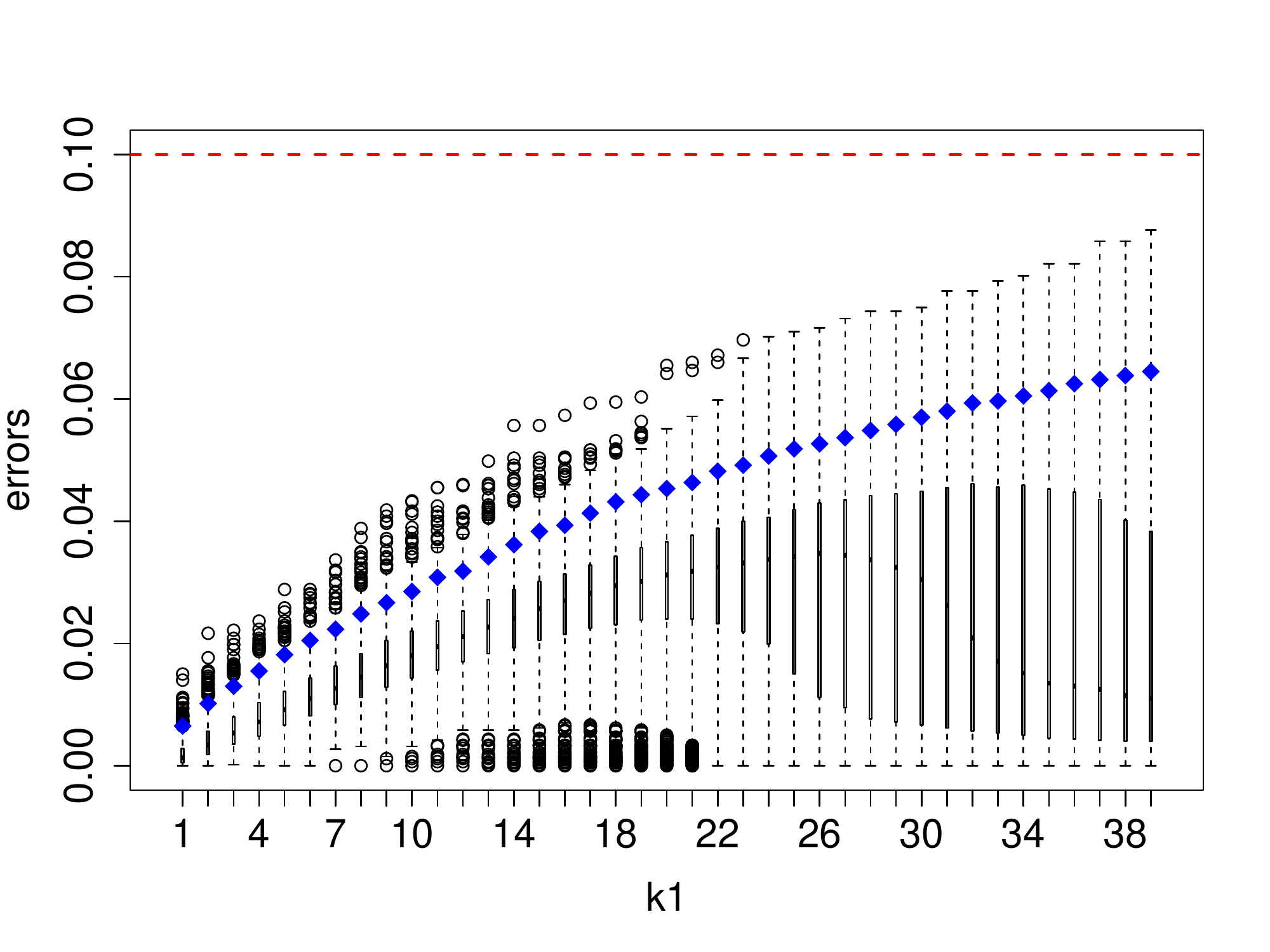} 
   \caption{$R_{2\star}$}\label{supp.fig:sim_error_2}
     \end{subfigure}
     \hspace{-0.3cm}
      \begin{subfigure}[b]{0.5\textwidth}
         \centering
    \includegraphics[width=\textwidth]{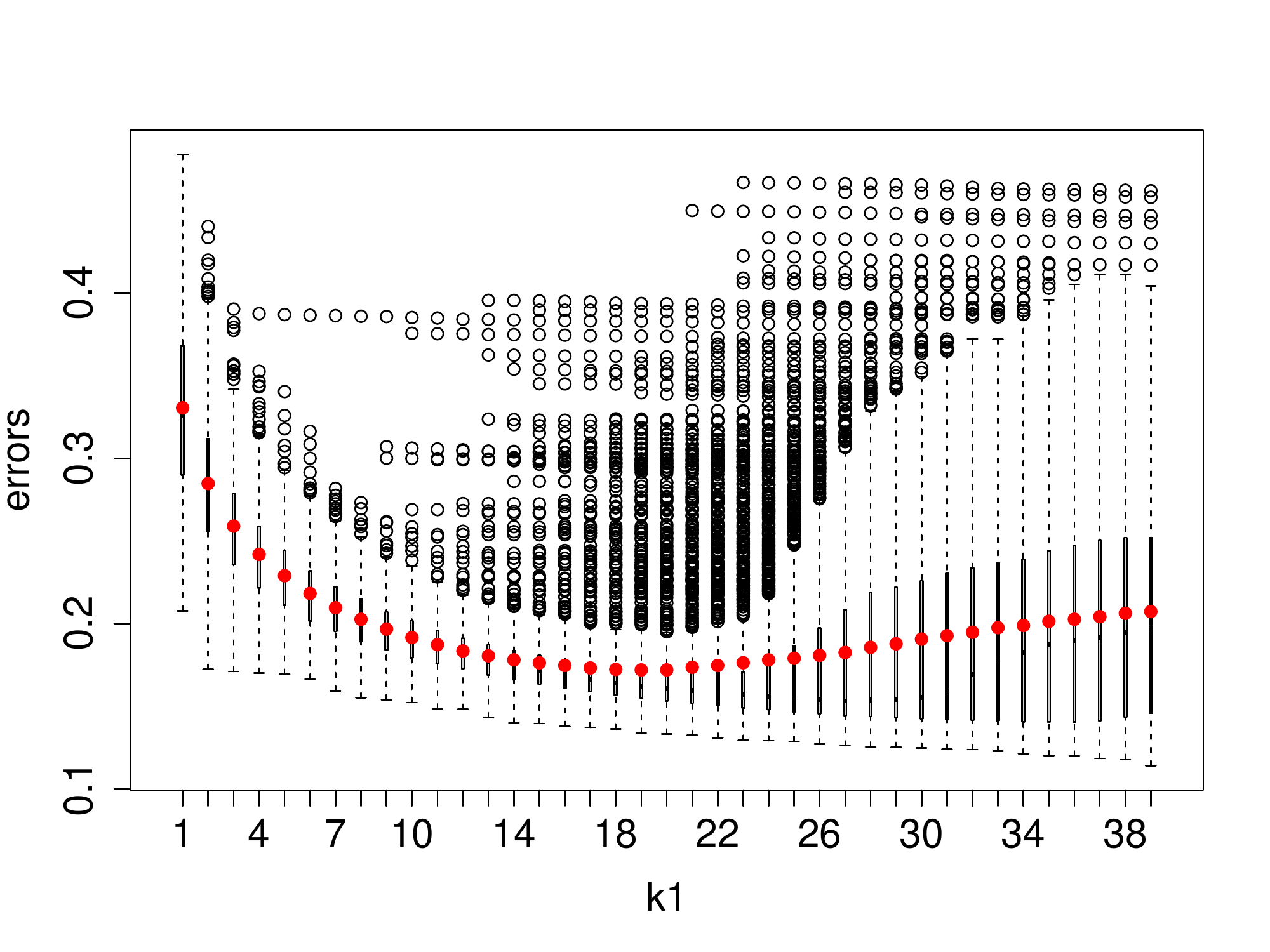}
   \caption{$R^c$}\label{supp.fig:sim_error_c}
     \end{subfigure}
     \caption{The distribution of approximate errors when $t_1$ is the  $k$-th largest element in $\cT_1\cap (-\infty, \ot_1)$. The $95\%$ quantiles  ($\delta_1 = \delta_2 = 0.05$) of $R_{1\star}$ and $R_{2\star}$ are marked by blue diamonds.  The target control levels for  $R_{1\star}(\hatphi)$  and  $R_{2\star}(\hatphi)$ ($\alpha_1 = \alpha_2 = 0.1$) are plotted as red dashed lines. Also, the averages of the $R^c$ are marked by red points in (c).}\label{supp.fig:sim_error}
\end{figure}

\clearpage

\section{Additional results for COVID-19 severity classification}\label{supp.sec:result}

\subsection{Additional table of classification results}

\begin{table}[h!!!]
	\centering
	\footnotesize
	\begin{tabular}{c|c|c|c|c|c|c|c}
		\hline\hline
		\multicolumn{8}{c}{Logistic Regression} \\\hline
		Featurization            &   Paradigm   & Error1 & Error23 & Error21 & Error31 & Error32 & Overall \\ \hline
		\multirow{2}{*}{M.1} & classical   &  0.313  & 0.110 &  0.241&   0.078 &  0.241  & 0.330 \\
		&H-NP&  0.160  & 0.119  & 0.416  & 0.177 &  0.122 &  0.344 \\\hline
		\multirow{2}{*}{M.2} &  classical &   0.466&   0.153   & 0.280   &0.267&   0.370&   0.491\\
		& H-NP         &  0.172  & 0.091  &  0.640 &  0.587 &  0.215 &  0.542\\\hline
		\multirow{2}{*}{M.3} &  classical   &  0.248 &  0.115  & 0.226 &  0.060  & 0.178  & 0.284\\
		& H-NP   &  0.159   &0.129  & 0.336 &  0.108  & 0.134 &  0.303\\\hline
		\multirow{2}{*}{M.4} &  classical &  0.241 &  0.108  & 0.216  & 0.050 &  0.157  & 0.267 \\
		& H-NP     &  0.169 &  0.131 &  0.305  & 0.093 &  0.109  & 0.285\\\hline

		\multicolumn{8}{c}{Random Forest} \\\hline
		Featurization            &     Paradigm              & Error1 & Error23 & Error21  & Error31 & Error32 & Overall  \\ \hline
		\multirow{2}{*}{M.1} &  classical  & 0.262  & 0.049 &  0.257  & 0.091  & 0.228 &  0.293\\
		& H-NP   &  0.177  & 0.121   &0.356  & 0.096  & 0.072  & 0.297\\\hline
		\multirow{2}{*}{M.2} &  classical  &   0.361 &  0.095  & 0.256  & 0.216   &0.452  & 0.426\\
		&  H-NP   & 0.158  & 0.122 &  0.491 &  0.402 &  0.247 &  0.455\\\hline
		\multirow{2}{*}{M.3} &  classical  &0.314  & 0.039 &  0.200  & 0.113  & 0.369 &  0.321\\
		&  H-NP  &  0.178  & 0.116 &  0.386 &  0.148  & 0.126  & 0.332\\\hline 
		\multirow{2}{*}{M.4} &  classical &  0.300 &  0.036  & 0.219  & 0.130  & 0.353  & 0.323\\
		&  H-NP     &  0.162 &  0.120 &  0.407&   0.175 &  0.115  & 0.340\\\hline

		\multicolumn{8}{c}{SVM} \\\hline
		Featurization            &     Paradigm  & Error1 & Error23  & Error21 & Error31 & Error32  & Overall\\ \hline
		\multirow{2}{*}{M.1} &  classical &     0.275 &  0.091 &  0.253  & 0.081  & 0.219  & 0.309 \\
		&  H-NP  &  0.159 &  0.118  & 0.394  & 0.104 &  0.158 &  0.326\\\hline
		\multirow{2}{*}{M.2} &  classical  &  0.437 &  0.164&   0.280  & 0.281  & 0.365 &  0.487\\
		&  H-NP   & 0.175 &  0.110  & 0.613   &0.542 &  0.258 & 0.539\\\hline
		\multirow{2}{*}{M.3} &  classical  &  0.227  & 0.082  & 0.229 &  0.041  & 0.157  & 0.255\\
		&  H-NP  & 0.175 &  0.123  & 0.295 &  0.045  & 0.106   &0.269\\\hline     
		\multirow{2}{*}{M.4} &  classical  &  0.229   &0.077 &  0.222  & 0.037  & 0.160 &  0.251\\
		&  H-NP     &0.172 &  0.119 &  0.288  & 0.040 &  0.104 &  0.261\\\hline

		\hline
	\end{tabular}
	\caption{ \footnotesize The averages of approximate errors. ``error1", ``error23", ``error21", ``error31", ``error32", ``overall" correspond to $R_{1\star}(\hatphi)$, $R_{2\star}(\hatphi)$, $P_2(\hatY= 1)$, $P_3(\hatY= 1)$, $P_3(\hatY= 2)$ and $P(\hatY \neq Y)$, respectively.}\label{table:distribution}
\end{table}

\subsection{A neural network  classifier}\label{supp.sec:nn_classifier}

We apply the Term Frequency - Inverse Document Frequency (TF-IDF) transformation \citep{moussa2018single} to the matrix $(A^{(1)}, \ldots, A^{(N)}) \in \R^{n_g\times (18\times N)}$, and let the vector  $A^{(j)}_{\mathrm{TFIDF}} \in \R^{(18\times n_g)\times 1}$ be the concatenated TF-IDF scores belonging to the $j$-th patient. To extract features, we use PCA to reduce the dimension of $(A^{(1)}_{\mathrm{TFIDF}},\ldots, A^{(N)}_{\mathrm{TFIDF}})^\top\in \R^{N\times(18\times n_g)}$ to $N \times 512$, i.e., for patient $j$, we obtain a feature vector $X_j \in \R^{512}$. Then, we use a fully connected neural network with one hidden layer with 32 nodes, and the results are presented in Supplementary Figure~\ref{supp.fig:result}.

 \begin{figure}[ht]
 \centering
 \begin{minipage}{1\linewidth}
 \centering
		\includegraphics[width=12cm]{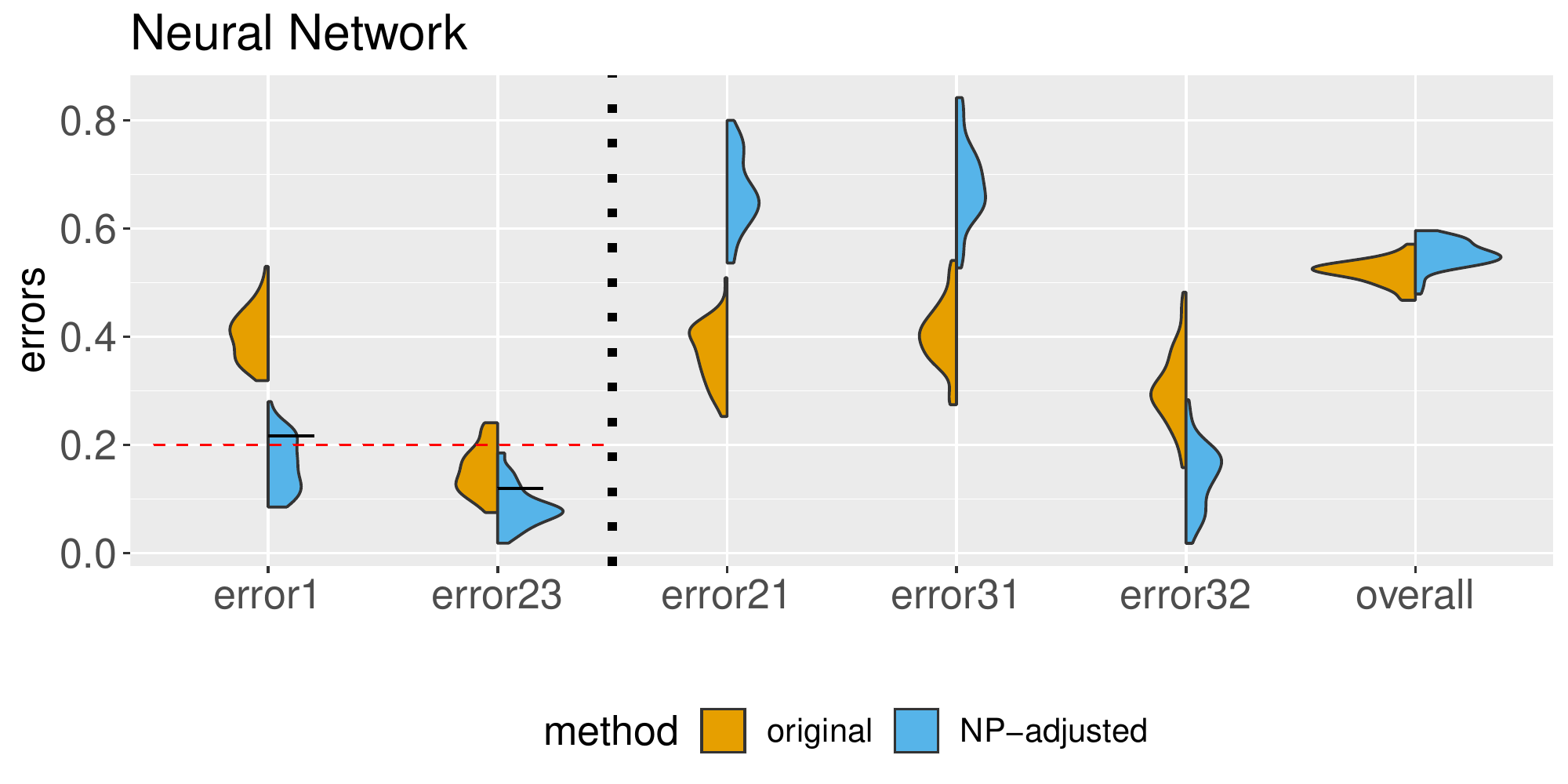}
	\end{minipage}
	
\begin{minipage}{1\linewidth}
\vspace{1cm}
		\centering
			\footnotesize
		\begin{tabular}{c|c|c|c|c|c|c}
\hline
             \multicolumn{7}{c}{Neural Network} \\\hline
               Paradigm               & Error1 & Error23 & Error21   & Error31 & Error32 & Overall \\ \hline
     \multirow{2}{*}{} classical &  0.403   &0.153  & 0.370 &  0.404  & 0.304 &  0.520 \\
         H-NP &  0.164 &  0.087 &  0.666  & 0.683 &  0.141  & 0.552 \\\hline 
		\end{tabular}
	\end{minipage}\hfill
	\caption{The distributions and averages of approximate errors for the neural network approach and the H-NP classifier. ``error1", ``error23", ``error21", ``error32", ``overall" correspond to $R_{1\star}(\hatphi)$, $R_{2\star}(\hatphi)$, $P_2(\hatY= 1)$, $P_3(\hatY= 1)$, $P_3(\hatY= 2)$ and $P(\hatY \neq Y)$, respectively.}\label{supp.fig:result}
 \end{figure}

\subsection{Importance of cell types in severe COVID-19 outcomes}

Additional results ranking importance of cell types in Table \ref{supp.table:coeff_pval}.

\begin{table}[h!!!]
\centering
\footnotesize
\begin{tabular}{c|c|c|c}
\hline
cell type & p-value & cell type & p-value \\
\hline
CD14 Mono &1.38e-05 & RBC    & 3.91e-01\\
NK  &9.95e-04& CD4 T  &4.12e-01\\
CD8 T &9.10e-03& MAIT   & 4.16e-01\\
Neutrophil&9.54e-03&  DN  &  4.26e-01\\
B & 9.49e-02& NKT &6.13e-01\\
gdT & 1.16e-01& MAST& 7.81e-01\\
 HSPC  &2.47e-01 & Plasma &8.61e-01\\
CD16 Mono &3.52e-01& DC &   9.34e-01\\
Platelet &3.73e-01& &\\\hline
\end{tabular}
\caption{Ranking cell types by their
coefficients that quantify the effect of the predictors (cell type expression) on the log odds ratios of the severe category relative to the healthy category in logistic regression with the featurization~M.2.}\label{supp.table:coeff_pval}
\end{table}

\subsection{Gene ontology (GO) enrichment analysis of the ranked gene list}\label{supp.sec:rank_gene}

We demonstrate the utility of genome-wide expression measurements in a classification setting by identifying genes and pathways associated with disease severity.  To achieve this, we employ logistic regression with the featurization~M.4, which has the best overall performance in Table~\ref{table:distribution}. Specifically, we rank the genes based on the coefficients that quantify the effect of the predictors (gene expression) on the log odds ratios of the outcome categories (severe) relative to a reference category (healthy) in a multinomial logistic regression model. A relatively high level of gene expression in the severe group is reflected as a larger positive coefficient, while a relatively high level of gene expression in healthy controls results in a negative coefficient. Thus, ranking the genes by their coefficients allows us to identify genes that are strongly associated with severe COVID-19.

To determine whether the ranked gene list is enriched in certain biological pathways, we use the R package \texttt{fgsea} \citep{korotkevich2016fast} and the Gene Ontology: Biological Process (GOBP) pathway database in the R package \texttt{msigdbr} to perform gene set enrichment analysis. Table~\ref{supp.table:severe healthy_coefficient_GOBP} shows the significant pathways and their corresponding adjusted p-values. Most of the pathways identified in the analysis are directly associated with various aspects of the immune response related to viral infections. Specifically, the leukocyte-mediated cytotoxicity pathway has been supported by biological studies, which have shown that leukocytes such as NK and CD8$^+$ T cells play critical roles in recognizing and targeting viral-infected cells for destruction through cytotoxicity \citep{peng2020broad,liu2020longitudinal}.

\begin{table}[h!!!]
\centering
\footnotesize
\begin{tabular}{c|c}
\hline\hline
                  \multicolumn{2}{c}{Severe vs. Healthy} \\\hline
      pathway  & p.adjust  \\ \hline
response to virus & 2.46e-02 \\   
defense response to symbiont & 2.46e-02 \\                                    
positive regulation of dna binding transcription factor activity & 2.46e-02 \\
regulation of dna binding transcription factor activity & 2.46e-02 \\         
leukocyte mediated cytotoxicity & 2.46e-02 \\                                 
regulation of leukocyte migration & 3.85e-02 \\                               
cell activation involved in immune response & 3.85e-02 \\\hline

\end{tabular}
\caption{The most significant GOBP pathways and their corresponding adjusted p-values, using the ranked gene list from logistic regression and the featurization M.4.}\label{supp.table:severe healthy_coefficient_GOBP}
\end{table}

\subsection{Gene functional modules from co-expression network analysis}\label{supp.sec:express_network}

{The above analysis relies on a ranked feature list from a chosen base classifier and does not directly account for correlation patterns among genes. Next, we remove the need for feature ranking and construct gene co-expression networks by measuring pairwise correlations between gene expression levels across different patient samples.} By identifying clusters or modules of co-expressed genes, these networks can provide valuable insights into the functional relationships between genes and the underlying biological processes. Furthermore, we will relate these functional modules to the H-NP classification result. In this analysis, we adopt the same data splitting strategy as in Section~3.2, where $70\%$ of the data is used for training the H-NP classifier, and the remaining $30\%$ of the data is preserved for testing the classifier and constructing gene co-expression networks and performing gene ontology enrichment analysis. We again employ logistic regression with the featurization~M.4 and the control and tolerance levels remain the same as in Section~3.2.

We begin by analyzing significant biological processes in all severity groups using the held-out data. The featurization~M.4 generates a feature vector with the same dimension as the number of genes for each patient. We then construct a gene co-expression network by computing the correlations between gene pairs across all patients, following the standard workflow in \citet{zhang2005general}. We use the \texttt{TOMdist} function from the R package \texttt{WGCNA} (version 1.71) \citep{langfelder2014tutorials} to compute dissimilarity measures between gene pairs for performing hierarchical clustering, followed by using the \texttt{cutreeDynamic} function from the R package \texttt{dynamicTreeCut} to detect functional modules. Figure~\ref{sup.fig:mc_relation} shows the correlations between the module eigengenes (computed by \texttt{WGCNA}) and the predicted severity labels from the H-NP and classical paradigms. Overall, the H-NP labels have stronger associations with most of the eigengenes, suggesting that they better capture the underlying signals in the data as represented by these functional modules. In particular, module 1 and module 3 have the strongest association, and a closer inspection of their GO terms shows that they are significantly enriched in genes related to B cell activation and immune response to virus
 (Figures~\ref{supp.fig:all_red_top_10}--\ref{supp.fig:all_2_top_10}, obtained using the R package \texttt{clusterProfiler} \citep{wu2021clusterprofiler}).

Furthermore, we study differences in the pathway enrichment between severe and healthy patients. Using the H-NP predicted labels and the test data, we construct gene co-expression matrices for the severe and healthy patients separately, followed by performing the same GO enrichment analysis as above. The top 3 enriched pathways for each module are summarized in Tables~\ref{supp.table:severe_path} (severe) and~\ref{supp.table:healthy_path} (healthy).
The tables show significantly different GO terms between the two groups; there is strong evidence of immune and viral response among the severe patients, while no such evidence is observed in the healthy group. In particular, the GO terms in the severe group are consistent with the literature that suggests severe COVID-19 is caused by an overactive immune response \citep{huang2020clinical,que2022cytokine}, known as a cytokine storm, that leads to inflammation and tissue damage. Understanding the mechanisms underlying this immune response is crucial for developing effective treatments for severe COVID-19. {Finally, comparing the GO terms from the severe patients with their labels given by the H-NP paradigm (Table~\ref{supp.table:severe_path}) and classical paradigm (Tables~\ref{supp.table:severe_path_classical}) respectively, H-NP captures more significantly
enriched modules with specific references to important cell types, including T cells, and
subtypes of T cells. }

\begin{figure}[h!]
 \centering
    \includegraphics[width=12cm]{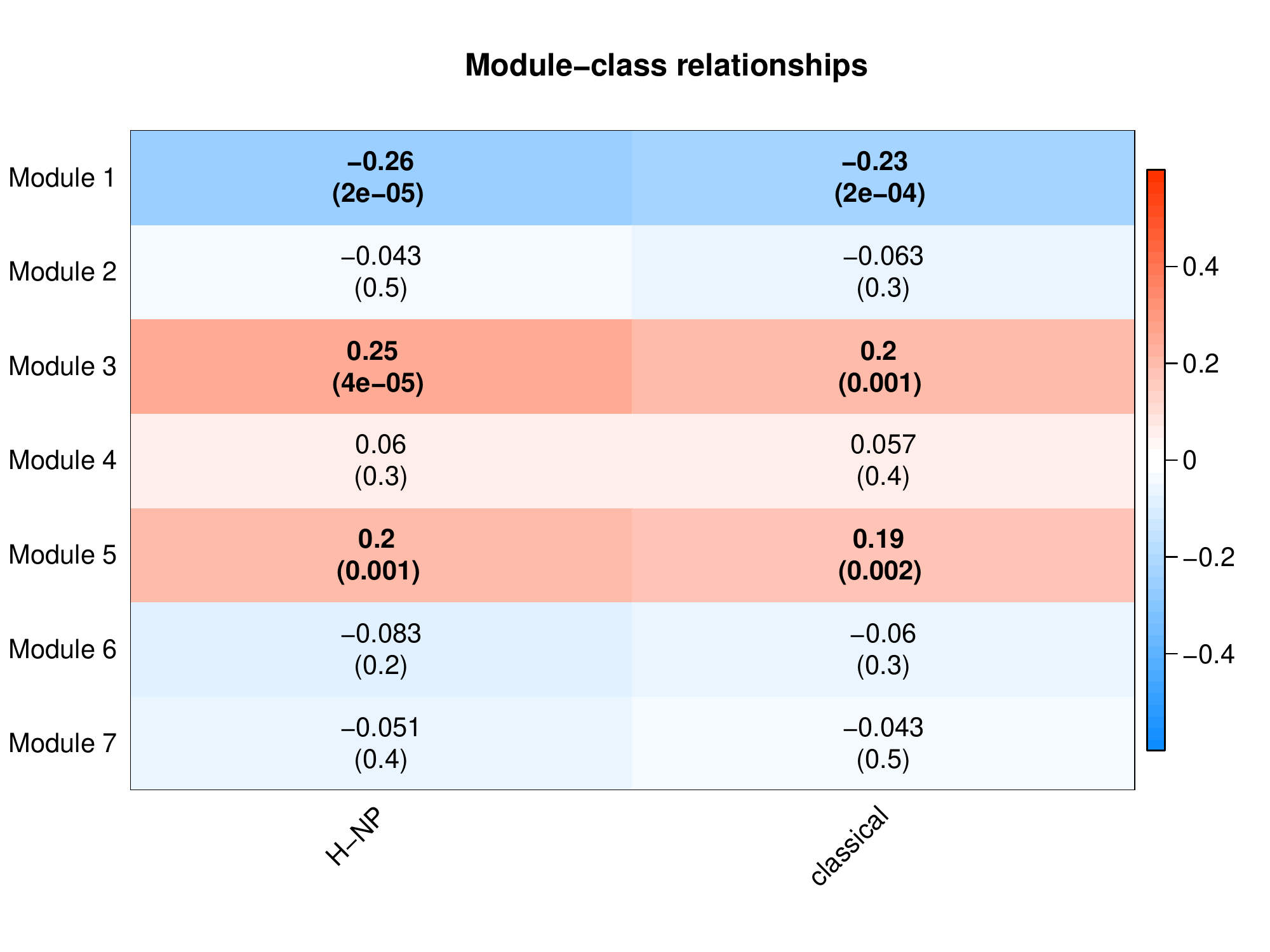}
    \caption{Correlations between consensus module eigengenes and severity labels predicted by the H-NP and classical approaches. The numbers in parentheses indicate p-values. The results with p-values less than 0.05 are in bold.}\label{sup.fig:mc_relation}
\end{figure}

\begin{figure}[h!]
 \centering
    \includegraphics[width=10cm]{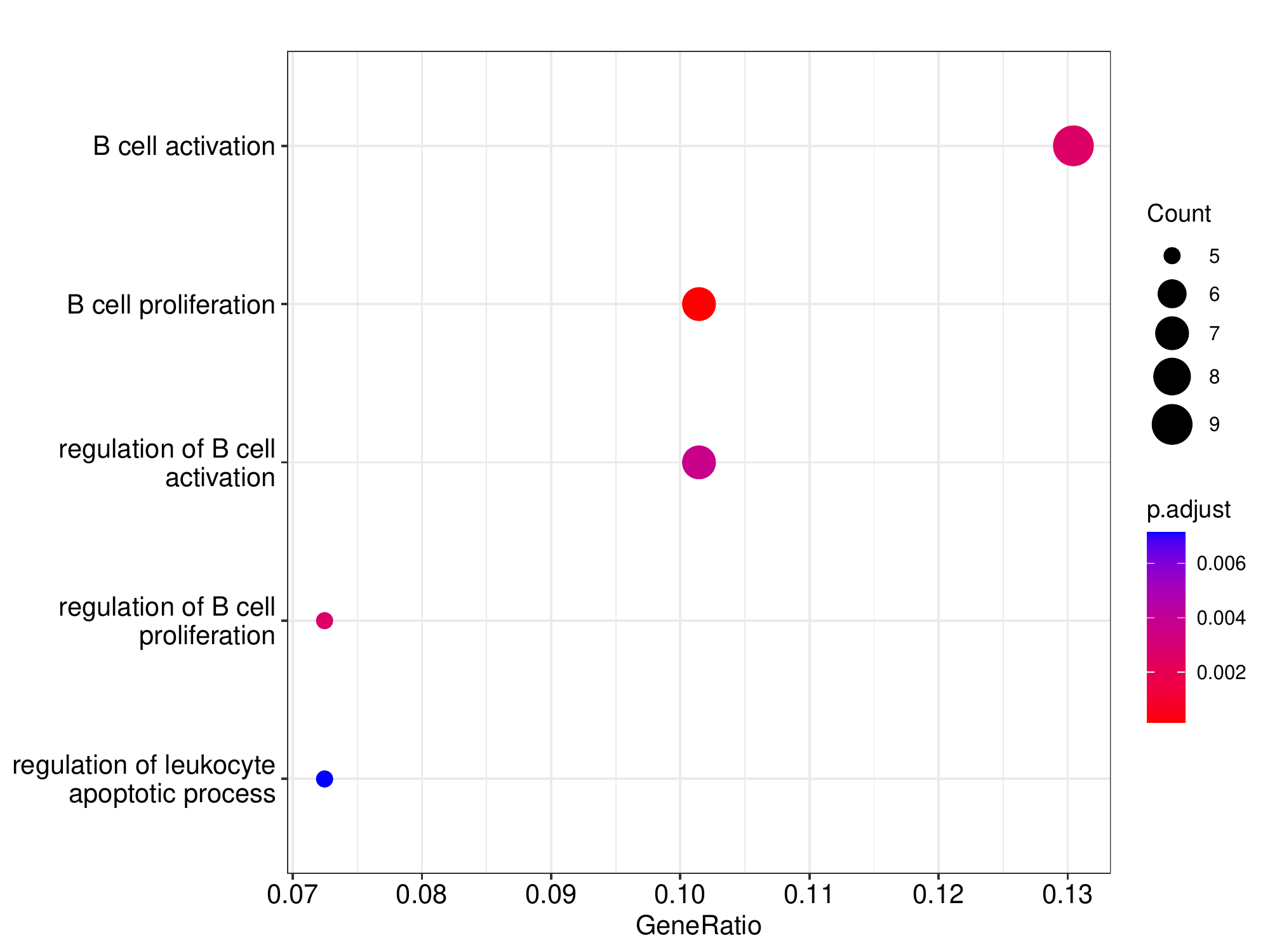}
    \caption{Significant GO terms in Module 1 of Figure~\ref{sup.fig:mc_relation}. The number of genes and significance level of each dot are represented by the dot's size and color, respectively.}\label{supp.fig:all_red_top_10}
\end{figure}

\begin{figure}[h!]
 \centering
    \includegraphics[width=10cm]{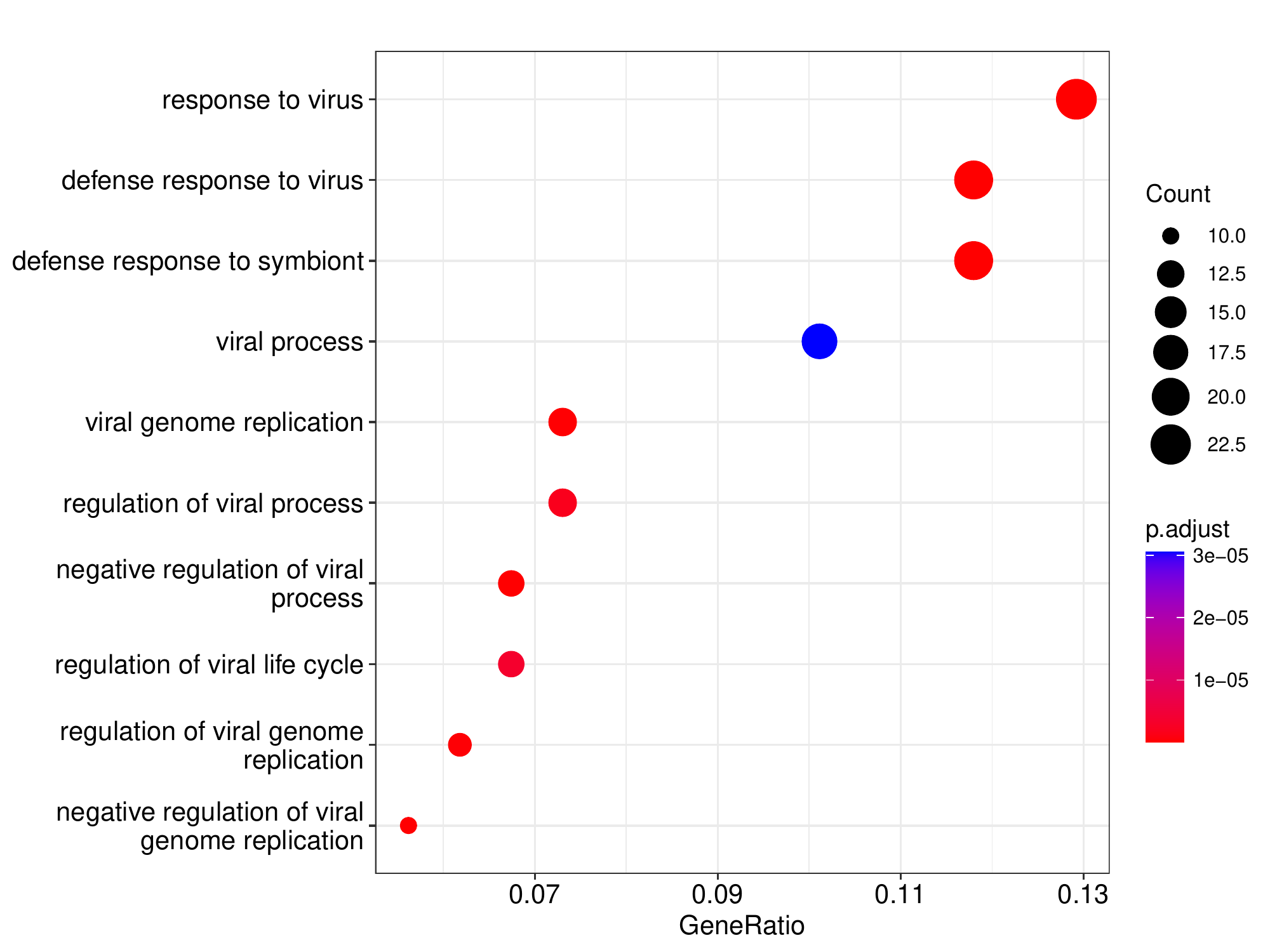}
    \caption{Significant GO terms in Module 3 of Figure~\ref{sup.fig:mc_relation}. The number of genes and significance level of each dot are represented by the dot's size and color, respectively.}\label{supp.fig:all_2_top_10}
\end{figure}

\begin{table}[h!!!]
\centering
\footnotesize
\begin{tabular}{c|c|c}
\hline\hline
                  \multicolumn{3}{c}{H-NP: Severe} \\\hline
       module  & pathway  & p.adjust  \\ \hline

\multirow{3}{*}{ 1 }& cell activation involved in immune response & 1.33e-22 \\
& leukocyte activation involved in immune response & 2.30e-22 \\              
& leukocyte migration & 5.92e-21 \\\hline                                       
\multirow{3}{*}{ 2 }& positive regulation of cytokine production & 8.65e-13 \\
& myeloid cell differentiation & 1.22e-10 \\                                 
& cytoplasmic translation & 1.86e-10 \\ \hline                                
\multirow{3}{*}{ 3 }& defense response to virus & 6.90e-10 \\
& defense response to symbiont & 6.90e-10 \\               
& negative regulation of viral process & 1.54e-09 \\     \hline   
\multirow{3}{*}{ 4 }& positive regulation of inflammatory response & 1.16e-05 \\
& CD4-positive, alpha-beta T cell differentiation & 1.16e-05 \\                
& leukocyte cell-cell adhesion & 1.16e-05 \\  \hline                               
\multirow{3}{*}{ 5 }& B cell activation & 4.84e-06 \\
& B cell differentiation & 1.68e-04 \\              
& B cell proliferation & 2.29e-04 \\ \hline             
\multirow{3}{*}{ 6 }& regulation of T cell apoptotic process & 6.29e-03 \\
& T cell apoptotic process & 2.24e-02 \\                                 
& regulation of lymphocyte apoptotic process & 2.43e-02 \\ \hline

\end{tabular}
\caption{Top 3 GO terms for each module and their corresponding adjusted p-values. The module detection is conducted on the severe group as labeled by H-NP.} \label{supp.table:severe_path}
\end{table}

\begin{table}[h!!!]
\centering
\footnotesize
\begin{tabular}{c|c|c}
\hline\hline
                  \multicolumn{3}{c}{H-NP: Healthy} \\\hline
       module  & pathway & p.adjust  \\ \hline
\multirow{3}{*}{ 1 }& mononuclear cell differentiation & 3.84e-37 \\
& lymphocyte differentiation & 2.41e-34 \\                         
& cell activation involved in immune response & 1.35e-33 \\ \hline      
\multirow{3}{*}{ 2 }& cytoplasmic translation & 2.04e-11 \\
& ribosomal small subunit biogenesis & 2.02e-02 \\        
& ribosomal small subunit assembly & 4.08e-02 \\  \hline        
 3 & platelet activation & 1.01e-02 \\\hline                  
\multirow{3}{*}{ 4 }& histone modification & 1.23e-04 \\
& peptidyl-lysine modification & 1.16e-02 \\           
& lymphocyte apoptotic process & 3.81e-02 \\ \hline
\end{tabular}
\caption{Top 3 GO terms for each module and their corresponding adjusted p-values. The module detection is conducted on the healthy group as labeled by H-NP.} \label{supp.table:healthy_path}
\end{table}

\begin{table}[h!!!]
\centering
\footnotesize
\begin{tabular}{c|c|c}
\hline\hline
                  \multicolumn{3}{c}{Classical: Severe} \\\hline
       module  & pathway  & p.adjust  \\ \hline
\multirow{3}{*}{ 1 }& positive regulation of cytokine production & 3.35e-30 \\
& mononuclear cell differentiation & 1.29e-27 \\                             
& leukocyte cell-cell adhesion & 1.29e-27 \\    \hline                             
\multirow{3}{*}{ 2 }& B cell activation & 1.04e-07 \\
& histone modification & 3.28e-07 \\                
& B cell proliferation & 5.50e-07 \\      \hline            
\multirow{3}{*}{ 3 }& positive regulation of nitric-oxide synthase biosynthetic process & 4.08e-02 \\
& urogenital system development & 4.08e-02 \\                                          & nitric-oxide synthase biosynthetic process & 4.08e-02 \\      \hline                  
\multirow{3}{*}{ 4 }& regulation of epidermal growth factor-activated receptor activity & 2.6e-02 \\
& positive regulation of transforming growth factor beta receptor signaling pathway & 2.6e-02 \\ 
& positive regulation of cellular response to transforming growth factor beta stimulus & 2.6e-02 \\ \hline

\end{tabular}
\caption{{Top 3 GO terms for each module and their corresponding adjusted p-values. The module detection is conducted on the severe groups as labeled by the classical paradigm.} } \label{supp.table:severe_path_classical}
\end{table}

\clearpage

\section{General H-NP umbrella algorithm for $\cI$ classes}
\label{sec:supp_alg}
\begin{algorithm}[htb!]
\caption{General H-NP umbrella algorithm for $\cI$ classes}
\SetKw{KwBy}{by}
\SetKwInOut{Input}{Input}\SetKwInOut{Output}{Output}

\SetAlgoLined

\Input{ Sample: $\cS = \cup_{i \in [\cI]}\cS_i$; levels: $(\alpha_1,\ldots, \alpha_{\cI - 1})$;  tolerances: $(\delta_1,\ldots, \delta_{\cI - 1})$; grid set: $A_1,\ldots, A_{\cI - 2}$ (e.g., $\cT_1,\ldots, \cT_{\cI - 2}$).}

$\hatpi_i = |\cS_i|/|\cS|$;

$\cS_{1s},\cS_{1t}\leftarrow $ Random split $\cS_1$; 

$\cS_{is},\cS_{it}, \cS_{ie}  \leftarrow $ Random split $\cS_i$ for $i = 2, \ldots, \cI - 1$;

$\cS_{\cI s}, \cS_{\cI e}  \leftarrow $ Random split $\cS_\cI$;

$\cS_s = \cup_{i \in [\cI]}\cS_{is}$;

$T_1, \ldots, T_{\cI - 1} \leftarrow \mbox{A  base classification method}(\cS_s)$ ;

$\ot_1 \leftarrow \mbox{UpperBound}(\cS_{1t}, \alpha_1, \delta_1, (T_1), \mbox{NULL}) $;

$\Tilde{R^c} = 1 $;

\For{$t_1 \in A_1 \cap (-\infty, \ot_1]$}{

$\ot_2 \leftarrow \mbox{UpperBound}(\cS_{2t}, \alpha_2, \delta_2, (T_1, T_2), (t_1) )$;

\For{$t_2 \in A_2\cap (-\infty, \ot_2]$}{

$\ot_3 \leftarrow \mbox{UpperBound}(\cS_{3t}, \alpha_3, \delta_3, (T_1, T_2,T_3), (t_1,t_2) )$;

$\cdots \cdots$;

\For{$t_{\cI - 2} \in A_{\cI - 2} \cap (-\infty, \ot_{\cI -2}]$}{

$\ot_{\cI - 1} \leftarrow \mbox{UpperBound}(\cS_{\cI - 1 t}, \alpha_{\cI - 1}, \delta_{\cI - 1}, (T_1, \ldots, T_{\cI - 1}), (t_1,\ldots,t_{\cI - 2}) )$;

$\hatphi \leftarrow$ a classifier with respect to $t_1, \ldots ,t_{\cI -1 }$;

$\Tilde{R^c}_{\mathrm{new}} =\sum^{\cI}_{i = 2} (\hatpi_i/|\cS_{ie}|) \sum_{X \in \cS_{ie}} \1\{ \hatphi(X) < i \}$;

\If{$\Tilde{R^c}_{\mathrm{new}}< \Tilde{R^c}$}{
$\Tilde{R^c} \leftarrow \Tilde{R^c}_{\mathrm{new}}$, $\hatphi^* \leftarrow \hatphi$ }}
}}

\Output{$\hatphi^*$}
\end{algorithm}

For a general $\cI$, we conduct a grid search over dimension $\cI-2$. For $1\leq i <\cI-2$, each grid point $t_i$ is selected from the set $\cT_i \cap (-\infty,\ot_i]$. Thus the grid size is smaller than $C^{\cI - 2}$, where $C$ is typically much smaller than the cardinality of any thresholding set $|\cS_{it}|$, due to restriction of the selection region by imposed by $\ot_i$. 

To visualize how the computational time changes with the number of classes $\cI$, we compare the computational time for training the scoring function and selecting the thresholds (i.e., running our H-NP algorithm) in Figure~\ref{sup.fig:computation_time} for $\cI = 3,6,9,12$. For this experiment, we set $N_i = 1{,}000$ for $i \in [\cI]$, and $\alpha_i = \delta_i = 0.05$ for $i \in [\cI - 1]$. We generate feature vectors in class $i$ as ${(X^i)}^\top \sim N(\mu_i, I)$, where $\mu_i \in \R^{200}$ and the entries are independently drawn from the normal distribution with mean 0 and standard deviation 0.05. For classes 1, \ldots, $\cI - 1$, we randomly select $\sim$10\% observations from each class for threshold selection. For classes 2, \ldots, $\cI$, we randomly select 5\% of observations to compute the empirical errors. The remaining observations are used to compute the scoring function with logistic regression as the base classification method. As expected, the computational times of both processes increase with $\cI$, but overall selecting the thresholds takes a much smaller fraction of time than the training itself.

\begin{figure}[h!]
 \centering
    \includegraphics[width=12cm]{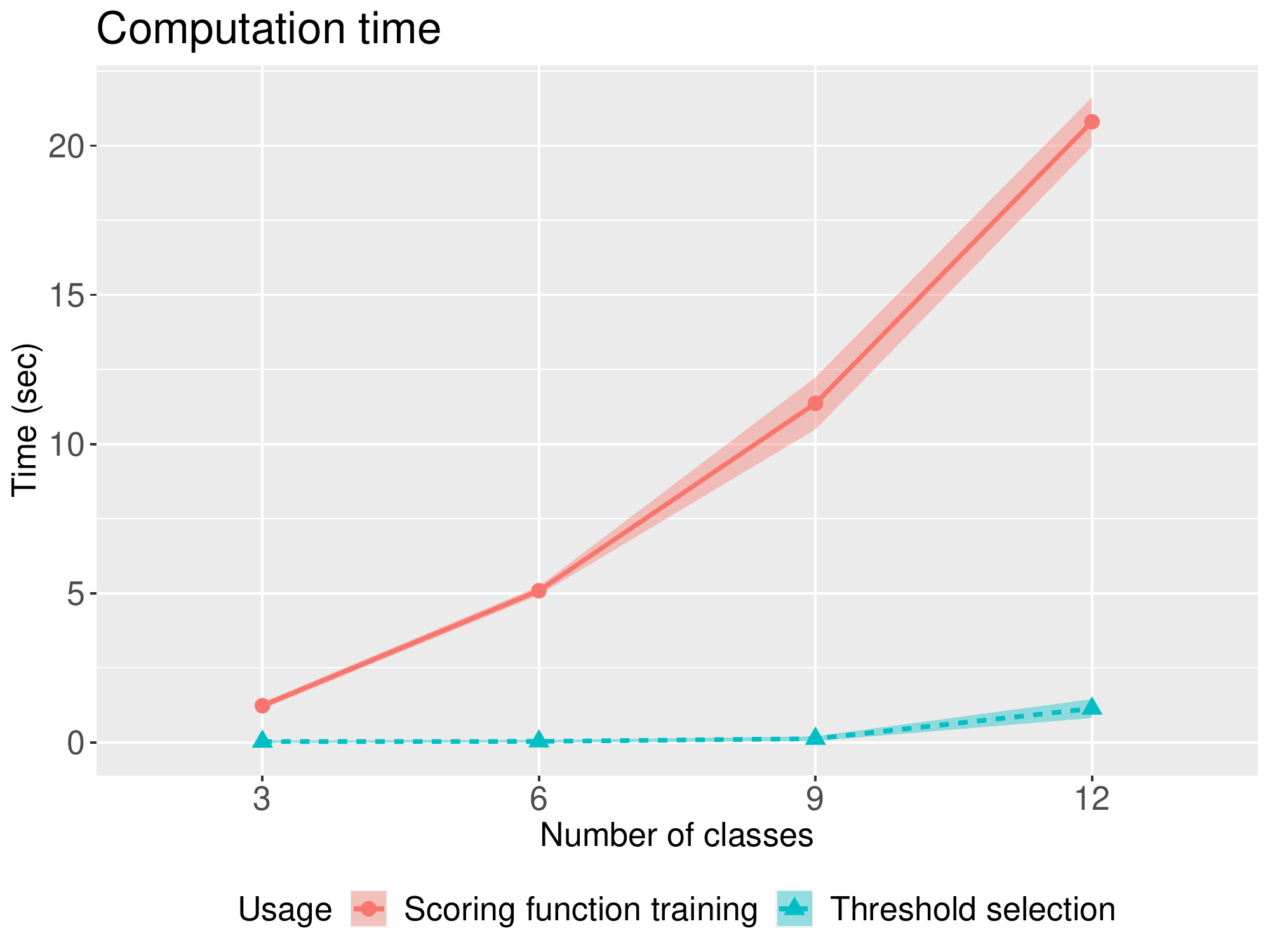}
    \caption{The computational times for training the scoring function and selecting the thresholds using the H-NP algorithm with $\cI = 3,6,9,12$. The points represent the average times, and the shade represents the magnitude of the standard deviation, from 100 repetitions.}\label{sup.fig:computation_time}
\end{figure}

\end{document}